\documentclass[preprint,11pt]{elsarticle}

\usepackage{amsmath,amssymb,amsfonts,amsthm,mathtools}
\usepackage{bm}
\usepackage{booktabs}
\usepackage{array}
\usepackage{graphicx}
\usepackage{placeins}
\usepackage{algorithm}
\usepackage{algorithmicx}
\usepackage{algpseudocode}
\usepackage{enumitem}
\usepackage{microtype}
\usepackage{aliascnt}
\usepackage[hypertexnames=false,hidelinks]{hyperref}
\usepackage[nameinlink,capitalize]{cleveref}
\crefname{appendix}{Appendix}{Appendices}
\crefname{algorithm}{Algorithm}{Algorithms}
\hypersetup{pdfauthor={Jiahao Zhang; Shiheng Zhang; Guang Lin}}
\myfooterfont{\fontsize{9}{11}\selectfont\itshape}

\AtBeginDocument{%
  \setlength{\abovedisplayskip}{6pt plus 2pt minus 2pt}%
  \setlength{\belowdisplayskip}{6pt plus 2pt minus 2pt}%
  \setlength{\abovedisplayshortskip}{0pt plus 2pt}%
  \setlength{\belowdisplayshortskip}{6pt plus 2pt minus 2pt}}

\graphicspath{{figures/}}

\newtheorem{theorem}{Theorem}[section]
\newaliascnt{proposition}{theorem}
\newtheorem{proposition}[proposition]{Proposition}
\aliascntresetthe{proposition}
\crefname{proposition}{Proposition}{Propositions}

\newaliascnt{corollary}{theorem}
\newtheorem{corollary}[corollary]{Corollary}
\aliascntresetthe{corollary}
\crefname{corollary}{Corollary}{Corollaries}
\theoremstyle{definition}
\newtheorem{remark}{Remark}[section]

\newcommand{\R}{\mathbb{R}}
\newcommand{\E}{\mathcal{E}}
\newcommand{\Hcal}{\mathcal{H}}
\newcommand{\diag}{\operatorname{diag}}
\newcommand{\rank}{\operatorname{rank}}
\newcommand{\odotv}{\mathbin{\odot}}
\newcommand{\norm}[1]{\left\lVert#1\right\rVert}

\begin{document}

\begin{frontmatter}

\title{A pullback-corrected scalar auxiliary variable optimizer
  with momentum and adaptive mobility}

\author[label1]{Jiahao Zhang\fnref{equal}} 

\author[label2]{Shiheng Zhang\fnref{equal}}

\fntext[equal]{These authors contributed equally to this work.}

\author[label1,label3]{Guang Lin\corref{cor1}}
\cortext[cor1]{Corresponding author (guanglin@purdue.edu)}

\affiliation[label1]{organization={School of Mechanical Engineering, Purdue University},
            city={West Lafayette, IN 47906},
            country={USA}}

\affiliation[label2]{organization={Department of Mathematics, University of Washington},
            city={Seattle, WA 98195},
            country={USA}}

\affiliation[label3]{organization={Department of Mathematics, Purdue University},
            city={West Lafayette, IN 47906},
            country={USA}}


\begin{abstract}
Objectives in scientific machine learning are often prescribed as a sum of
several terms, such as the residual, boundary, initial, and data losses of a
physics-informed neural network. In the pullback-corrected scalar auxiliary
variable (PB--SAV) method, one scalar tracks the shifted objective while the
component gradients build a positive semidefinite curvature correction of rank
at most the number of components. We carry that correction into an optimizer
with momentum and an adaptive mobility, applying it to the gradient and the
stored momentum in a single implicit solve. A mobility that is nonincreasing
in the Loewner order yields an exact modified energy law, covering Euclidean
and AMSGrad-type choices; the corresponding identity for momentum appended
after the solve carries a cross term of indefinite sign. For a fixed mobility we give a necessary and sufficient
condition for local stability at a stationary point, depending on the Hessian
minus twice the correction, and show that it also gives local geometric
convergence for every scalar relaxation sequence. The implicit solve reduces
to a dense system whose order is the number of components. In the forward
Burgers comparison, four components reduce the mean tail objective by 64.7\%
and the final solution error by 50.2\% relative to one component at the same
learning rate and momentum settings.
\end{abstract}

\begin{keyword}
scalar auxiliary variable \sep energy stability \sep momentum \sep adaptive optimization \sep Gauss--Newton \sep physics-informed neural network \sep operator learning
\end{keyword}

\end{frontmatter}

\section{Introduction}
\label{sec:introduction}

Many scientific machine learning problems have objectives with \(m\) prescribed
components,
\begin{equation}
F(\theta)=\sum_{i=1}^{m}E_i(\theta).
\label{eq:intro-objective}
\end{equation}
For example, physics-informed neural network (PINN) objectives may combine
residual, boundary, initial-condition, and data terms
\cite{raissi2019pinn,karniadakis2021physics}, with regularization added when
needed. In operator learning such as DeepONet \cite{lu2021deeponet} , a data term can likewise be separated from
weight decay. These components can have different scales and gradient
directions. Once their gradients are summed, part of this directional
information may be hidden by aggregation or cancellation. Imbalance between
the residual and boundary terms of a PINN objective, and its effect on
training, motivate the adaptive loss weighting developed for physics-informed
models \cite{wang2021gradientpathologies,wang2022pinnntk,mcclenny2023selfadaptive}.
Those methods rescale the components before they are summed. The method
developed here keeps the sum in \cref{eq:intro-objective} as it stands and
uses the component gradients to build the curvature inside the solve.

Most first-order optimization methods act primarily on the aggregated
gradient. Momentum incorporates information from previous iterations
\cite{polyak1964}, while adaptive methods such as Adam and AMSGrad introduce
coordinatewise scaling \cite{kingma2015adam,reddi2018amsgrad}. Neither
mechanism directly uses the prescribed component structure of the objective.
In an energy-dissipative scheme, momentum adds cross-iteration terms and
adaptive scaling changes the quadratic form in which those terms are measured.

The scalar auxiliary variable (SAV) approach constructs energy-stable schemes
\cite{shen2018sav,shen2019savreview}. Originally developed for dissipative
gradient flows, SAV and its relaxed variants introduce auxiliary variables so
that a modified energy satisfies a discrete dissipation law
\cite{jiang2022rsav,zhang2022gsavrelax}. These ideas have been carried to
finite-dimensional optimization \cite{liu2022,liu2023savopt}, including a
relaxed vector auxiliary variable algorithm for unconstrained problems
\cite{zhang2025rvsavopt, ZHANGop1, ZHANGop2}. Standard SAV methods,
however, primarily depend on the total gradient and therefore do not fully
exploit the decomposition in \cref{eq:intro-objective}.

The pullback-corrected SAV (PB--SAV) family for gradient flows
\cite{zhang2026pbsav} separates energy tracking from the curvature
correction. A single scalar auxiliary variable tracks the total
shifted objective, while the component gradients generate the positive
semidefinite matrix
\[
B_m
=
\sum_{i=1}^{m}
\frac{\nabla E_i(\theta)\nabla E_i(\theta)^{\mathsf T}}
{2\bigl(E_i(\theta)+C_i\bigr)}.
\]
Here each shift \(C_i\) is chosen so that \(E_i(\theta)+C_i>0\). The matrix
\(B_m\) retains the component gradient directions and has rank at most \(m\).
Its Gauss--Newton interpretation and the corresponding approximation of
quadratic Hessians are established in \cite{zhang2026pbsav}. In a direct
Euclidean step, the aggregate SAV correction only rescales the gradient, while the component pullback can also change its direction. The correction therefore uses only component energies and
first derivatives.
\Cref{sec:background} restates that construction. We build on it an
optimizer with momentum and adaptive scaling, and determine how the component
curvature enters its energy law and its local stability condition.

Our main contributions are as follows.
\begin{itemize}[leftmargin=*]

\item \textbf{Component curvature and local stability.}
\Cref{prop:block-compression} characterizes the Gauss--Newton curvature
retained by a vector-valued least-squares component. For fixed mobility,
\cref{thm:local-schur-stability,cor:local-step-interval} give the exact local
stability condition and quantify the change in admissible step sizes relative
to the aggregate correction, including the
case in which the component gradients cancel, so that the aggregate correction
vanishes while the component correction does not.

\item \textbf{Momentum inside the solve.}
The gradient and stored momentum enter one implicit equation.
\Cref{prop:inertial-model} identifies its quadratic model and the excess
incurred by appending momentum after the solve, and
\cref{prop:appended-momentum-defect} shows that the corresponding identity
for the appended construction carries a cross term of indefinite sign. For the momentum update,
\cref{thm:monotone-mobility-dissipation} establishes an exact modified energy
law for positive definite mobilities that are nonincreasing in the Loewner
order, including Euclidean and AMSGrad-type choices.

\item \textbf{Convergence of the relaxed update.}
Under the local stability condition,
\cref{thm:local-relaxed-convergence} proves local geometric convergence near a
nondegenerate minimum for fixed mobility and every scalar relaxation
sequence. For varying mobility,
\cref{thm:conditional-stationarity,cor:best-iterate-stationarity} give
stationarity and an \(O(N^{-1/2})\) best-iterate gradient bound under uniform
mobility and curvature bounds, a positive lower bound on \(q^n/Q^n\), and
relaxation parameters bounded away from one.

\end{itemize}

\Cref{sec:background} introduces SAV and the PB--SAV geometry.
\Cref{sec:methodology} presents the method and analysis; the implementation
in \cref{alg:pbsav} uses a dense solve of order at most \(m\)
(\cref{prop:low-rank-solution}). \Cref{sec:experiments} reports five numerical
experiments and \cref{sec:conclusion} concludes. Longer proofs
appear in \ref{app:convergence-proofs}. \ref{app:directional-coverage} gives a
curvature condition for exact scalar tracking under full relaxation.

\section{Preliminaries}
\label{sec:background}

\subsection{Gradient flow and the scalar auxiliary variable}

Let \(F:\R^d\to\R\) be differentiable and bounded from below. Gradient descent
is the forward Euler discretization of the Euclidean gradient flow
\(\dot\theta=-\nabla F(\theta)\), along which
\(\tfrac{\mathrm d}{\mathrm dt}F=-\norm{\nabla F}^{2}\leq0\). Choose \(C\)
with \(F(\theta)+C>0\) on the states of interest and set
\(Q(\theta)=\sqrt{F(\theta)+C}\) and \(g(\theta)=\nabla F(\theta)\). In the SAV
reformulation for gradient flows \cite{shen2018sav,shen2019savreview}, an
auxiliary scalar \(q\) tracks \(Q(\theta(t))\) and the modified energy
\(q^2\) satisfies
\(\tfrac{\mathrm d}{\mathrm dt}q^2=-\norm{\dot\theta}^{2}\leq0\).
Convergence and error analysis for the resulting schemes
\cite{shen2018convergence} and a related energy-stable variant
\cite{huang2020newsav} are available.

For the discrete formulation, let \(g^n=g(\theta^n)\) and
\(Q^n=Q(\theta^n)\), initialize \(q^0=Q(\theta^0)\), and let \(\eta>0\) be
the step size. The first-order SAV update is
\begin{subequations}
\label{eq:discrete-sav-coupled}
\begin{align}
  \frac{\theta^{n+1}-\theta^n}{\eta}
  &=-\frac{q^{n+1}}{Q^n}g^n,
  \label{eq:discrete-sav-theta}\\
  q^{n+1}-q^n
  &=\frac{(g^n)^{\mathsf T}(\theta^{n+1}-\theta^n)}{2Q^n}.
  \label{eq:discrete-sav-q}
\end{align}
\end{subequations}
Writing \(\Delta\theta^{n+1}=\theta^{n+1}-\theta^n\) and eliminating
\(q^{n+1}\) gives
\begin{equation}
  \left(\eta^{-1}I_d+B_1^n\right)\Delta\theta^{n+1}
  =-\frac{q^n}{Q^n}g^n,
  \qquad
  B_1^n=
  \frac{g^n(g^n)^{\mathsf T}}{2(Q^n)^2}.
  \label{eq:sav-eliminated}
\end{equation}
Here \(I_d\) is the \(d\times d\) identity matrix, and identities of other
sizes carry the corresponding subscript. The correction \(B_1^n\) has rank
one, and \(q^n/Q^n=1\) when the stored scalar matches \(Q(\theta^n)\).

\subsection{PB--SAV for composite objectives}

Consider an additive objective
\begin{equation}
  F(\theta)=\sum_{i=1}^{m}E_i(\theta),
  \qquad
  E_i(\theta)+C_i>0,
  \qquad
  C=\sum_{i=1}^{m}C_i.
  \label{eq:component-decomposition}
\end{equation}
Multiple-SAV methods assign a separate auxiliary variable to each energy
component \cite{cheng2018msav}. PB--SAV instead retains one auxiliary variable
for the total shifted objective and builds the correction from the individual
components \cite{zhang2026pbsav}.

At \(\theta^n\), define
\begin{equation}
  Q_i^n=\sqrt{E_i(\theta^n)+C_i},
  \qquad
  g_i^n=\nabla E_i(\theta^n),
  \qquad
  g^n=\sum_{i=1}^{m}g_i^n,
  \label{eq:component-notation-preview}
\end{equation}
so that \((Q^n)^2=\sum_{i=1}^{m}(Q_i^n)^2\). The component matrix is
\begin{equation}
  V^n=
  \begin{bmatrix}
  \dfrac{g_1^n}{\sqrt{2}Q_1^n}
  &\cdots&
  \dfrac{g_m^n}{\sqrt{2}Q_m^n}
  \end{bmatrix},
  \qquad
  B_m^n=V^n(V^n)^{\mathsf T}.
  \label{eq:pbsav-preview}
\end{equation}

\subsection{Component curvature}
\label{sec:recalled-geometry}

The PB--SAV pullback is a Gauss--Newton matrix for the transformed residuals
\cite{zhang2026pbsav}
\begin{equation}
  \psi_i(\theta)=\sqrt{2(E_i(\theta)+C_i)},
\end{equation}
with \(\psi=(\psi_1,\ldots,\psi_m)^{\mathsf T}\) and
\(F+C=\tfrac12\norm{\psi}^2\). If \(J_\psi(\theta)\) denotes its Jacobian,
differentiation gives
\begin{equation}
  B_m^n=J_\psi(\theta^n)^{\mathsf T}J_\psi(\theta^n).
  \label{eq:background-pullback-gn}
\end{equation}
For scalar least-squares components \(E_i=\tfrac12 r_i^2\) with \(C_i>0\),
this formula becomes
\begin{equation}
  B_m
  =\sum_{i=1}^m
  \frac{r_i^2}{r_i^2+2C_i}\,
  \nabla r_i\nabla r_i^{\mathsf T}.
  \label{eq:residual-pullback-curvature}
\end{equation}
Thus the component pullback approximates the Gauss--Newton matrix
\(G=\sum_i\nabla r_i\nabla r_i^{\mathsf T}\) when the shifts are small
relative to the squared residuals. For twice continuously differentiable
residuals,
\begin{equation}
  \nabla^2F=G+\sum_i r_i\nabla^2r_i,
  \label{eq:least-squares-hessian}
\end{equation}
so the residual-weighted second derivatives determine the difference between
Gauss--Newton and Hessian curvature \cite{bjorck1996least}.

The relation to the aggregate correction follows from a weighted variance
identity \cite{zhang2026pbsav}. For any \(z\in\R^d\), define
\begin{equation}
  y_i^n=\frac{z^{\mathsf T}g_i^n}{(Q_i^n)^2},
  \qquad
  \bar y^n=\frac{\sum_i(Q_i^n)^2y_i^n}{\sum_i(Q_i^n)^2}.
\end{equation}
Then
\begin{equation}
  z^{\mathsf T}(B_m^n-B_1^n)z
  =\frac12\sum_i(Q_i^n)^2(y_i^n-\bar y^n)^2\geq0.
  \label{eq:background-variance-gap}
\end{equation}
Thus \(B_m^n-B_1^n\succeq0\).

\section{PB--SAV optimization with momentum and adaptive mobility}
\label{sec:methodology}

\subsection{Component curvature and the direct PB--SAV step}
\label{sec:common-notation}

Notation follows \cref{sec:background}, with all component quantities
evaluated at \(\theta^n\). Each \(E_i\) is continuously differentiable, and
the shifted component energies are positive along the iterates. Fix
\(\eta>0\) and initialize \(q^0=Q^0\). Define the curvature gap and the
interpolation by
\begin{equation}
  S_m^n=B_m^n-B_1^n,
  \qquad
  B_\alpha^n=B_1^n+\alpha S_m^n,
  \qquad 0\leq\alpha\leq1.
  \label{eq:mixed-pullback}
\end{equation}
At \(\alpha=0\) this is the aggregate correction \(B_1^n\), and at
\(\alpha=1\) the component pullback \(B_m^n\). The variance identity
\cref{eq:background-variance-gap} gives \(S_m^n\succeq0\), so
\(B_\alpha^n\) increases with \(\alpha\) in the Loewner order. We write
\(B_1(\theta)\), \(B_m(\theta)\), and \(B_\alpha(\theta)\) for the same
matrices evaluated at an arbitrary state.

\subsubsection{Curvature retained by a least-squares component}

The scalar residual formula \cref{eq:residual-pullback-curvature} extends to
vector-valued residuals, which arise when a component contains many residual
terms.

\begin{proposition}
\label{prop:block-compression}
At a fixed parameter state, let
\(E_i=\tfrac12\norm{r_i}^2\), where
\(r_i:\R^d\to\R^{k_i}\) is continuously differentiable, and let
\(J_i=Dr_i\), \(C_i>0\), and \(G=\sum_iJ_i^{\mathsf T}J_i\). Then
\begin{equation}
  B_m=\sum_iJ_i^{\mathsf T}
  \frac{r_i r_i^{\mathsf T}}{\norm{r_i}^2+2C_i}J_i,
  \qquad 0\preceq B_m\preceq G.
  \label{eq:block-pullback}
\end{equation}
Define
\begin{equation}
  \widetilde r_i=\begin{bmatrix}r_i\\\sqrt{2C_i}\end{bmatrix},
  \qquad
  \widetilde J_i=\begin{bmatrix}J_i\\0\end{bmatrix},
  \qquad
  u_i=\frac{\widetilde r_i}{\norm{\widetilde r_i}}.
  \label{eq:augmented-block-residual}
\end{equation}
For every \(v\in\R^d\),
\begin{equation}
  v^{\mathsf T}(G-B_m)v
  =\sum_i\norm{(I_{k_i+1}-u_i u_i^{\mathsf T})\widetilde J_i v}^2.
  \label{eq:missed-block-curvature}
\end{equation}
\end{proposition}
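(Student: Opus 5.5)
The plan is to compute $B_m$ directly from its definition \cref{eq:pbsav-preview}, and then read off both the Loewner bounds and the identity \cref{eq:missed-block-curvature} from a single projection computation in the augmented space.

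\textbf{Step 1: the formula for $B_m$.} With $E_i=\tfrac12\norm{r_i}^2$ we have $\nabla E_i=J_i^{\mathsf T}r_i$ and $2(E_i+C_i)=\norm{r_i}^2+2C_i$. Substituting into $B_m=\sum_i \nabla E_i\nabla E_i^{\mathsf T}/(2(E_i+C_i))$ gives
\[
B_m=\sum_i J_i^{\mathsf T}\frac{r_ir_i^{\mathsf T}}{\norm{r_i}^2+2C_i}J_i ,
\]
which is the formula in \cref{eq:block-pullback}. It is also useful to record that $\norm{\widetilde r_i}^2=\norm{r_i}^2+2C_i>0$, so $u_i$ is well defined because $C_i>0$.

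\textbf{Step 2: rewrite both $G$ and $B_m$ in the augmented space.} Since the last row of $\widetilde J_i$ is zero, $\widetilde J_i^{\mathsf T}\widetilde J_i=J_i^{\mathsf T}J_i$, and therefore $G=\sum_i\widetilde J_i^{\mathsf T}\widetilde J_i$. For $B_m$ we need
\[
\widetilde J_i^{\mathsf T}u_iu_i^{\mathsf T}\widetilde J_i=\frac{(\widetilde J_i^{\mathsf T}\widetilde r_i)(\widetilde J_i^{\mathsf T}\widetilde r_i)^{\mathsf T}}{\norm{\widetilde r_i}^2}.
\]
The zero last row also gives $\widetilde J_i^{\mathsf T}\widetilde r_i=J_i^{\mathsf T}r_i$, and the normalization is $\norm{r_i}^2+2C_i$. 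Hence
\[
B_m=\sum_i\widetilde J_i^{\mathsf T}u_iu_i^{\mathsf T}\widetilde J_i .
\]
This is the key observation: appending the constant $\sqrt{2C_i}$ to the residual absorbs the shift into the normalization without changing the Jacobian action.

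\textbf{Step 3: the identity and the bounds.} Subtracting the two expressions gives
\[
G-B_m=\sum_i\widetilde J_i^{\mathsf T}(I-u_iu_i^{\mathsf T})\widetilde J_i .
\]
Because $u_i$ is a unit vector, $P_i=I-u_iu_i^{\mathsf T}$ is an orthogonal projector, so $P_i=P_i^{\mathsf T}P_i$. This yields
\[
v^{\mathsf T}(G-B_m)v=\sum_i\norm{P_i\widetilde J_iv}^2,
\]
which is \cref{eq:missed-block-curvature}. That sum is nonnegative, so $B_m\preceq G$. Finally, $B_m=V V^{\mathsf T}\succeq0$, or equivalently each term in Step 2 is a Gram form $\norm{u_i^{\mathsf T}\widetilde J_iv}^2$.

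I do not expect a real obstacle here. The only step needing care is Step 2, namely verifying that the augmented quantities reproduce exactly the shifted normalization $\norm{r_i}^2+2C_i$ while leaving $\widetilde J_i^{\mathsf T}\widetilde r_i=J_i^{\mathsf T}r_i$ unchanged. Everything after that is the standard projector identity.
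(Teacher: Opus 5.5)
Your proposal is correct and follows the same route as the paper. You substitute $\nabla E_i=J_i^{\mathsf T}r_i$, use $\widetilde J_i^{\mathsf T}\widetilde J_i=J_i^{\mathsf T}J_i$ and the projected term $\widetilde J_i^{\mathsf T}u_iu_i^{\mathsf T}\widetilde J_i$, then subtract and invoke the orthogonal projector $I_{k_i+1}-u_iu_i^{\mathsf T}$; your Step 2 just spells out the checks $\widetilde J_i^{\mathsf T}\widetilde r_i=J_i^{\mathsf T}r_i$ and $\norm{\widetilde r_i}^2=\norm{r_i}^2+2C_i$ that the paper's one-line proof leaves implicit.
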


\begin{proof}
Substitute \(g_i=J_i^{\mathsf T}r_i\) into the definition of \(B_m\).
The matrix \(u_i u_i^{\mathsf T}\) is an orthogonal projector, and
\(\widetilde J_i^{\mathsf T}\widetilde J_i=J_i^{\mathsf T}J_i\).
Subtract the projected term from that identity and evaluate on \(v\).
\end{proof}

\begin{remark}
\label{rem:retained-curvature}
Each least-squares component retains the part of its predicted residual
change parallel to its current residual, carrying the weight
\(\norm{r_i}^2/(\norm{r_i}^2+2C_i)\) of \cref{eq:block-pullback}; the
orthogonal part is discarded. At \(r_i=0\) the contribution vanishes even if
\(J_i^{\mathsf T}J_i\ne0\). For a fixed nonzero residual, the weight tends
to one as \(C_i\to0\) and decreases toward zero as \(C_i\) grows.
\end{remark}

\subsubsection{The direct update}

For an increment \(\Delta\theta^{n+1}\), define
\begin{equation}
  d^{n+1}
  =\frac{(g^n)^{\mathsf T}\Delta\theta^{n+1}}{2Q^n}.
  \label{eq:common-d}
\end{equation}
The scalar and pullback increments satisfy, with
\(\norm{x}_A^2=x^{\mathsf T}Ax\) here and below (a seminorm when the
symmetric positive semidefinite \(A\) is singular),
\begin{subequations}
\label{eq:common-identities}
\begin{align}
  (q^n+d^{n+1})^2-(q^n)^2
  &=2q^nd^{n+1}+(d^{n+1})^2,
  \label{eq:common-scalar-identity}\\
  \norm{\Delta\theta^{n+1}}_{B_\alpha^n}^2
  &=2(d^{n+1})^2
  +\alpha\norm{\Delta\theta^{n+1}}_{S_m^n}^2.
  \label{eq:common-pullback-identity}
\end{align}
\end{subequations}

The direct PB--SAV update \cite{zhang2026pbsav} is
\begin{subequations}
\label{eq:direct-family}
\begin{align}
  \left(\eta^{-1}I_d+B_\alpha^n\right)\Delta\theta^{n+1}
  &=-\frac{q^n}{Q^n}g^n,
  \label{eq:base-pbsav-step}\\
  \theta^{n+1}&=\theta^n+\Delta\theta^{n+1},
  \qquad
  q^{n+1}=q^n+d^{n+1}.
  \label{eq:base-pbsav-state}
\end{align}
\end{subequations}

\begin{theorem}
\label{thm:base-dissipation}
The update \cref{eq:direct-family} satisfies
\begin{equation}
  (q^n)^2-(q^{n+1})^2
  =\frac{\norm{\Delta\theta^{n+1}}^2}{\eta}
  +(d^{n+1})^2
  +\alpha\norm{\Delta\theta^{n+1}}_{S_m^n}^2
  \geq0.
\label{eq:base-dissipation}
\end{equation}
\end{theorem}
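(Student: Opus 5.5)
The plan is to test the implicit equation \cref{eq:base-pbsav-step} against the increment \(\Delta\theta^{n+1}\) itself, and then rewrite each resulting term with the identities \cref{eq:common-identities}.

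Taking the inner product of \cref{eq:base-pbsav-step} with \(\Delta\theta^{n+1}\) gives
\[
  \frac{\norm{\Delta\theta^{n+1}}^2}{\eta}
  +\norm{\Delta\theta^{n+1}}_{B_\alpha^n}^2
  =-\frac{q^n}{Q^n}(g^n)^{\mathsf T}\Delta\theta^{n+1}
  =-2q^nd^{n+1},
\]
where the last equality is the definition \cref{eq:common-d}. The scalar identity \cref{eq:common-scalar-identity} with \(q^{n+1}=q^n+d^{n+1}\) then gives
\[
  (q^n)^2-(q^{n+1})^2=-2q^nd^{n+1}-(d^{n+1})^2
  =\frac{\norm{\Delta\theta^{n+1}}^2}{\eta}
  +\norm{\Delta\theta^{n+1}}_{B_\alpha^n}^2-(d^{n+1})^2 .
\]
Substituting the pullback identity \cref{eq:common-pullback-identity} replaces \(\norm{\Delta\theta^{n+1}}_{B_\alpha^n}^2\) by \(2(d^{n+1})^2+\alpha\norm{\Delta\theta^{n+1}}_{S_m^n}^2\). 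One copy of \((d^{n+1})^2\) cancels, and the result is exactly \cref{eq:base-dissipation}.

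The one step worth checking is \cref{eq:common-pullback-identity}. Since \(B_\alpha^n=B_1^n+\alpha S_m^n\), it suffices that
\[
  z^{\mathsf T}B_1^nz=\frac{\bigl((g^n)^{\mathsf T}z\bigr)^2}{2(Q^n)^2}=2(d^{n+1})^2
\]
for \(z=\Delta\theta^{n+1}\), which follows directly from the rank-one form in \cref{eq:sav-eliminated}.

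Nonnegativity of the right-hand side of \cref{eq:base-dissipation} follows because \(\eta>0\), \(\alpha\ge0\), and \(S_m^n\succeq0\) by the variance identity \cref{eq:background-variance-gap}. No step is a genuine obstacle; the only care needed is the sign bookkeeping in the scalar identity.
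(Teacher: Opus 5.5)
Your proof is correct and matches the paper's argument: take the inner product of \cref{eq:base-pbsav-step} with \(\Delta\theta^{n+1}\), substitute both identities in \cref{eq:common-identities}, and get the sign from \(S_m^n\succeq0\). You also check \cref{eq:common-pullback-identity} explicitly from the rank-one form of \(B_1^n\), which the paper leaves implicit, and that check is correct.
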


\begin{proof}
Take the inner product of \cref{eq:base-pbsav-step} with
\(\Delta\theta^{n+1}\).  Substitute the two identities in
\cref{eq:common-identities}, and move the change in scalar energy to the other
side. Nonnegativity follows from \(S_m^n\succeq0\).
\end{proof}

\subsection{Momentum placement and mobility}
\label{sec:monotone-mobility}

The direct update stores the scalar \(q^n\) but no previous increment. Momentum
carries previous increments into the current step. Euclidean heavy-ball momentum \cite{polyak1964} retains a fraction
\(\beta\) of the previous parameter increment \(s^n\),
\begin{equation}
  s^{n+1}=\beta s^n-\eta g^n,\qquad
  \theta^{n+1}=\theta^n+s^{n+1},\qquad 0\leq\beta<1,
  \label{eq:heavy-ball-step}
\end{equation}
and reduces to gradient descent at \(\beta=0\).

\subsubsection{Placement of momentum}

Consider first computing the direct PB--SAV increment and then appending
heavy-ball memory:
\begin{subequations}
\label{eq:appended-momentum}
\begin{align}
  \left(\eta^{-1}I_d+B_\alpha^n\right)\delta_{\mathrm{PB}}^{n+1}
  &=-\frac{q^n}{Q^n}g^n,
  \label{eq:appended-base-step}\\
  s^{n+1}&=\delta_{\mathrm{PB}}^{n+1}+\beta s^n,\\
  \theta^{n+1}&=\theta^n+s^{n+1},
  \qquad q^{n+1}=q^n+d^{n+1},
\end{align}
\end{subequations}
where \(d^{n+1}=(g^n)^{\mathsf T}s^{n+1}/(2Q^n)\). The scalar update uses
the total increment. Define the Euclidean modified energy
\begin{equation}
  \Hcal_{\mathrm{app}}^n=(q^n)^2+\frac{\norm{s^n}^2}{2\eta}.
  \label{eq:appended-energy}
\end{equation}

\begin{proposition}
\label{prop:appended-momentum-defect}
The update \cref{eq:appended-momentum} satisfies
\begin{equation}
\begin{aligned}
  \Hcal_{\mathrm{app}}^n-\Hcal_{\mathrm{app}}^{n+1}
  ={}&\frac{\norm{s^{n+1}-\beta s^n}^2}{2\eta}
  +\frac{1-\beta^2}{2\eta}\norm{s^n}^2\\
  &+(d^{n+1})^2+\alpha\norm{s^{n+1}}_{S_m^n}^2
  -\beta(s^{n+1})^{\mathsf T}B_\alpha^n s^n.
\end{aligned}
\label{eq:appended-momentum-defect}
\end{equation}
The last term has no fixed sign.
\end{proposition}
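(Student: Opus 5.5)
The identity is pure algebra: I would combine the solve \cref{eq:appended-base-step}, tested against the total increment \(s^{n+1}\), with the scalar identity \cref{eq:common-scalar-identity} and the pullback identity \cref{eq:common-pullback-identity}. The argument mirrors the proof of \cref{thm:base-dissipation}. The only new ingredient is that the solved increment \(\delta_{\mathrm{PB}}^{n+1}=s^{n+1}-\beta s^n\) differs from the increment \(s^{n+1}\) that defines \(d^{n+1}\). That mismatch produces the cross term.

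\emph{Step 1: test the solve against \(s^{n+1}\).} I take the inner product of \cref{eq:appended-base-step} with \(s^{n+1}\). Since \(d^{n+1}=(g^n)^{\mathsf T}s^{n+1}/(2Q^n)\), the right-hand side equals \(-2q^nd^{n+1}\). On the left I substitute \(\delta_{\mathrm{PB}}^{n+1}=s^{n+1}-\beta s^n\) only inside the \(B_\alpha^n\) term. This gives
\begin{equation*}
  \eta^{-1}(s^{n+1})^{\mathsf T}\delta_{\mathrm{PB}}^{n+1}
  +\norm{s^{n+1}}_{B_\alpha^n}^2
  -\beta(s^{n+1})^{\mathsf T}B_\alpha^ns^n
  =-2q^nd^{n+1}.
\end{equation*}
Next I apply \cref{eq:common-pullback-identity} with \(\Delta\theta^{n+1}=s^{n+1}\), which replaces \(\norm{s^{n+1}}_{B_\alpha^n}^2\) by \(2(d^{n+1})^2+\alpha\norm{s^{n+1}}_{S_m^n}^2\). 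Then \cref{eq:common-scalar-identity} gives \((q^n)^2-(q^{n+1})^2=-2q^nd^{n+1}-(d^{n+1})^2\). Together these yield
\begin{equation*}
  (q^n)^2-(q^{n+1})^2
  =\eta^{-1}(s^{n+1})^{\mathsf T}\delta_{\mathrm{PB}}^{n+1}
  +(d^{n+1})^2+\alpha\norm{s^{n+1}}_{S_m^n}^2
  -\beta(s^{n+1})^{\mathsf T}B_\alpha^ns^n.
\end{equation*}

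\emph{Step 2: the Euclidean part.} Here I add \((\norm{s^n}^2-\norm{s^{n+1}}^2)/(2\eta)\) to both sides, so the left side becomes \(\Hcal_{\mathrm{app}}^n-\Hcal_{\mathrm{app}}^{n+1}\). It remains to check the polarization identity
\begin{equation*}
  (s^{n+1})^{\mathsf T}(s^{n+1}-\beta s^n)
  +\tfrac12\norm{s^n}^2-\tfrac12\norm{s^{n+1}}^2
  =\tfrac12\norm{s^{n+1}-\beta s^n}^2+\tfrac{1-\beta^2}{2}\norm{s^n}^2 .
\end{equation*}
Both sides expand to \(\tfrac12\norm{s^{n+1}}^2-\beta(s^{n+1})^{\mathsf T}s^n+\tfrac12\norm{s^n}^2\). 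Dividing by \(\eta\) gives \cref{eq:appended-momentum-defect}.

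\emph{Step 3: indefiniteness of the last term.} This is the only step needing thought beyond bookkeeping. The term depends on the free stored momentum \(s^n\) both directly and through \(s^{n+1}\). I would exhibit both signs in the simplest setting: \(d=1\), \(m=1\), \(\beta\in(0,1)\) and \(g^n\neq0\), so that \(B_\alpha^n=b>0\). The solved increment \(\delta=\delta_{\mathrm{PB}}^{n+1}\neq0\) is fixed by \(g^n\) and does not depend on \(s^n\). Then
\begin{equation*}
  -\beta(s^{n+1})^{\mathsf T}B_\alpha^ns^n=-\beta b(\delta+\beta s^n)s^n .
\end{equation*}
For \(0<|s^n|<|\delta|/\beta\) the factor \(\delta+\beta s^n\) has the sign of \(\delta\), so the term has the sign of \(-\delta s^n\). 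Taking \(s^n\) with the same sign as \(\delta\) makes it negative, and the opposite sign makes it positive. So no fixed sign holds, even though every other term on the right of \cref{eq:appended-momentum-defect} is nonnegative.
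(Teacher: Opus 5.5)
Your proof is correct and follows the paper's own argument. Like the paper, you test the solve for the total increment \(s^{n+1}-\beta s^n\) against \(s^{n+1}\), apply the scalar and pullback identities with \(\Delta\theta^{n+1}=s^{n+1}\), and finish with Euclidean polarization of the kinetic term. Your one-dimensional example in Step 3 goes beyond the paper, which states the indefiniteness of the cross term without justification; the example is valid once you also take \(q^n>0\), so that \(\delta_{\mathrm{PB}}^{n+1}\neq0\).
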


\begin{proof}
The total increment obeys
\((\eta^{-1}I_d+B_\alpha^n)(s^{n+1}-\beta s^n)=-(q^n/Q^n)g^n\).
Take its inner product with \(s^{n+1}\) and apply
\cref{eq:common-identities}. Subtracting the change in kinetic energy and
using Euclidean polarization gives \cref{eq:appended-momentum-defect}.
\end{proof}

\subsubsection{The momentum and the mobility}

We place momentum on the right-hand side of the implicit solve, so that the
pullback acts on the gradient and memory together. This removes the extra
\(\beta B_\alpha^n s^n\) term in the appended state equation.
Let \(p^n\in\R^d\) be the momentum and let the mobility
\(M^n\in\R^{d\times d}\), symmetric positive definite, map it to the
parameter increment \(M^np^n\); this is the mobility in gradient-flow
terminology and a variable-metric preconditioner in optimization. The new
momentum and increment are related by
\begin{equation}
  \Delta\theta^{n+1}=M^{n+1}p^{n+1},
  \qquad
  p^{n+1}=(M^{n+1})^{-1}\Delta\theta^{n+1}.
  \label{eq:covector-map}
\end{equation}
We assume that the mobility is nonincreasing in the Loewner order:
\begin{equation}
  0\prec M^{n+1}\preceq M^n.
  \label{eq:abstract-mobility-order}
\end{equation}
The PB--SAV momentum update is
\begin{equation}
  \left((M^{n+1})^{-1}+\eta B_\alpha^n\right)
  \Delta\theta^{n+1}
  =\beta p^n-\eta\frac{q^n}{Q^n}g^n,
  \label{eq:abstract-momentum-step}
\end{equation}
or, equivalently,
\begin{equation}
  \frac{p^{n+1}-\beta p^n}{\eta}
  +B_\alpha^n\Delta\theta^{n+1}
  +\frac{q^n}{Q^n}g^n=0.
  \label{eq:abstract-momentum-balance}
\end{equation}
With Euclidean mobility, omitting the pullback term and setting \(q^n/Q^n=1\)
in this balance gives the heavy-ball recursion \cref{eq:heavy-ball-step},
with \(p^n=s^n\).
The parameter and scalar states remain
\begin{equation}
  \theta^{n+1}=\theta^n+\Delta\theta^{n+1},
  \qquad
  q^{n+1}=q^n+d^{n+1}.
  \label{eq:abstract-momentum-states}
\end{equation}

\subsubsection{The inertial quadratic model}
\label{sec:momentum-filtering}

The implicit solve \cref{eq:abstract-momentum-step} has a quadratic
optimization interpretation.

\begin{proposition}
\label{prop:inertial-model}
The increment \(\Delta\theta^{n+1}\) solving
\cref{eq:abstract-momentum-step} is the unique minimizer of
\begin{equation}
  \mathcal Q(s)
  =\frac{q^n}{Q^n}(g^n)^{\mathsf T}s
  +\frac12 s^{\mathsf T}B_\alpha^ns
  +\frac{1}{2\eta}\norm{s-\beta M^{n+1}p^n}_{(M^{n+1})^{-1}}^2 .
  \label{eq:inertial-quadratic-model}
\end{equation}
Appending the momentum after the solve gives instead
\begin{equation}
  \Delta_{\mathrm{app}}^{n+1}
  =-\eta\frac{q^n}{Q^n}
    \bigl((M^{n+1})^{-1}+\eta B_\alpha^n\bigr)^{-1}g^n
   +\beta M^{n+1}p^n,
  \label{eq:frozen-appended-comparator}
\end{equation}
and the two increments satisfy
\begin{align}
  \Delta_{\mathrm{app}}^{n+1}-\Delta\theta^{n+1}
  &=\beta\eta\bigl((M^{n+1})^{-1}+\eta B_\alpha^n\bigr)^{-1}
    B_\alpha^nM^{n+1}p^n,
  \label{eq:appended-model-step-defect}\\
  \mathcal Q(\Delta_{\mathrm{app}}^{n+1})-\mathcal Q(\Delta\theta^{n+1})
  &=\frac{1}{2\eta}
  \norm{\Delta_{\mathrm{app}}^{n+1}-\Delta\theta^{n+1}}
       _{(M^{n+1})^{-1}+\eta B_\alpha^n}^2\geq0.
  \label{eq:appended-model-excess}
\end{align}
\end{proposition}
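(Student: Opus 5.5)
The plan is to treat \(\mathcal Q\) as a strictly convex quadratic and read off its unique stationary point. Write \(A=(M^{n+1})^{-1}+\eta B_\alpha^n\). Since \(M^{n+1}\succ0\) by \cref{eq:abstract-mobility-order}, \((M^{n+1})^{-1}\succ0\), and \(B_\alpha^n=B_1^n+\alpha S_m^n\succeq0\) because \(S_m^n\succeq0\) by \cref{eq:background-variance-gap}. Hence \(A\succ0\), and the Hessian of \(\mathcal Q\) is \(\eta^{-1}A\succ0\), so \(\mathcal Q\) has exactly one minimizer, characterized by \(\nabla\mathcal Q(s)=0\).

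Next I compute the gradient:
\[
\nabla\mathcal Q(s)=\frac{q^n}{Q^n}g^n+B_\alpha^ns+\frac1\eta(M^{n+1})^{-1}\bigl(s-\beta M^{n+1}p^n\bigr)=\frac1\eta\Bigl(As-\beta p^n+\eta\frac{q^n}{Q^n}g^n\Bigr).
\]
Setting this to zero is exactly \cref{eq:abstract-momentum-step}, so \(\Delta\theta^{n+1}=A^{-1}\bigl(\beta p^n-\eta(q^n/Q^n)g^n\bigr)\) is the unique minimizer.

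For the appended comparator I subtract the two closed forms:
\[
\Delta_{\mathrm{app}}^{n+1}-\Delta\theta^{n+1}=\beta M^{n+1}p^n-\beta A^{-1}p^n=\beta A^{-1}\bigl(AM^{n+1}-I_d\bigr)p^n .
\]
Since \(AM^{n+1}-I_d=\eta B_\alpha^nM^{n+1}\), this equals \(\beta\eta A^{-1}B_\alpha^nM^{n+1}p^n\), which is \cref{eq:appended-model-step-defect}.

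For \cref{eq:appended-model-excess} I use the exact second-order expansion of a quadratic about its minimizer \(s^\ast=\Delta\theta^{n+1}\). There \(\nabla\mathcal Q(s^\ast)=0\) and the Hessian is \(\eta^{-1}A\), so
\[
\mathcal Q(s)-\mathcal Q(s^\ast)=\tfrac{1}{2\eta}\norm{s-s^\ast}_A^2
\]
for every \(s\). Taking \(s=\Delta_{\mathrm{app}}^{n+1}\) gives the identity, and nonnegativity follows from \(A\succ0\).

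No step is a genuine obstacle. The only care needed is bookkeeping: the \(\eta\) scalings must match (the Hessian is \(\eta^{-1}A\), not \(A\)), and the weighted norm in \(\mathcal Q\) must be written with \((M^{n+1})^{-1}\) so that its gradient produces the \(\beta p^n\) term without a stray \(M^{n+1}\).
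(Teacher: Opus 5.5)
Your proposal is correct and follows essentially the same route as the paper. Both arguments proceed in three steps: identify the optimality equation of the strictly convex quadratic \(\mathcal Q\) (Hessian \(\eta^{-1}\bigl((M^{n+1})^{-1}+\eta B_\alpha^n\bigr)\succ0\)) with the implicit solve; subtract the two closed forms using \(\bigl((M^{n+1})^{-1}+\eta B_\alpha^n\bigr)M^{n+1}=I_d+\eta B_\alpha^nM^{n+1}\); and expand \(\mathcal Q\) exactly about its minimizer.
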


\begin{proof}
The Hessian of \(\mathcal Q\) is
\(B_\alpha^n+\eta^{-1}(M^{n+1})^{-1}\succ0\), and its optimality equation is
\cref{eq:abstract-momentum-step}. Subtracting the two increments and using
\(\bigl((M^{n+1})^{-1}+\eta B_\alpha^n\bigr)M^{n+1}
 =I_d+\eta B_\alpha^nM^{n+1}\) gives
\cref{eq:appended-model-step-defect}. Expanding \(\mathcal Q\) about its
minimizer gives \cref{eq:appended-model-excess}.
\end{proof}

\begin{remark}
\label{rem:inertial-gauss-newton}
For a fixed \(\alpha\), let \(\Psi_\alpha(\theta)\) stack
\(\sqrt{2(1-\alpha)}\,Q(\theta)\) on \(\sqrt{\alpha}\,\psi(\theta)\), so that
\(\tfrac12\norm{\Psi_\alpha}^2=F+C\), its Gauss--Newton matrix is
\(B_\alpha\), and its gradient is \(g\). At exact
tracking \(q^n=Q^n\), the shifted model \(\mathcal Q(s)+F+C\) is a
Gauss--Newton model for the residual map \(\Psi_\alpha\), together with a
proximal term centered at \(\beta M^{n+1}p^n\) and measured in
\(\norm{\cdot}_{(M^{n+1})^{-1}}\); otherwise the linear term is scaled by
\(q^n/Q^n\). Levenberg--Marquardt methods for neural networks also combine curvature with
adaptive momentum \cite{pooladzandi2022improving}, without the scalar
coupling that produces the energy law below.
\end{remark}

To see how the solve acts on memory, let \(b_j\geq0\) be the eigenvalues of
\((M^{n+1})^{1/2}B_\alpha^n(M^{n+1})^{1/2}\) and let \(u_j\) be the
coordinates of \((M^{n+1})^{1/2}p^n\) in a corresponding orthonormal
eigenbasis. In those coordinates the solve multiplies the \(j\)th component
of the stored momentum by \(\beta/(1+\eta b_j)\), and
\begin{equation}
  \mathcal Q(\Delta_{\mathrm{app}}^{n+1})-\mathcal Q(\Delta\theta^{n+1})
  =\frac{\beta^2\eta}{2}
  \sum_j\frac{b_j^2}{1+\eta b_j}u_j^2 .
  \label{eq:spectral-memory-coefficient}
\end{equation}

\begin{remark}
\label{rem:memory-damping}
Directions with larger \(b_j\) damp the stored momentum more strongly, and
where \(b_j=0\) the coefficient remains \(\beta\). The excess in
\cref{eq:spectral-memory-coefficient} is measured in the quadratic model, not
in the modified energy of \cref{thm:monotone-mobility-dissipation}.
\end{remark}

\subsubsection{Curvature action of the momentum step}
\label{sec:curvature-action}

The following comparison measures the approximation error of the correction
through its action on the reference increment.

\begin{proposition}
\label{prop:curvature-action}
Let \(M\succ0\), let \(A,B\succeq0\) be symmetric, and let \(\eta>0\).
For a common right-hand side \(f\in\R^d\), define
\begin{equation}
  (M^{-1}+\eta B)\Delta_B=f,
  \qquad
  (M^{-1}+\eta A)\Delta_A=f.
  \label{eq:curvature-reference-steps}
\end{equation}
Then
\begin{equation}
  \Delta_B-\Delta_A
  =\eta(M^{-1}+\eta B)^{-1}(A-B)\Delta_A,
  \label{eq:curvature-action-identity}
\end{equation}
and
\begin{equation}
  \norm{\Delta_B-\Delta_A}_{M^{-1}}
  \leq\eta\norm{M^{1/2}(A-B)\Delta_A}.
  \label{eq:curvature-action-bound}
\end{equation}
In particular, \(\Delta_B=\Delta_A\) if and only if
\((A-B)\Delta_A=0\).
\end{proposition}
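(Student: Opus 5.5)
The plan is to derive the identity \cref{eq:curvature-action-identity} by subtracting the two defining equations in \cref{eq:curvature-reference-steps}. Then we obtain the bound \cref{eq:curvature-action-bound} by a symmetric change of variables that makes the inverse operator a contraction. Both matrices \(M^{-1}+\eta B\) and \(M^{-1}+\eta A\) are symmetric positive definite, because \(M^{-1}\succ0\) and \(A,B\succeq0\). Hence both steps are well defined.

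For the identity, rewrite the second equation as \((M^{-1}+\eta B)\Delta_A=f-\eta(A-B)\Delta_A\). Subtract this from the first equation to get \((M^{-1}+\eta B)(\Delta_B-\Delta_A)=\eta(A-B)\Delta_A\). Inverting gives \cref{eq:curvature-action-identity}. The same line gives the equality statement: \(\Delta_B=\Delta_A\) holds exactly when the right-hand side \(\eta(A-B)\Delta_A\) vanishes. Since \(\eta>0\) and the inverse is injective, this is equivalent to \((A-B)\Delta_A=0\).

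For the bound, write \(M^{-1}+\eta B=M^{-1/2}(I_d+\eta M^{1/2}BM^{1/2})M^{-1/2}\), so that
\[
M^{-1/2}(\Delta_B-\Delta_A)=\eta\,(I_d+\eta M^{1/2}BM^{1/2})^{-1}M^{1/2}(A-B)\Delta_A .
\]
The left side has Euclidean norm \(\norm{\Delta_B-\Delta_A}_{M^{-1}}\). The matrix \(I_d+\eta M^{1/2}BM^{1/2}\) is symmetric with all eigenvalues at least one, because \(M^{1/2}BM^{1/2}\succeq0\). Its inverse therefore has spectral norm at most one, which yields \cref{eq:curvature-action-bound}.

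I expect no real obstacle here. The only delicate point is choosing the factorization through \(M^{\pm1/2}\): the naive estimate \(\norm{(M^{-1}+\eta B)^{-1}}\le\norm{M}\) would produce the wrong norm on each side. The symmetric splitting above is exactly what converts the \(M^{-1}\)-seminorm on the left into the plain Euclidean norm of \(M^{1/2}(A-B)\Delta_A\) on the right.
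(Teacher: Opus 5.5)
Your proposal is correct and matches the paper's proof: subtract the two systems to get the identity, and use invertibility of \(M^{-1}+\eta B\) for the equality criterion. For the bound, both arguments multiply by \(M^{-1/2}\) and use \(\|(I_d+\eta M^{1/2}BM^{1/2})^{-1}\|_2\leq1\), which holds because \(M^{1/2}BM^{1/2}\succeq0\).
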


\begin{proof}
Subtracting the two systems gives \cref{eq:curvature-action-identity}.
Since \(M^{1/2}BM^{1/2}\succeq0\), we have
\(\norm{(I_d+\eta M^{1/2}BM^{1/2})^{-1}}_2\leq1\); multiplying
\cref{eq:curvature-action-identity} by \(M^{-1/2}\) then gives
\cref{eq:curvature-action-bound}. The equality criterion follows from
invertibility of \(M^{-1}+\eta B\).
\end{proof}

\begin{remark}
\label{rem:curvature-action}
For the momentum update, take \(M=M^{n+1}\), \(B=B_\alpha^n\), and
\(f=\beta p^n-\eta(q^n/Q^n)g^n\); a Gauss--Newton matrix, or a positive
semidefinite Hessian at the current state, can serve as \(A\). A low-rank \(B\)
approximates \(\Delta_A\) when the weighted action error on the right-hand
side of \cref{eq:curvature-action-bound} is small, even when \(A-B\) is not
small in operator norm, and reproduces it exactly when
\((A-B)\Delta_A=0\). The quadratic and regression tests
in \cref{sec:quadratic-experiment,sec:regression-experiment} measure the
matrix error and the step error separately.
\end{remark}

\subsection{Dissipation and mobility choices}
\label{sec:dissipation-mobilities}

\subsubsection{Modified energy law}

Define, for every \(n\),
\begin{equation}
  \Hcal_{\mathrm{mom}}^n
  =(q^n)^2+\frac{1}{2\eta}\norm{p^n}_{M^n}^2.
  \label{eq:abstract-momentum-energy}
\end{equation}

\begin{theorem}
\label{thm:monotone-mobility-dissipation}
For \(0\leq\beta<1\) and \(0\prec M^{n+1}\preceq M^n\), the update
\cref{eq:covector-map,eq:abstract-momentum-step,eq:abstract-momentum-states}
satisfies
\begin{equation}
\begin{aligned}
  \Hcal_{\mathrm{mom}}^n
  -\Hcal_{\mathrm{mom}}^{n+1}
  ={}&\frac{\norm{p^n}_{M^n-M^{n+1}}^2}{2\eta}
  +\frac{\norm{p^{n+1}-\beta p^n}_{M^{n+1}}^2}{2\eta}\\
  &+\frac{1-\beta^2}{2\eta}\norm{p^n}_{M^{n+1}}^2
  +(d^{n+1})^2
  +\alpha\norm{\Delta\theta^{n+1}}_{S_m^n}^2
  \geq0.
\end{aligned}
\label{eq:monotone-mobility-dissipation}
\end{equation}
\end{theorem}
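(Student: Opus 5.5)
The plan is to mirror the proof of \cref{thm:base-dissipation}: test the implicit balance against the increment, convert the scalar and pullback terms with \cref{eq:common-identities}, and then rewrite the inertial term as a change in the mobility-weighted kinetic energy.

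\emph{Testing the balance.} Take the inner product of \cref{eq:abstract-momentum-balance} with \(\eta\Delta\theta^{n+1}=\eta M^{n+1}p^{n+1}\). This gives
\[
(p^{n+1}-\beta p^n)^{\mathsf T}M^{n+1}p^{n+1}
+\eta\norm{\Delta\theta^{n+1}}_{B_\alpha^n}^2
+\eta\frac{q^n}{Q^n}(g^n)^{\mathsf T}\Delta\theta^{n+1}=0 .
\]
By \cref{eq:common-d}, the last term equals \(2\eta q^nd^{n+1}\). By \cref{eq:common-scalar-identity} and \cref{eq:abstract-momentum-states}, \(2q^nd^{n+1}=(q^{n+1})^2-(q^n)^2-(d^{n+1})^2\). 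By \cref{eq:common-pullback-identity}, the pullback term is \(2(d^{n+1})^2+\alpha\norm{\Delta\theta^{n+1}}_{S_m^n}^2\). Dividing by \(\eta\) and collecting terms yields
\[
(q^n)^2-(q^{n+1})^2=(d^{n+1})^2+\alpha\norm{\Delta\theta^{n+1}}_{S_m^n}^2
+\tfrac1\eta(p^{n+1}-\beta p^n)^{\mathsf T}M^{n+1}p^{n+1}.
\]

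\emph{Polarizing the inertial term.} Write \(a=p^{n+1}\) and \(b=\beta p^n\). In the inner product induced by \(M^{n+1}\),
\[
(a-b)^{\mathsf T}M^{n+1}a=\tfrac12\bigl(\norm{a}_{M^{n+1}}^2-\norm{b}_{M^{n+1}}^2+\norm{a-b}_{M^{n+1}}^2\bigr).
\]
Using \(\norm{b}_{M^{n+1}}^2=\beta^2\norm{p^n}_{M^{n+1}}^2\), this becomes
\[
\tfrac12\norm{p^{n+1}}_{M^{n+1}}^2-\tfrac12\norm{p^n}_{M^{n+1}}^2
+\tfrac{1-\beta^2}{2}\norm{p^n}_{M^{n+1}}^2+\tfrac12\norm{p^{n+1}-\beta p^n}_{M^{n+1}}^2 .
\]

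\emph{Changing the metric.} The energy \cref{eq:abstract-momentum-energy} measures \(p^n\) in \(M^n\), not \(M^{n+1}\). This step needs care, since the sign of the remaining term depends on the mobility order. Write
\[
-\tfrac12\norm{p^n}_{M^{n+1}}^2=-\tfrac12\norm{p^n}_{M^n}^2+\tfrac12\norm{p^n}_{M^n-M^{n+1}}^2 .
\]
Substituting and moving the kinetic parts \(\tfrac1{2\eta}\norm{p^{n+1}}_{M^{n+1}}^2\) and \(-\tfrac1{2\eta}\norm{p^n}_{M^n}^2\) to the left-hand side gives exactly \(\Hcal_{\mathrm{mom}}^n-\Hcal_{\mathrm{mom}}^{n+1}\) on the left and the right-hand side of \cref{eq:monotone-mobility-dissipation}.

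\emph{Nonnegativity.} Each term on the right is nonnegative for the following reasons:
\begin{itemize}
\item \(M^n-M^{n+1}\succeq0\) by \cref{eq:abstract-mobility-order};
\item \(M^{n+1}\succ0\);
\item \(1-\beta^2>0\) because \(0\le\beta<1\);
\item \((d^{n+1})^2\ge0\);
\item \(S_m^n\succeq0\) by \cref{eq:background-variance-gap}, and \(\alpha\ge0\).
\end{itemize}
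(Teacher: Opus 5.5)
Your proposal is correct and follows the paper's proof. Both test \cref{eq:abstract-momentum-balance} against \(\Delta\theta^{n+1}=M^{n+1}p^{n+1}\), polarize in the \(M^{n+1}\)-weighted inner product, insert and subtract \(\norm{p^n}_{M^n}^2/(2\eta)\) to obtain the mobility-change term, and substitute \cref{eq:common-identities}; you apply the scalar and pullback identities before the polarization instead of after, which does not change the argument.
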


The proof is a weighted polarization of
\cref{eq:abstract-momentum-balance} against
\(\Delta\theta^{n+1}=M^{n+1}p^{n+1}\), followed by substitution of
\cref{eq:common-identities}; it is given in \cref{app:energy-law-proof}.

\begin{remark}
\label{rem:dissipated-quantity}
The dissipated quantity is the modified energy \(\Hcal_{\mathrm{mom}}^n\),
which controls the scalar and kinetic energies together rather than the
objective \(F(\theta^n)\) alone. Its five nonnegative terms are, in order,
the dissipation from the mobility change, the inertial residual, the momentum
damping, the scalar tracking term, and the curvature gap, which the
relaxation step of \cref{sec:relaxation-convergence} sums.
\end{remark}

\subsubsection{Euclidean momentum}

Set \(M^n=I_d\) and identify \(p^n=s^n\).  The abstract state equation becomes
\begin{equation}
  \left(I_d+\eta B_\alpha^n\right)s^{n+1}
  =\beta s^n-\eta\frac{q^n}{Q^n}g^n.
  \label{eq:euclidean-momentum-step}
\end{equation}
The constant mobility \(M^n\equiv I_d\) satisfies
\cref{eq:abstract-mobility-order} trivially, so
\cref{thm:monotone-mobility-dissipation} applies for every
\(0\leq\beta<1\); its mobility-change term vanishes, and the modified
energy is \((q^n)^2+\norm{s^n}^2/(2\eta)\).

\subsubsection{AMSGrad-type diagonal mobility}

An AMSGrad-type maximum envelope \cite{reddi2018amsgrad} gives a diagonal
mobility that is nonincreasing in the Loewner order. Let \(0\leq\beta_2<1\),
\(\epsilon>0\), and initialize
\(v^0=\overline v^0=0\).  With \(\odotv\) the componentwise product and the
square root, division, and maximum below taken componentwise, define
\begin{subequations}
\label{eq:adaptive-second-moment}
\begin{align}
  v^{n+1}
  &=\beta_2v^n+(1-\beta_2)(g^n\odotv g^n),
  \label{eq:adaptive-v}\\
  \widehat v^{n+1}
  &=\frac{v^{n+1}}{1-\beta_2^{n+1}},
  \label{eq:adaptive-bias-correction}\\
  \overline v^{n+1}
  &=\max\{\overline v^n,\widehat v^{n+1}\},
  \label{eq:adaptive-max}\\
  M^{n+1}
  &=\diag\left(
  \frac{1}{\sqrt{\overline v^{n+1}}+\epsilon}
  \right).
  \label{eq:adaptive-mobility}
\end{align}
\end{subequations}
Then \(M^0=\epsilon^{-1}I_d\), and the maximum envelope implies
\begin{equation}
  0\prec M^{n+1}\preceq M^n,
  \label{eq:adaptive-mobility-order}
\end{equation}
so \cref{thm:monotone-mobility-dissipation} applies to this mobility for
every \(0\leq\beta<1\).

\subsection{Scalar relaxation}
\label{sec:relaxation-convergence}

Relaxed-SAV schemes modify the auxiliary state while preserving a discrete
energy bound \cite{jiang2022rsav,zhang2022gsavrelax,huang2024optimalrsav}.
The squared-scalar update below is the relaxation of \cite{zhang2026pbsav};
it depends on the update only through its Lyapunov functional, so one
statement covers both updates of this section.

Without relaxation, \(q^{n+1}=q^n+d^{n+1}\) is the stored scalar in
\cref{eq:direct-family,eq:abstract-momentum-states}. When relaxation is
applied, the same algebraic output is provisional and is denoted by
\begin{equation}
  \bar q^{n+1}=q^n+d^{n+1}.
  \label{eq:relaxation-candidate}
\end{equation}
For the direct update the old and provisional energies are
\(\Hcal_{\mathrm{dir}}^n=(q^n)^2\) and
\(\bar\Hcal_{\mathrm{dir}}^{n+1}=(\bar q^{n+1})^2\); for the momentum update
they are \(\Hcal_{\mathrm{mom}}^n=(q^n)^2+\norm{p^n}_{M^n}^2/(2\eta)\) and
\(\bar\Hcal_{\mathrm{mom}}^{n+1}=(\bar q^{n+1})^2
+\norm{p^{n+1}}_{M^{n+1}}^2/(2\eta)\).

\begin{proposition}
\label{prop:generic-relaxation}
For either update, let \((\Hcal^n,\bar\Hcal^{n+1})\) denote the corresponding
pair above and suppose that
\begin{equation}
  \Hcal^n-\bar\Hcal^{n+1}=\mathcal D^n\geq0.
  \label{eq:generic-unrelaxed-dissipation}
\end{equation}
For \(0\leq\rho^n\leq1\), define
\begin{equation}
  (q^{n+1})^2
  =\min\left\{
  Q(\theta^{n+1})^2,\;
  (\bar q^{n+1})^2+\rho^n\mathcal D^n
  \right\},
  \qquad q^{n+1}\geq0.
  \label{eq:generic-relaxation}
\end{equation}
Let \(\Hcal^{n+1}\) be the same energy evaluated with the accepted
scalar \(q^{n+1}\). Then
\begin{equation}
  \Hcal^{n+1}
  \leq\Hcal^n-(1-\rho^n)\mathcal D^n
  \leq\Hcal^n.
  \label{eq:relaxed-energy-bound}
\end{equation}
\end{proposition}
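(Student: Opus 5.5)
The plan is to exploit that the relaxed energy differs from the provisional energy only in its scalar part. For both updates I will write
\[
\Hcal^{n+1}=\bar\Hcal^{n+1}+\bigl((q^{n+1})^2-(\bar q^{n+1})^2\bigr).
\]
For the direct update this holds trivially, since \(\Hcal_{\mathrm{dir}}^{n+1}=(q^{n+1})^2\) and \(\bar\Hcal_{\mathrm{dir}}^{n+1}=(\bar q^{n+1})^2\). For the momentum update the kinetic part \(\norm{p^{n+1}}_{M^{n+1}}^2/(2\eta)\) is the same in both energies. The reason is that relaxation changes only the stored scalar; \(p^{n+1}\) and \(M^{n+1}\) are computed before relaxation and are left untouched. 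This is the single point where the two updates must be checked separately, and it is the only step that needs care. I will state explicitly that the relaxation acts after \(\theta^{n+1}\), \(p^{n+1}\) and \(M^{n+1}\) are fixed.

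Next I bound the scalar change using the definition \cref{eq:generic-relaxation}. A minimum is at most each of its arguments, so \((q^{n+1})^2\le(\bar q^{n+1})^2+\rho^n\mathcal D^n\). The minimum is also well defined with a nonnegative root. Both arguments are nonnegative: \(Q(\theta^{n+1})^2=F(\theta^{n+1})+C>0\) under the standing positivity of the shifted energies, and the second argument is nonnegative because \(\mathcal D^n\ge0\) and \(\rho^n\ge0\). Hence \(q^{n+1}\ge0\) exists.

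Finally I combine these with \cref{eq:generic-unrelaxed-dissipation}:
\[
\Hcal^{n+1}\le\bar\Hcal^{n+1}+\rho^n\mathcal D^n=\Hcal^n-\mathcal D^n+\rho^n\mathcal D^n=\Hcal^n-(1-\rho^n)\mathcal D^n.
\]
The second inequality in \cref{eq:relaxed-energy-bound} then follows from \(\rho^n\le1\) and \(\mathcal D^n\ge0\).

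I will also note that the hypothesis \(\mathcal D^n\ge0\) holds for the two updates of interest. For the direct update it is \cref{thm:base-dissipation}, and for the momentum update it is \cref{thm:monotone-mobility-dissipation}. In both cases \(\mathcal D^n\) is the corresponding right-hand side, evaluated with the provisional scalar \(\bar q^{n+1}\).
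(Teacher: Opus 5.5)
Your proposal is correct and follows the paper's proof: relaxation changes only the scalar part of the energy, so \(\Hcal^{n+1}\le\bar\Hcal^{n+1}+\rho^n\mathcal D^n\), and the hypothesis \(\Hcal^n-\bar\Hcal^{n+1}=\mathcal D^n\) then gives the bound. The paper leaves implicit two points you check explicitly: that \(p^{n+1}\) and \(M^{n+1}\) are fixed before relaxation, and that both arguments of the minimum are nonnegative, so \(q^{n+1}\geq0\) is well defined.
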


\begin{proof}
Relaxation changes only the scalar part of the new energy.  Therefore
\(
\Hcal^{n+1}\leq\bar\Hcal^{n+1}+\rho^n\mathcal D^n
\), and \cref{eq:generic-unrelaxed-dissipation} gives
\cref{eq:relaxed-energy-bound}.
\end{proof}

With \(q^0=Q(\theta^0)\), the accepted scalars satisfy
\(0\leq q^n\leq Q(\theta^n)\) by \cref{eq:generic-relaxation}.
Full relaxation \(\rho^n=1\) preserves the modified energy bound; choosing
\(\rho^n\leq\bar\rho<1\) also gives
\(\sum_n\mathcal D^n\leq\Hcal^0/(1-\bar\rho)\).
Under the curvature condition in \ref{app:directional-coverage}, full
relaxation preserves \(q^n=Q(\theta^n)\), so the momentum energy becomes
\(F(\theta^n)+C+\norm{p^n}_{M^n}^2/(2\eta)\).

\subsection{Low-rank implementation and cost per iteration}
\label{sec:low-rank-realization}

With \(V^n\) as in \cref{eq:pbsav-preview} and
\(c^n=\tfrac1{Q^n}[\,Q_1^n\ \cdots\ Q_m^n\,]^{\mathsf T}\), we have
\(\norm{c^n}=1\) and
\begin{equation}
  B_\alpha^n
  =V^n\left[\alpha I_m+(1-\alpha)c^n(c^n)^{\mathsf T}\right](V^n)^{\mathsf T}.
  \label{eq:true-rank-factor}
\end{equation}
Consequently \(\rank(B_\alpha^n)\leq\rank(V^n)\leq m\).  For a concrete
factor with at most \(m\) columns, define
\begin{equation}
  R^n=\left[\alpha I_m+(1-\alpha)c^n(c^n)^{\mathsf T}\right]^{1/2},
  \qquad
  W^n=V^nR^n.
  \label{eq:explicit-low-rank-factor}
\end{equation}
Because \(\norm{c^n}=1\), the symmetric positive semidefinite square root has
the closed form
\(R^n=\sqrt{\alpha}\,I_m+(1-\sqrt{\alpha})c^n(c^n)^{\mathsf T}\), which at
\(\alpha=0\) is the rank-one projector \(c^n(c^n)^{\mathsf T}\).  It also
satisfies
\(W^n(W^n)^{\mathsf T}=B_\alpha^n\) with \(\rank(W^n)\leq m\).  With
\begin{equation}
  f^n=\beta p^n-\eta\frac{q^n}{Q^n}g^n,
  \label{eq:low-rank-rhs}
\end{equation}
the momentum equation is
\begin{equation}
  \left((M^{n+1})^{-1}+\eta W^n(W^n)^{\mathsf T}\right)
  \Delta\theta^{n+1}=f^n.
  \label{eq:low-rank-system}
\end{equation}

\begin{algorithm}[!ht]
\caption{Relaxed PB--SAV momentum with nonincreasing mobility. Use
\(M^n\equiv I_d\) for Euclidean momentum; for the AMSGrad-type mobility use
\(v^0=\overline v^0=0\) and \(M^0=\epsilon^{-1}I_d\).}
\label{alg:pbsav}
\begin{algorithmic}[1]
\Require Components \(\{E_i,C_i\}_{i=1}^m\), \(\theta^0\),
\(q^0=Q(\theta^0)\), \(p^0=0\), number of updates \(N\), \(\eta>0\),
\(\beta\in[0,1)\), \(\alpha\in[0,1]\), relaxation schedule
\(\{\rho^n\}_{n=0}^{N-1}\subset[0,1]\), and a mobility initialization and
update rule satisfying \(0\prec M^{n+1}\preceq M^n\)
\For{\(n=0,\ldots,N-1\)}
  \State Evaluate \(E_i(\theta^n)\), \(g_i^n\), \(Q_i^n\), \(Q^n\), and
  \(g^n=\sum_i g_i^n\).
  \State Update \(M^{n+1}\) using the prescribed nonincreasing mobility rule.
  \State Form \(W^n\) from \cref{eq:explicit-low-rank-factor}.
  \State Form \(f^n\) from \cref{eq:low-rank-rhs}.
  \State Compute \(\Delta\theta^{n+1}\) from \cref{eq:general-smw}.
  \State Set \(p^{n+1}=(M^{n+1})^{-1}\Delta\theta^{n+1}\) and
  \(\theta^{n+1}=\theta^n+\Delta\theta^{n+1}\).
  \State Compute \(\bar q^{n+1}\) from \cref{eq:relaxation-candidate}.
  \State Sum the five terms in
  \cref{eq:monotone-mobility-dissipation} as \(\mathcal D^n\).
  \State Re-evaluate \(Q(\theta^{n+1})\) and apply
  \cref{eq:generic-relaxation}.
\EndFor
\Ensure \(\theta^N\)
\end{algorithmic}
\end{algorithm}

\begin{proposition}
\label{prop:low-rank-solution}
The unique solution of \cref{eq:low-rank-system} is
\begin{equation}
\begin{aligned}
  \Delta\theta^{n+1}
  ={}&M^{n+1}f^n
  -\eta M^{n+1}W^n
  \left(I_m+\eta (W^n)^{\mathsf T}M^{n+1}W^n\right)^{-1}\\
  &\hspace{35mm}\times
  (W^n)^{\mathsf T}M^{n+1}f^n.
\end{aligned}
\label{eq:general-smw}
\end{equation}
Besides applying the mobility, the computation requires a dense system of
order at most \(m\).
\end{proposition}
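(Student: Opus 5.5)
The plan is to apply the Sherman--Morrison--Woodbury identity to the system \cref{eq:low-rank-system}, taking \(A=(M^{n+1})^{-1}\), \(U=\eta W^n\) and \(V=(W^n)^{\mathsf T}\); rather than citing the identity, I will verify it directly. I first establish existence and uniqueness. Since \(M^{n+1}\succ0\), its inverse is symmetric positive definite. The factor satisfies \(W^n(W^n)^{\mathsf T}=B_\alpha^n\succeq0\), so the coefficient matrix \((M^{n+1})^{-1}+\eta W^n(W^n)^{\mathsf T}\) is symmetric positive definite and hence invertible. The \(m\times m\) matrix \(K=I_m+\eta(W^n)^{\mathsf T}M^{n+1}W^n\) is also invertible, because \((W^n)^{\mathsf T}M^{n+1}W^n\succeq0\) gives \(K\succeq I_m\). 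Both inverses in \cref{eq:general-smw} are therefore well defined.

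To verify the formula, write \(M=M^{n+1}\), \(W=W^n\), \(f=f^n\), and let \(x\) denote the right-hand side of \cref{eq:general-smw}. Since \(M^{-1}Mf=f\), applying the coefficient matrix to \(x\) gives
\begin{align*}
  (M^{-1}+\eta WW^{\mathsf T})x
  &=f+\eta WW^{\mathsf T}Mf
   -\eta W K^{-1}W^{\mathsf T}Mf
   -\eta^2 WW^{\mathsf T}MWK^{-1}W^{\mathsf T}Mf\\
  &=f+\eta W\bigl[I_m-(I_m+\eta W^{\mathsf T}MW)K^{-1}\bigr]W^{\mathsf T}Mf .
\end{align*}
The bracket equals \(I_m-KK^{-1}=0\), so \(x\) solves the system, and by uniqueness it is \(\Delta\theta^{n+1}\).

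For the cost statement, I will note that \(W\) has \(m\) columns, so \(W^{\mathsf T}MW\) and \(K\) are \(m\times m\) matrices. Forming \(W^{\mathsf T}MW\) requires only products of the mobility with the \(m\) columns of \(W\) and with \(f\), together with inner products. The only dense linear solve is the one with \(K\), whose order is \(m\). When \(\rank W<m\) one may pass to an independent set of columns, which is why the order is stated as at most \(m\).

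None of these steps is a genuine obstacle. The only point requiring care is checking that \(K\) is invertible without assuming that \(W\) has full rank, and the positive semidefiniteness of \(W^{\mathsf T}MW\) handles this.
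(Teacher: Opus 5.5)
Your proof is correct and follows the paper's argument: positive definiteness of $(M^{n+1})^{-1}+\eta W^n(W^n)^{\mathsf T}$ gives uniqueness, and the Sherman--Morrison--Woodbury identity with base inverse $M^{n+1}$ gives the formula, which you verify by direct multiplication instead of citing it. One small caveat is that your aside about passing to an independent set of columns when $\rank W^n<m$ is unnecessary, since the $m\times m$ solve with $K$ already has order at most $m$; taken literally it is also inaccurate, because discarding columns changes $W^n(W^n)^{\mathsf T}$, so a reduced factor would have to be recomputed rather than selected.
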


\begin{proof}
The coefficient in \cref{eq:low-rank-system} is positive definite because
\(M^{n+1}\succ0\) and \(W^n(W^n)^{\mathsf T}\succeq0\).  The
Sherman--Morrison--Woodbury identity \cite{sherman1950,woodbury1950} with
base inverse \(M^{n+1}\) gives \cref{eq:general-smw}.
\end{proof}

\begin{remark}
\label{rem:cost}
Each iteration evaluates the \(m\) component energies and gradients that the
algorithm already forms.  The factor \(R^n\) is available in closed form, so
\[
  W^n=\sqrt{\alpha}\,V^n
  +\bigl(1-\sqrt{\alpha}\bigr)\bigl(V^nc^n\bigr)(c^n)^{\mathsf T}
\]
costs one matrix--vector product and needs no eigensolver.
Evaluating \cref{eq:general-smw} requires the Gram matrix
\((W^n)^{\mathsf T}M^{n+1}W^n\) and one dense solve of order at most \(m\),
so the work beyond applying the mobility is \(O(m^2d+m^3)\).  Storage beyond
the iterate is the \(d\times m\) factor \(W^n\), the momentum \(p^n\), and
the mobility state.  The relaxation step re-evaluates \(Q(\theta^{n+1})\),
one objective evaluation per iteration beyond the gradient.
\end{remark}

The direct update omits the momentum and mobility states: it solves
\(\left(I_d+\eta W^n(W^n)^{\mathsf T}\right)\Delta\theta^{n+1}
=f_{\mathrm{dir}}^n\) with
\(f_{\mathrm{dir}}^n=-\eta\frac{q^n}{Q^n}g^n\), using the same reduction to a
dense system of order at most \(m\), and takes \(\mathcal D^n\) in the scalar
relaxation from \cref{eq:base-dissipation}.

\subsection{Local stability and convergence}
\label{sec:local-stability-convergence}

\subsubsection{Local stability and retained curvature}
\label{sec:local-stability}

With fixed mobility, the pullback changes the local stability condition at a
stationary point \(\theta_*\) of \(F\). Throughout this subsection a
subscript \(*\) denotes evaluation at that point, so that
\(H_*=\nabla^2F(\theta_*)\) and \(B_*=B_\alpha(\theta_*)\), and
\((\theta_*,p_*,q_*)\) is the corresponding state of
\cref{eq:abstract-momentum-step,eq:abstract-momentum-states}. The comparison
depends on the Hessian curvature left after subtracting twice \(B_*\).

\begin{theorem}
\label{thm:local-schur-stability}
Let \(\theta_*\) satisfy \(\nabla F(\theta_*)=0\), suppose the component
energies are twice continuously differentiable near \(\theta_*\) with
positive shifted energies, and let \(H_*\succ0\). Fix
\(0\leq\alpha\leq1\), \(M\succ0\), \(\eta>0\), and
\(0\leq\beta<1\). The corresponding rest state has \(p_*=0\) and a stored
scalar \(q_*>0\); set \(\kappa_*=q_*/Q(\theta_*)\). The parameter--momentum block of the
linearization of \cref{eq:abstract-momentum-step,eq:abstract-momentum-states}
is Schur stable, meaning that all its eigenvalues have modulus less than one,
if and only if
\begin{equation}
  \kappa_*H_*
  \prec 2B_*+\frac{2(1+\beta)}{\eta}M^{-1}.
  \label{eq:local-stability-criterion}
\end{equation}
\end{theorem}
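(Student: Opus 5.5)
The plan is to linearize the momentum update at the rest state, reduce the spectrum of the parameter--momentum block to a quadratic matrix pencil with Hermitian coefficients, and then locate its roots with a scalar Jury (Schur--Cohn) test on Rayleigh quotients.

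\textbf{Linearization.} With fixed \(M\), the update \cref{eq:abstract-momentum-step} reads \((M^{-1}+\eta B_\alpha(\theta))\Delta\theta=\beta p-\eta(q/Q(\theta))g(\theta)\). At the rest state \(\Delta\theta=0\), \(p_*=0\) and \(g(\theta_*)=0\), so three kinds of terms drop out at first order. Derivatives of \(B_\alpha\) multiply \(\Delta\theta=O(\varepsilon)\) and are second order. Perturbations of \(q\) and of \(Q(\theta)\) multiply \(g(\theta_*)=0\) and also drop. Hence, with \(e=\theta-\theta_*\), \(A=M^{-1}+\eta B_*\succ0\) and \(K=\eta\kappa_*H_*\succ0\), the linearized \((e,p)\) dynamics are
\[
A\Delta=\beta p-Ke,\qquad p^{+}=M^{-1}\Delta,\qquad e^{+}=e+\Delta,
\]
and they do not involve the scalar perturbation. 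The full Jacobian is therefore block triangular, and the parameter--momentum block is this map.

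\textbf{Reduction to a pencil.} Take an eigenpair \((e,p)\ne0\) with eigenvalue \(\lambda\), so \(e^+=\lambda e\) and \(p^+=\lambda p\).
\begin{itemize}
\item If \(\lambda=0\), then \(\Delta=-e\) and \(M^{-1}\Delta=0\), so \(e=0\) and \(\Delta=0\). The first equation then gives \(\beta p=0\). So \(\lambda=0\) occurs only when \(\beta=0\), and it is harmless because it lies inside the unit disk.
\item If \(\lambda\ne0\), then \(\Delta=(\lambda-1)e\) and \(p=\lambda^{-1}(\lambda-1)M^{-1}e\). Substituting and multiplying by \(\lambda\) gives \(P(\lambda)e=0\) with
\[
P(\lambda)=\lambda^2A-\lambda\bigl(A+\beta M^{-1}-K\bigr)+\beta M^{-1}.
\]
Here \(e\ne0\), since otherwise \(p=0\) as well.
\end{itemize}
Conversely, every root \(\lambda\ne0\) of \(\det P\) produces an eigenpair by setting \(p=\lambda^{-1}(\lambda-1)M^{-1}e\). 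So the nonzero spectrum of the block coincides with the nonzero roots of \(\det P\).

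\textbf{Sufficiency.} Let \(x\) be a unit complex eigenvector of \(P(\lambda)\) and set \(a=x^*Ax>0\), \(c=\beta x^*M^{-1}x\geq0\), \(k=x^*Kx>0\). Then \(\lambda\) is a root of the real scalar quadratic \(a\lambda^2-(a+c-k)\lambda+c\). By the Jury criterion, both roots lie in the open unit disk iff \(c<a\) and \(|a+c-k|<a+c\).
\begin{itemize}
\item \(c<a\) holds because \(\beta<1\) and \(B_*\succeq0\).
\item \(a+c-k<a+c\) holds because \(k>0\), using \(H_*\succ0\).
\item \(-(a+c)<a+c-k\), i.e. \(k<2(a+c)\), is the Rayleigh form of \(K\prec2A+2\beta M^{-1}\).
\end{itemize}
Dividing \(K\prec2A+2\beta M^{-1}\) by \(\eta\) gives exactly \cref{eq:local-stability-criterion}. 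So the criterion implies \(|\lambda|<1\) for every eigenvalue.

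\textbf{Necessity (the main obstacle).} Suppose the criterion fails. Then
\[
P(-1)=2A+2\beta M^{-1}-K
\]
has a nonpositive eigenvalue. For real \(\lambda\to-\infty\), \(P(\lambda)\sim\lambda^2A\succ0\). The smallest eigenvalue of the symmetric matrix \(P(\lambda)\) is continuous in real \(\lambda\). By the intermediate value theorem it therefore vanishes at some real \(\lambda\leq-1\). That \(\lambda\) is a root of \(\det P\), nonzero, hence an eigenvalue of the block with \(|\lambda|\geq1\), so Schur stability fails.

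I expect the delicate points to be the justification that the scalar coordinate decouples and the exact correspondence between block eigenvalues and pencil roots, including \(\lambda=0\). The Jury computation itself is routine.
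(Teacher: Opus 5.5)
Your proposal is correct and follows essentially the paper's proof. It uses the same linearization, in which scalar perturbations multiply the zero gradient. It uses the same quadratic matrix polynomial, up to congruence by \(M^{1/2}\), and the same scalar Rayleigh-quotient quadratic checked at \(\pm1\). Necessity goes the same way, through \(P(-1)\) and continuity of the smallest eigenvalue as \(\lambda\to-\infty\).

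The differences are cosmetic. You work in unscaled coordinates, eliminate \(p\) directly, and apply the Jury test where the paper treats real and nonreal roots separately.
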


The proof, in \cref{app:schur-proof}, reduces the linearized
parameter--momentum map to a matrix polynomial and then to a scalar quadratic
in the Rayleigh quotients \(b=\xi^*\widehat B_*\xi/(\xi^*\xi)\) and
\(h=\xi^*\widehat H_*\xi/(\xi^*\xi)\), where
\(\widehat B_*=M^{1/2}B_*M^{1/2}\), \(\widehat H_*=M^{1/2}H_*M^{1/2}\),
and \(\xi^*\) is the conjugate transpose of \(\xi\in\mathbb C^d\).

The unrelaxed scalar equation has a neutral direction at the stationary state;
the scalar is controlled separately in the local convergence argument below.

\begin{corollary}
\label{cor:local-step-interval}
Under the assumptions of \cref{thm:local-schur-stability}, define
\begin{equation}
  L_{\mathrm{eff}}
  =\lambda_{\max}\!\left(
  M^{1/2}(\kappa_*H_*-2B_*)M^{1/2}\right).
  \label{eq:effective-local-stiffness}
\end{equation}
If \(L_{\mathrm{eff}}>0\), the exact stability interval is
\begin{equation}
  0<\eta<\frac{2(1+\beta)}{L_{\mathrm{eff}}}.
  \label{eq:exact-local-step-interval}
\end{equation}
If \(L_{\mathrm{eff}}\leq0\), every finite \(\eta>0\) is admissible.
At the stationary point,
\begin{equation}
  B_1(\theta_*)=0,
  \qquad B_*=\alpha B_m(\theta_*),
  \qquad \rank B_m(\theta_*)\leq\min\{d,m-1\}.
  \label{eq:stationary-component-curvature}
\end{equation}
For a common \(\theta_*\), \(q_*\), \(M\), and \(\beta\), increasing
\(\alpha\) cannot shrink the stability interval relative to the aggregate
choice \(\alpha=0\).
\end{corollary}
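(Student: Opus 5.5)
The plan has three parts: derive the step interval from \cref{thm:local-schur-stability}, compute the stationary curvature directly, and then compare $\alpha$ values using the Loewner monotonicity of $B_*$.

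\textbf{Step interval.} Conjugating \cref{eq:local-stability-criterion} by $M^{1/2}$ preserves the Loewner order. The criterion is therefore equivalent to
\[
M^{1/2}(\kappa_*H_*-2B_*)M^{1/2}\prec \tfrac{2(1+\beta)}{\eta}I_d,
\]
that is, to $L_{\mathrm{eff}}<2(1+\beta)/\eta$. If $L_{\mathrm{eff}}>0$, this rearranges exactly to \cref{eq:exact-local-step-interval}. If $L_{\mathrm{eff}}\le0$, the left side is at most $0<2(1+\beta)/\eta$ for every $\eta>0$, so every finite step is admissible. One point needs checking: $B_*$, $\kappa_*$ and $H_*$ must not depend on $\eta$. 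This holds because they are evaluated at the fixed rest state, with $q_*$ regarded as given. I will state this explicitly so that the interval is read with $q_*$ held fixed.

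\textbf{Stationary curvature.} By definition, $B_1(\theta_*)=g_*g_*^{\mathsf T}/(2Q_*^2)$, and $g_*=\nabla F(\theta_*)=0$, so $B_1(\theta_*)=0$. It follows that $S_m(\theta_*)=B_m(\theta_*)$ and
\[
B_*=B_1+\alpha S_m=\alpha B_m(\theta_*).
\]
For the rank bound, $B_m=VV^{\mathsf T}$, where the columns of $V$ are $g_i/(\sqrt2 Q_i)$. Since $\sum_i g_i=0$, we have $V\,\bigl(\sqrt2\,Q_1,\dots,\sqrt2\,Q_m\bigr)^{\mathsf T}=\sum_i g_i=0$. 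This is a nonzero kernel vector of $V$, because every $Q_i>0$. Hence $\rank V\le m-1$, and trivially $\rank V\le d$, which gives $\rank B_m(\theta_*)\le\min\{d,m-1\}$.

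\textbf{Monotonicity in $\alpha$.} With $\theta_*$, $q_*$ (hence $\kappa_*$), $M$ and $\beta$ held fixed, $B_*=\alpha B_m(\theta_*)$ is nondecreasing in $\alpha$ in the Loewner order, since $B_m\succeq0$. Consequently $M^{1/2}(\kappa_*H_*-2B_*)M^{1/2}$ is nonincreasing in $\alpha$, so $L_{\mathrm{eff}}(\alpha)\le L_{\mathrm{eff}}(0)$ by monotonicity of $\lambda_{\max}$ under the Loewner order. I will then split into cases on the sign of $L_{\mathrm{eff}}(\alpha)$ to conclude that the admissible set $\{\eta>0:\eta L_{\mathrm{eff}}<2(1+\beta)\}$ for $\alpha$ contains the admissible set for $\alpha=0$. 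If $L_{\mathrm{eff}}(\alpha)\le0$, every $\eta>0$ is admissible for $\alpha$. If $0<L_{\mathrm{eff}}(\alpha)\le L_{\mathrm{eff}}(0)$, the bound $2(1+\beta)/L_{\mathrm{eff}}(\alpha)$ is at least $2(1+\beta)/L_{\mathrm{eff}}(0)$.

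\textbf{Main obstacle.} The only delicate point is interpretive: $q_*$ must be held common across $\alpha$, as the statement stipulates. In the unrelaxed scheme the rest scalar depends on the trajectory, so monotonicity of the interval can only be claimed with $q_*$ fixed. I will flag this rather than attempt to prove anything about how $q_*$ depends on $\alpha$.
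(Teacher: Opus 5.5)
Your proposal is correct and follows the paper's proof: you conjugate the criterion by $M^{1/2}$ to get $L_{\mathrm{eff}}<2(1+\beta)/\eta$, obtain $B_1(\theta_*)=0$ from $g_*=0$, derive the rank bound from $\sum_i g_i=0$, and get monotonicity of $L_{\mathrm{eff}}$ in $\alpha$ from $B_*=\alpha B_m(\theta_*)$ with $B_m\succeq 0$. Your explicit kernel vector $(\sqrt2 Q_1,\dots,\sqrt2 Q_m)^{\mathsf T}$ of $V$ and your remark that $q_*$ is held fixed make precise what the paper's short argument leaves implicit.
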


\begin{proof}
Apply a congruence by \(M^{1/2}\) to
\cref{eq:local-stability-criterion}. The total gradient vanishes at
\(\theta_*\), giving \(B_1(\theta_*)=0\). The component gradients sum
to zero and are therefore linearly dependent, which gives the rank bound.
Finally, \(B_m\succeq0\) makes \(L_{\mathrm{eff}}\) nonincreasing with
\(\alpha\).
\end{proof}

\begin{remark}
\label{rem:stationary-cancellation}
Gradient cancellation can leave nonzero component curvature after the
aggregate correction has vanished; if all component gradients vanish, so does
the component correction. Since \(H_*\succ0\), stability for every
\(\eta>0\) requires \(B_*\succ0\), which \cref{eq:stationary-component-curvature}
excludes once \(m-1<d\); a low-rank correction can nevertheless enlarge a
finite interval. The comparison in \cref{cor:local-step-interval} is between
component and aggregate pullbacks within the momentum update;
\cref{app:appended-comparison} records the corresponding comparison with the
frozen appended recurrence, whose stability interval is at least as large.
\end{remark}

\subsubsection{Local convergence under any scalar relaxation}
\label{sec:local-relaxed-convergence}

\begin{theorem}
\label{thm:local-relaxed-convergence}
Let the component energies be twice continuously differentiable near
\(\theta_*\), with positive shifted energies,
\(\nabla F(\theta_*)=0\), and \(H_*\succ0\). Fix
\(M\succ0\), \(\eta>0\), \(0\leq\beta<1\), and
\(0\leq\alpha\leq1\), and assume
\begin{equation}
  H_*\prec2B_\alpha(\theta_*)+
  \frac{2(1+\beta)}{\eta}M^{-1}.
  \label{eq:local-relaxed-stability-condition}
\end{equation}
Use the momentum update with \(M^n\equiv M\), the relaxation
\cref{eq:generic-relaxation} with any sequence \(\rho^n\in[0,1]\), and
\(q^0=Q(\theta^0)\). There exist \(\delta>0\), \(c_0,c_1<\infty\),
and \(r\in(0,1)\) such that, whenever
\(a_0=\norm{\theta^0-\theta_*}+\norm{p^0}<\delta\), the iterates remain
near \(\theta_*\) and
\begin{equation}
  \norm{\theta^n-\theta_*}+\norm{p^n}\leq c_0r^na_0.
  \label{eq:local-geometric-convergence}
\end{equation}
Moreover, \(q^n\to q_\infty>0\), with
\begin{equation}
  0\leq Q(\theta_*)-q_\infty\leq c_1a_0^2.
  \label{eq:local-scalar-limit}
\end{equation}
The constants are uniform over all such relaxation sequences, including
\(\rho^n\equiv1\).
\end{theorem}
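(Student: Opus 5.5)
The plan is to treat the relaxed iteration as a perturbation of the frozen linear map in \cref{thm:local-schur-stability} taken at exact tracking \(\kappa=1\). I will show that the scalar ratio \(\kappa^n=q^n/Q(\theta^n)\) stays within \(O(a_0^2)\) of one along the whole trajectory, and I will close the argument by induction on \(n\).

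\textbf{Step 1 (frozen contraction).} Write \(x=(\theta-\theta_*,p)\). Since \(\nabla F(\theta_*)=0\), we have \(g(\theta)=H_*(\theta-\theta_*)+O(\norm{\theta-\theta_*}^2)\) and \(B_\alpha(\theta)=B_*+O(\norm{\theta-\theta_*})\). Hence one step of \cref{eq:abstract-momentum-step,eq:covector-map} with ratio \(\kappa\) reads
\[
x^{n+1}=A(\kappa^n)x^n+O(\norm{x^n}^2).
\]
The affine-in-\(\kappa\) matrix \(A(\kappa)\) is the parameter--momentum block of the linearization. Hypothesis \cref{eq:local-relaxed-stability-condition} is exactly \cref{eq:local-stability-criterion} with \(\kappa_*=1\), so \(A(1)\) is Schur stable. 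I therefore fix a positive definite \(P\) solving the discrete Lyapunov inequality \(A(1)^{\mathsf T}PA(1)\preceq r_0^2P\) with \(r_0<1\), and I work in the norm \(\norm{\cdot}_P\). This avoids any need for joint stability of a switched family \(A(\kappa)\): I only use that \(\norm{A(\kappa)-A(1)}\le C\lvert\kappa-1\rvert\). Then
\[
\norm{x^{n+1}}_P\le\bigl(r_0+C\lvert 1-\kappa^n\rvert+C\norm{x^n}\bigr)\norm{x^n}_P .
\]

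\textbf{Step 2 (scalar control, uniform in \(\rho^n\)).} The relaxation \cref{eq:generic-relaxation} gives \(q^n\le Q(\theta^n)\), so \(\kappa^n\le1\). For the lower bound, \(d^{n+1}=(g^n)^{\mathsf T}\Delta\theta^{n+1}/(2Q^n)=O(\norm{x^n}^2)\), because both \(g^n\) and \(\Delta\theta^{n+1}\) are \(O(\norm{x^n})\). The dissipation \(\mathcal D^n\) of \cref{eq:monotone-mobility-dissipation} is also \(O(\norm{x^n}^2+\norm{x^{n+1}}^2)\). By \cref{eq:generic-relaxation}, the accepted \(q^{n+1}\) is either equal to \(Q(\theta^{n+1})\) or lies between \(\bar q^{n+1}\) and \(\bigl((\bar q^{n+1})^2+\mathcal D^n\bigr)^{1/2}\). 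In both cases \(\lvert q^{n+1}-q^n\rvert\le C(\norm{x^n}^2+\norm{x^{n+1}}^2)\), or else \(\kappa^{n+1}=1\). Combining this with \(q^0=Q(\theta^0)\) and \(Q(\theta)-Q(\theta_*)=O(\norm{\theta-\theta_*}^2)\) gives
\[
1-\kappa^n\le C\sum_{k\le n}\norm{x^k}^2 .
\]
Only these two-sided bounds enter the argument, which is why the constants do not depend on the sequence \(\{\rho^n\}\). Note that \(\rho^n\equiv1\) is included.

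\textbf{Step 3 (bootstrap).} Suppose \(\norm{x^k}\le c_0r^ka_0\) for all \(k<n\), with \(r=(1+r_0)/2\). Step 2 then gives \(1-\kappa^k\le Ca_0^2/(1-r^2)\). Choosing \(\delta\) small makes the per-step factor in Step 1 at most \(r\), which propagates the bound to step \(n\). This proves \cref{eq:local-geometric-convergence}, with \(c_0\) depending on the equivalence constants of \(\norm{\cdot}_P\).

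\textbf{Step 4 (scalar limit).} The increments \(\lvert q^{n+1}-q^n\rvert\) are summable, being \(O(r^{2n}a_0^2)\), so \(q^n\) is Cauchy and converges to some \(q_\infty\). From \(q^n\le Q(\theta^n)\to Q(\theta_*)\) we get \(q_\infty\le Q(\theta_*)\). Summing the increments from \(q^0=Q(\theta^0)\ge Q(\theta_*)-Ca_0^2\) gives \(q_\infty\ge Q(\theta_*)-c_1a_0^2\). This is \cref{eq:local-scalar-limit}, and it shows \(q_\infty>0\) for small \(\delta\).

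The main obstacle is Step 2. One must show that the relaxation never lets \(\kappa^n\) drift by more than \(O(a_0^2)\). The delicate case is when the minimum in \cref{eq:generic-relaxation} selects \(Q(\theta^{n+1})\) below \(q^n\); there the step has to be handled through the identity \(\kappa^{n+1}=1\) rather than through an increment bound on \(q\). I would also check that the linearization of the \(M^{-1}\)-weighted solve is affine in \(\kappa\) with uniformly bounded coefficients, so that the perturbation estimate in Step 1 holds.
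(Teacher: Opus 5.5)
Your plan follows the paper's proof closely. Both use a single Lyapunov norm \(\norm{\cdot}_P\) for the Schur-stable linearization at exact tracking, a contraction that is uniform over a neighborhood of the scalar, a quadratic scalar-drift bound that does not involve \(\rho^n\), and induction. The paper parameterizes by \(q\) near \(Q(\theta_*)\) rather than by \(\kappa^n\), which makes no difference.

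There is one point to fix, and it is the cap branch you single out as the main obstacle. That case is not delicate, and you cannot set it aside through \(\kappa^{n+1}=1\). Step 4 asserts that every increment \(|q^{n+1}-q^n|\) is \(O(r^{2n}a_0^2)\), so you need the increment bound on cap steps too. It follows at once from the invariant you already have:
\begin{itemize}
\item since \(q^n\le Q(\theta^n)\), a cap step gives \(q^{n+1}-q^n\ge Q(\theta^{n+1})-Q(\theta^n)\ge -c\norm{x^n}^2\);
\item \(q^{n+1}\) never exceeds the budget value, which lies within \(c\norm{x^n}^2\) of \(q^n\).
\end{itemize}
This is how the paper obtains \(|(q^{n+1})^2-(q^n)^2|\le c\norm{x^n}^2\) in either branch.

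A smaller point concerns regularity. With only \(C^2\) component energies, the remainder in Step 1 is \(o(\norm{x^n})\) rather than \(O(\norm{x^n}^2)\); the paper handles this through continuity of \(D_zT\). Your bootstrap still works with that weaker remainder.
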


The proof in \cref{app:local-convergence-proof} combines contraction of the
parameter--momentum map with a quadratic bound on the scalar drift.

\subsection{Conditional stationarity}
\label{sec:conditional-stationarity}

For a varying mobility, a global stationarity criterion follows from
relaxation parameters bounded away from one and from a ratio \(q^n/Q^n\)
bounded below by a positive constant. Assume
\begin{equation}
  0\leq\rho^n\leq\bar\rho<1.
  \label{eq:strict-relaxation-budget}
\end{equation}
The proofs of the following estimates are given in
\cref{app:global-stationarity-proofs}.

\begin{theorem}
\label{thm:conditional-stationarity}
Consider the relaxed momentum update
\cref{eq:covector-map,eq:abstract-momentum-step,eq:generic-relaxation} with a
fixed \(0\leq\beta<1\).  Suppose there exist constants
\begin{equation}
  0<\mu_{\min}\leq\mu_{\max}<\infty,
  \qquad \underline\kappa>0,
  \label{eq:stationarity-constants}
\end{equation}
such that, for every \(n\),
\begin{equation}
  \mu_{\min}I_d\preceq M^n\preceq\mu_{\max}I_d,
  \qquad
  \frac{q^n}{Q^n}\geq\underline\kappa,
  \qquad
  0\prec M^{n+1}\preceq M^n.
  \label{eq:stationarity-assumptions}
\end{equation}
Assume also that \(\sup_n\norm{B_\alpha^n}<\infty\) and
\cref{eq:strict-relaxation-budget} holds.  Then
\begin{equation}
  p^n\to0,
  \qquad
  p^{n+1}-\beta p^n\to0,
  \qquad
  \Delta\theta^{n+1}\to0,
  \qquad
  g^n\to0.
  \label{eq:stationarity-limits}
\end{equation}
Consequently, every accumulation point of \(\{\theta^n\}\) is stationary.
\end{theorem}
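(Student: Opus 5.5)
The plan is to telescope the relaxed energy bound of \cref{prop:generic-relaxation}, combined with the exact dissipation identity of \cref{thm:monotone-mobility-dissipation}. Each term of \(\mathcal D^n\) then becomes summable, and the balance equation \cref{eq:abstract-momentum-balance} converts the vanishing of the increments into vanishing of the gradient.

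\textbf{Step 1: summability.} By \cref{thm:monotone-mobility-dissipation}, applied with the provisional scalar \(\bar q^{n+1}\), we have \(\Hcal_{\mathrm{mom}}^n-\bar\Hcal_{\mathrm{mom}}^{n+1}=\mathcal D^n\). Here \(\mathcal D^n\) is the sum of five nonnegative terms. \Cref{prop:generic-relaxation} with \cref{eq:strict-relaxation-budget} then gives \(\Hcal^{n+1}\le\Hcal^n-(1-\bar\rho)\mathcal D^n\). Since \(\Hcal^n\ge0\), summing over \(n\) yields
\[
\sum_n\mathcal D^n\le\frac{\Hcal^0}{1-\bar\rho}<\infty .
\]
In particular, \(\sum_n\tfrac{1-\beta^2}{2\eta}\norm{p^n}_{M^{n+1}}^2<\infty\) and \(\sum_n\norm{p^{n+1}-\beta p^n}_{M^{n+1}}^2<\infty\). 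Using \(M^{n+1}\succeq\mu_{\min}I_d\) and \(\beta<1\), we get \(p^n\to0\) and \(p^{n+1}-\beta p^n\to0\) in the Euclidean norm. Then \(\Delta\theta^{n+1}=M^{n+1}p^{n+1}\) satisfies \(\norm{\Delta\theta^{n+1}}\le\mu_{\max}\norm{p^{n+1}}\to0\).

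\textbf{Step 2: gradient.} Rearranging \cref{eq:abstract-momentum-balance} gives
\[
\frac{q^n}{Q^n}g^n=-\frac{p^{n+1}-\beta p^n}{\eta}-B_\alpha^n\Delta\theta^{n+1}.
\]
The first term tends to zero by Step 1. The second is bounded by \(\sup_n\norm{B_\alpha^n}\,\norm{\Delta\theta^{n+1}}\to0\). Dividing by \(q^n/Q^n\ge\underline\kappa>0\) gives \(g^n\to0\).

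\textbf{Step 3: accumulation points.} If \(\theta^{n_k}\to\bar\theta\), continuity of \(\nabla F\), which holds because each \(E_i\) is \(C^1\), gives \(\nabla F(\bar\theta)=\lim g^{n_k}=0\).

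The only delicate point is Step 1. One must check that \cref{thm:monotone-mobility-dissipation} holds verbatim with \(\bar q^{n+1}\) in place of \(q^{n+1}\): the derivation uses only \(\bar q^{n+1}=q^n+d^{n+1}\), so it does. One must also check that \(\Hcal^n\ge0\), which holds because \(q^n\ge0\) and \(M^n\succ0\). The conversion of the seminorm \(\norm{\cdot}_{M^{n+1}}\) to the Euclidean norm is exactly where the uniform lower bound \(\mu_{\min}\) enters, and \(\mu_{\max}\) enters only when passing from \(p^{n+1}\) to \(\Delta\theta^{n+1}\).
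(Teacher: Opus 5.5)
Your proposal is correct and follows the paper's proof essentially step for step. You sum the relaxed energy bound to get \(\sum_n\mathcal D^n\le\Hcal^0/(1-\bar\rho)\), extract \(p^n\to0\) and \(p^{n+1}-\beta p^n\to0\) from the damping and inertial-residual terms via \(\mu_{\min}\), pass to \(\Delta\theta^{n+1}\) via \(\mu_{\max}\), and obtain \(g^n\to0\) from the balance equation using the bounded pullback and the ratio lower bound. You also check explicitly that the energy identity holds with the provisional scalar \(\bar q^{n+1}\), which the paper leaves implicit.
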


\begin{remark}
If \(E_i\geq0\), \(C_i>0\), and
\(\sup_n\norm{g_i^n}\leq\Gamma_i\), then
\begin{equation}
  \norm{B_\alpha^n}
  \leq \norm{B_m^n}
  \leq \sum_{i=1}^m\frac{\Gamma_i^2}{2C_i}.
  \label{eq:bounded-pullback-condition}
\end{equation}
For the diagonal realization \cref{eq:adaptive-mobility},
\(\sup_n\norm{g^n}_\infty\leq\Gamma_\infty\) gives
\begin{equation}
  \frac{1}{\Gamma_\infty+\epsilon}I_d\preceq M^n\preceq\frac1\epsilon I_d.
  \label{eq:bounded-mobility-condition}
\end{equation}
The lower bound on \(q^n/Q^n\) must still be assumed separately.
\end{remark}

\begin{corollary}
\label{cor:direct-stationarity}
Consider the relaxed direct update
\cref{eq:direct-family,eq:generic-relaxation}.  Suppose
\cref{eq:strict-relaxation-budget} holds,
\(\sup_n\norm{B_\alpha^n}<\infty\), and there exists \(\underline\kappa>0\)
such that \(q^n/Q^n\geq\underline\kappa\) for every \(n\).  Then
\(\Delta\theta^{n+1}\to0\), \(g^n\to0\), and every accumulation point is
stationary.
\end{corollary}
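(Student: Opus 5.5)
The plan is to specialize the argument for \cref{thm:conditional-stationarity} to the direct update. The direct update is the momentum-free case of the same energy structure: its Lyapunov functional is \(\Hcal_{\mathrm{dir}}^n=(q^n)^2\), and by \cref{thm:base-dissipation} its unrelaxed dissipation is
\[
\mathcal D^n=\frac{\norm{\Delta\theta^{n+1}}^2}{\eta}+(d^{n+1})^2+\alpha\norm{\Delta\theta^{n+1}}_{S_m^n}^2\geq0 .
\]
Hence \cref{prop:generic-relaxation} applies with the pair \((\Hcal_{\mathrm{dir}}^n,\bar\Hcal_{\mathrm{dir}}^{n+1})\) and gives \((q^{n+1})^2\leq(q^n)^2-(1-\rho^n)\mathcal D^n\).

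\textbf{Summability.} Under \cref{eq:strict-relaxation-budget}, telescoping this inequality and using \((q^n)^2\geq0\) gives
\[
\sum_n\mathcal D^n\leq\frac{(q^0)^2}{1-\bar\rho}=\frac{Q(\theta^0)^2}{1-\bar\rho}<\infty .
\]
Since \(\mathcal D^n\geq\norm{\Delta\theta^{n+1}}^2/\eta\), we get \(\sum_n\norm{\Delta\theta^{n+1}}^2<\infty\), and therefore \(\Delta\theta^{n+1}\to0\).

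\textbf{Recovering the gradient.} From \cref{eq:base-pbsav-step},
\[
\frac{q^n}{Q^n}g^n=-\left(\eta^{-1}I_d+B_\alpha^n\right)\Delta\theta^{n+1},
\]
so
\[
\underline\kappa\norm{g^n}\leq\frac{q^n}{Q^n}\norm{g^n}\leq\left(\eta^{-1}+\sup_k\norm{B_\alpha^k}\right)\norm{\Delta\theta^{n+1}} .
\]
The right-hand side tends to zero, hence \(g^n\to0\). Summing the squares also gives \(\sum_n\norm{g^n}^2<\infty\), which is the estimate the best-iterate corollary would use.

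\textbf{Accumulation points.} If \(\theta^{n_k}\to\bar\theta\), then \(g^{n_k}=\nabla F(\theta^{n_k})\to\nabla F(\bar\theta)\) because each \(E_i\) is continuously differentiable. Since \(g^n\to0\), this limit is zero, so \(\bar\theta\) is stationary.

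The only point needing care is that \cref{prop:generic-relaxation} is stated for both updates with the direct pair \((q^n)^2,(\bar q^{n+1})^2\), and I must check that the relaxation keeps \(q^n\geq0\) so that the telescoped sum is bounded by \((q^0)^2\). This holds by construction in \cref{eq:generic-relaxation}. Beyond that, unlike the momentum case there is no inertial residual to control, so I expect no real obstacle.
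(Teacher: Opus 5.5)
Your proposal is correct and follows the paper's proof essentially step for step. You obtain summability of \(\mathcal D^n\) from \cref{eq:base-dissipation} and the relaxation bound under \(\rho^n\leq\bar\rho<1\), deduce \(\Delta\theta^{n+1}\to0\), recover \(g^n\to0\) from \cref{eq:base-pbsav-step} with the bounded pullback and the ratio lower bound, and conclude stationarity of accumulation points by continuity of \(\nabla F\); the sign check on \(q^n\) you flag is not needed, since the telescoping only uses \((q^N)^2\geq0\).
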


\begin{corollary}
\label{cor:best-iterate-stationarity}
Let
\(
L_B=\sup_n\norm{B_\alpha^n}
\).
Under the assumptions of \cref{thm:conditional-stationarity} or of
\cref{cor:direct-stationarity}, there is a constant \(K\), given
explicitly in \cref{app:global-stationarity-proofs}, depending for the
momentum update only on \(\eta\), \(\beta\), \(\underline\kappa\), the
mobility bounds, and \(L_B\), and for the direct update only on \(\eta\),
\(\underline\kappa\), and \(L_B\).  For either update, let \(\Hcal^0\) be its
initial modified energy.  Then, for every \(N\geq1\),
\begin{equation}
  \sum_{n=0}^{N-1}\norm{g^n}^2
  \leq \frac{K\Hcal^0}{1-\bar\rho},
  \qquad
  \min_{0\leq n<N}\norm{g^n}^2
  \leq \frac{K\Hcal^0}{(1-\bar\rho)N}.
  \label{eq:best-iterate-stationarity}
\end{equation}
Consequently, \(\sum_{n=0}^\infty\norm{g^n}^2<\infty\), and the best
gradient norm among the first \(N\) iterates is \(O(N^{-1/2})\).
\end{corollary}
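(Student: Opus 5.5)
The plan is to sum the per-step relaxed energy bounds to get a summable dissipation sequence, and then to bound \(\norm{g^n}^2\) by a constant multiple of \(\mathcal D^n\).

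\textbf{Summed dissipation.} By \cref{prop:generic-relaxation} with \cref{eq:strict-relaxation-budget},
\[
  \Hcal^{n+1}\leq\Hcal^n-(1-\bar\rho)\mathcal D^n .
\]
Since \(\Hcal^N\geq0\), summing over \(n=0,\ldots,N-1\) gives
\[
  \sum_{n=0}^{N-1}\mathcal D^n\leq\frac{\Hcal^0}{1-\bar\rho}.
\]
It therefore suffices to prove a per-step inequality \(\norm{g^n}^2\leq K\mathcal D^n\) with \(K\) depending only on the listed quantities. The best-iterate bound then follows from \(N\min_{n<N}\norm{g^n}^2\leq\sum_{n<N}\norm{g^n}^2\).

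\textbf{Per-step bound, momentum update.} Rearranging \cref{eq:abstract-momentum-balance} gives
\[
  \frac{q^n}{Q^n}g^n=-\frac{p^{n+1}-\beta p^n}{\eta}-B_\alpha^n\Delta\theta^{n+1}.
\]
I will use \(q^n/Q^n\geq\underline\kappa\), the identity \(p^{n+1}-\beta p^n=(1-\beta)p^n+(p^{n+1}-p^n)\) if needed, and \(\Delta\theta^{n+1}=M^{n+1}p^{n+1}\). Together these give
\[
  \underline\kappa\norm{g^n}\leq\eta^{-1}\norm{p^{n+1}-\beta p^n}+L_B\mu_{\max}\norm{p^{n+1}}.
\]
Next I convert everything to \(M^{n+1}\)-norms using \(\norm{x}^2\leq\mu_{\min}^{-1}\norm{x}_{M^{n+1}}^2\). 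The term \(\norm{p^{n+1}-\beta p^n}^2_{M^{n+1}}\) appears directly in \(\mathcal D^n\) with weight \(1/(2\eta)\). For \(p^{n+1}\), I write \(p^{n+1}=(p^{n+1}-\beta p^n)+\beta p^n\). The first piece is controlled as above. The second is controlled by the momentum-damping term \(\tfrac{1-\beta^2}{2\eta}\norm{p^n}^2_{M^{n+1}}\), which is also in \(\mathcal D^n\); because \(\beta<1\) this weight is strictly positive.

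Combining with \((a+b+c)^2\leq3(a^2+b^2+c^2)\) gives the explicit constant
\[
  K=\frac{3}{\underline\kappa^2\mu_{\min}}\Bigl(\frac{2}{\eta}+2\eta L_B^2\mu_{\max}^2+\frac{2\eta\beta^2L_B^2\mu_{\max}^2}{1-\beta^2}\Bigr),
\]
or something of that form. It depends only on \(\eta,\beta,\underline\kappa,\mu_{\min},\mu_{\max},L_B\), as the statement requires.

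\textbf{Per-step bound, direct update.} From \cref{eq:base-pbsav-step},
\[
  \frac{q^n}{Q^n}g^n=-\bigl(\eta^{-1}I_d+B_\alpha^n\bigr)\Delta\theta^{n+1},
\]
so \(\underline\kappa\norm{g^n}\leq(\eta^{-1}+L_B)\norm{\Delta\theta^{n+1}}\). By \cref{thm:base-dissipation}, \(\mathcal D^n\geq\norm{\Delta\theta^{n+1}}^2/\eta\). Hence
\[
  K=\frac{\eta(\eta^{-1}+L_B)^2}{\underline\kappa^2}
\]
works, and it depends only on \(\eta,\underline\kappa,L_B\).

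\textbf{Main obstacle.} The only delicate point is the momentum case. There, \(g^n\) is tied to \(p^{n+1}\) through \(B_\alpha^n\Delta\theta^{n+1}\), and \(\mathcal D^n\) does not contain \(\norm{p^{n+1}}^2\) directly. The decomposition \(p^{n+1}=(p^{n+1}-\beta p^n)+\beta p^n\), together with the strictly positive damping weight \(1-\beta^2\), is what closes the estimate. The remaining work is bookkeeping: keeping the constants consistent with the uniform mobility bounds \(\mu_{\min}I_d\preceq M^{n+1}\preceq\mu_{\max}I_d\).
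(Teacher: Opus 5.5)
Your proposal is correct and is essentially the paper's proof. You sum the relaxed bound to get \(\sum_{n<N}\mathcal D^n\leq\Hcal^0/(1-\bar\rho)\), solve the balance equation for \((q^n/Q^n)g^n\), split \(p^{n+1}=(p^{n+1}-\beta p^n)+\beta p^n\), and absorb the two pieces into the inertial-residual and damping terms of \(\mathcal D^n\). The direct case, including \(K=\eta(\eta^{-1}+L_B)^2/\underline\kappa^2\), matches the paper exactly.

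The only difference is cosmetic. You use the three-term inequality \((a+b+c)^2\leq3(a^2+b^2+c^2)\), whereas the paper groups \((\eta^{-1}+\mu_{\max}L_B)\norm{p^{n+1}-\beta p^n}\) and applies \((a+b)^2\leq2a^2+2b^2\), which gives \(C_{\mathrm{mom}}=\frac{4\eta}{\underline\kappa^2\mu_{\min}}\bigl[(\eta^{-1}+\mu_{\max}L_B)^2+\frac{\beta^2\mu_{\max}^2L_B^2}{1-\beta^2}\bigr]\). Your constant is equally valid and depends on the required quantities.
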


Full relaxation admits a different sufficient condition: the pullback must
cover the true objective remainder along each step, up to a fixed fraction
of the inertial dissipation. Under this condition, and under the same uniform
mobility and pullback bounds as \cref{thm:conditional-stationarity},
\cref{thm:coverage-full-relaxation} gives exact tracking and stationarity
without a separate lower bound on \(q^n/Q^n\).

\section{Numerical experiments}
\label{sec:experiments}

The experiments examine two aspects of component curvature: how accurately the
component pullback reproduces Newton or Gauss--Newton increments, and whether
resolving the objective into components improves practical optimization. The
quadratic and regression tests compare curvature matrices and the increments
they produce at shared states, as in \cref{prop:curvature-action}. The forward
Burgers experiment compares one and four components with matched learning rates
and momentum settings. Darcy operator learning and inverse Burgers
identification assess the complete optimizer of \cref{alg:pbsav} with its
selected learning rates.

All calculations use double precision and deterministic full batches.  The
neural-network experiments were performed with PyTorch 2.5.1 and CUDA 12.4 on
an NVIDIA A100 80GB GPU.  Within each paired comparison, all methods start from
the same parameters and use the same training, validation, and test data.
Hyperparameters and model choices use only the training or validation
quantities specified below; test errors are reported after selection.  Curves
show all recorded iterates without smoothing.  The representative Darcy field
is selected by a criterion independent of optimizer output; the inverse Burgers
panel uses the median PB--SAV seed by final viscosity error, so that no
best-case run is displayed.  The Darcy and Burgers results use five paired
seeds and report the seed mean with a band of one sample standard deviation.

Heavy-ball gradient descent applies the weight-decay coefficient as coupled
\(L^2\) regularization and AdamW applies it as decoupled weight decay
\cite{loshchilov2019adamw}; because these mechanisms are not identical, each
comparison reports the objective that is common to all methods.

\subsection{Quadratic problem: Newton increments without full Hessian recovery}
\label{sec:quadratic-experiment}

We use the quadratic benchmark with two curvature classes from
\cite{zhang2026pbsav} and add a comparison of full Hessian error with the
error in the damped Newton increment:
\begin{equation}
  F(\phi)=\sum_{j=1}^{100}a_j\phi_j^2,
  \qquad
  a_{2k-1}=1,
  \qquad
  a_{2k}=10^{-2}.
  \label{eq:quadratic-problem}
\end{equation}
Its Hessian is \(H=2\diag(a_1,\ldots,a_{100})\), and the minimizer is
\(\phi_*=0\).  The two distinct values of \(a_j\)
form two curvature classes.  We test whether a
low-rank component split can reproduce the damped Newton increment even when it
does not recover the full matrix \(H\).

All methods start from \(\phi^0=\bm 1\in\R^{100}\) and run for 300 updates.
PB--SAV uses
\(\eta=10\), and the shift associated with each coordinate energy is
\(10^{-12}\).  The one-component formulation aggregates all coordinates.  The
two-component formulation separates the two curvature classes.  The
100-component formulation assigns one energy component to each coordinate.
The intermediate component counts \(4,8,16,32,64\) form a nested refinement
and are used in the fixed-state geometry comparison.
All PB--SAV runs in this subsection use the direct update with \(\alpha=1\).

Gradient descent uses the analytically optimal constant rate
\(2/(\lambda_{\min}(H)+\lambda_{\max}(H))=0.990099\).
The reference direction is the exact damped Newton increment
\begin{equation}
  \Delta_{\rm DN}
  =-(\eta^{-1}I_d+H)^{-1}\nabla F(\phi).
  \label{eq:damped-newton-step}
\end{equation}
The matrix error is measured in the relative Frobenius norm, and the step
error is measured relative to \(\Delta_{\rm DN}\) in the Euclidean norm.  Both
are evaluated at the shared initial state.

\begin{figure}[t]
  \centering
  \includegraphics[width=0.78\textwidth]{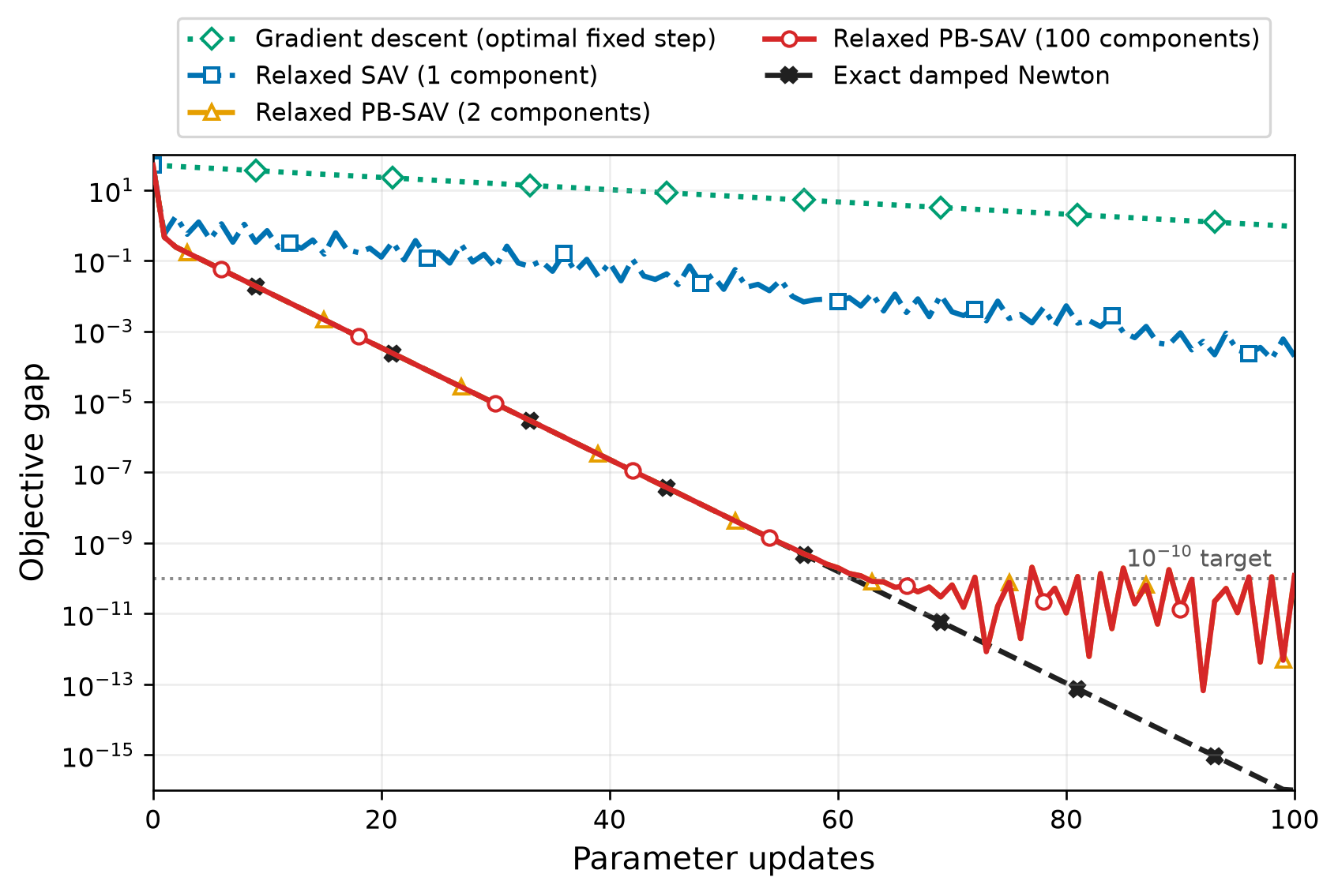}
  \caption{Convergence on the quadratic problem during the first 100 updates.
  All methods start from the same state.}
  \label{fig:quadratic-convergence}
\end{figure}

\begin{figure}[t]
  \centering
  \includegraphics[width=\textwidth]{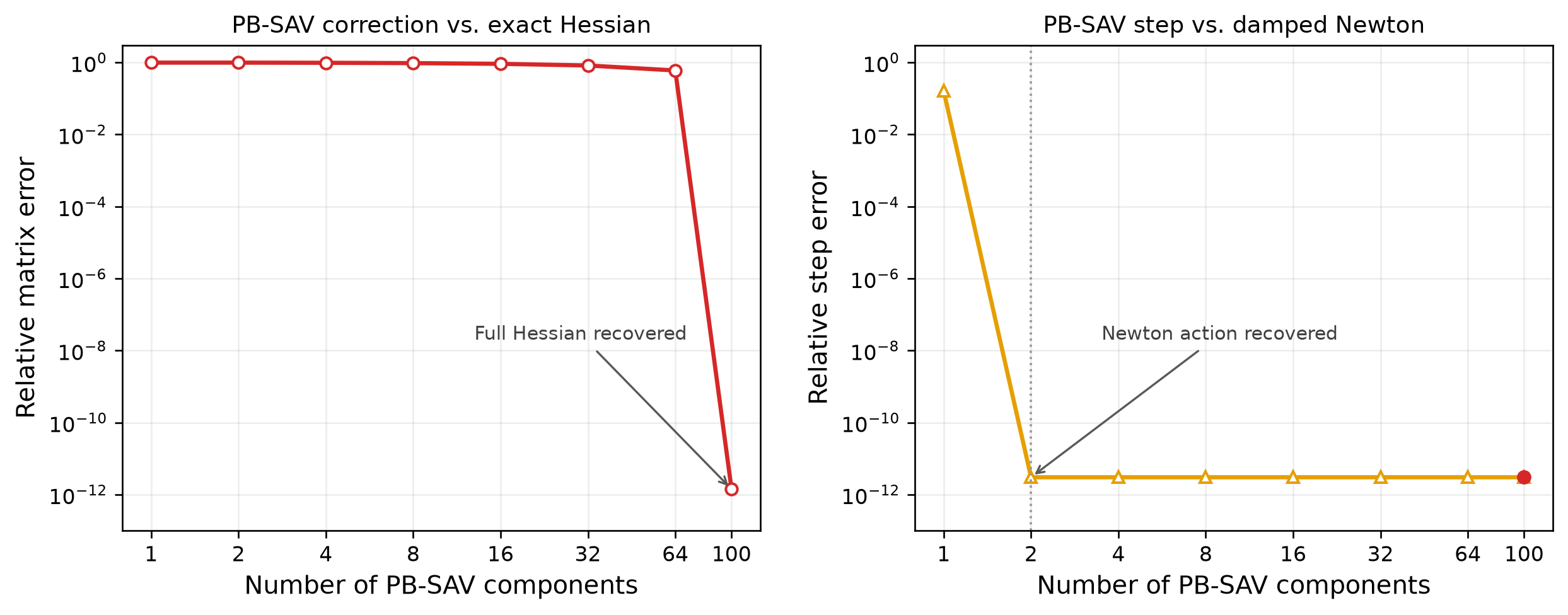}
  \caption{Fixed-state geometry for the quadratic problem.  Left: relative
  error between the PB--SAV correction and the exact Hessian.  Right: relative
  error between the PB--SAV increment and the damped Newton increment.}
  \label{fig:quadratic-recovery}
\end{figure}

\begin{table}[t]
\centering
\caption{Quadratic comparison after 300 updates.  The Hessian and step errors
are evaluated at the common initial state.  The final gap is
\(F(\phi^{300})-F(\phi_*)\), with \(F(\phi_*)=0\).  A dash indicates
that the quantity is not applicable or that the target was not reached within
300 updates.  The columns report different quantities and are not ranked, so no
entry is bold.}
\label{tab:quadratic-results}
\footnotesize
\setlength{\tabcolsep}{2pt}
\begin{tabular}{@{}lcccc@{}}
\toprule
Method & \shortstack{Hessian\\error} & \shortstack{Newton-step\\error} &
\shortstack{Updates to\\\(10^{-10}\)} & \shortstack{Final\\gap} \\
\midrule
Gradient descent (optimal step) & -- & -- & -- & \(3.102\times10^{-4}\) \\
Relaxed SAV (1 comp.) & \(9.900\times10^{-1}\) & \(1.627\times10^{-1}\) & -- & \(1.873\times10^{-10}\) \\
Relaxed PB--SAV (2 comp.) & \(9.899\times10^{-1}\) & \(3.022\times10^{-12}\) & 63 & \(7.879\times10^{-11}\) \\
Relaxed PB--SAV (100 comp.) & \(1.414\times10^{-12}\) & \(3.022\times10^{-12}\) & 63 & \(9.623\times10^{-11}\) \\
Exact damped Newton & -- & 0 & 62 & \(1.550\times10^{-48}\) \\
\bottomrule
\end{tabular}
\end{table}

The fixed-state comparison in \cref{fig:quadratic-recovery} separates the
error in the matrix from the error in the increment it produces.  The
two-component correction has a relative Hessian error of
\(9.899\times10^{-1}\), but its step error is only \(3.022\times10^{-12}\).  The 100-component correction
recovers the full Hessian to \(1.414\times10^{-12}\), yet it produces the same
step accuracy as the two-component formulation.  The trajectories in
\cref{fig:quadratic-convergence,tab:quadratic-results} follow the same
pattern.  The two PB--SAV variants reach the \(10^{-10}\) target in 63
updates, compared with 62 updates for exact damped Newton.  One-component SAV
converges more slowly, and optimally tuned fixed-step gradient descent remains
above the target after 300 updates.

At \(\phi^0\), the gradient lies in the two-dimensional subspace spanned by the
indicator vectors of the two curvature classes.  The two-component pullback
agrees with the damped Hessian on this subspace, but not outside it.

\subsection{Nonlinear regression: Gauss--Newton recovery}
\label{sec:regression-experiment}

The second example tests the pullback interpretation in a nonlinear
least-squares problem.  We fit \(N=30\) noisy observations of
\(f_*(x)=\sin(2\pi x)\) for \(x\in[0,1]\),
where the observation noise is independent Gaussian noise with standard
deviation \(0.1\).  The model is a one-hidden-layer tanh network,
\begin{equation}
  f_\theta(x)=\sum_{j=1}^{50}c_j\tanh(a_jx+b_j)+b_0,
  \label{eq:regression-network}
\end{equation}
with 151 trainable parameters.  The model is therefore overparameterized
relative to the 30 observations.  The objective is
\begin{equation}
  F(\theta)=\frac{1}{2N}\sum_{i=1}^{N}
  \left(f_\theta(x_i)-y_i\right)^2.
  \label{eq:regression-objective}
\end{equation}
For a pointwise least-squares decomposition,
\cref{eq:residual-pullback-curvature} relates the PB--SAV pullback to the
Gauss--Newton matrix, with scaled residuals
\(r_i=(f_\theta(x_i)-y_i)/\sqrt{N}\). This example tests the relation at
both the matrix and trajectory levels.

Nested random partitions use \(1,2,4,8,16\), and 30 components.  The final
partition assigns one squared residual to each observation.  PB--SAV and
damped Gauss--Newton use the same adaptive damping controller, with
\(10^{-6}\leq\eta\leq100\), and the same Armijo acceptance test
\cite{armijo1966} on the true objective. Gradient descent uses the rate
\(0.22\), selected by a paired
200-update training-objective pilot.  Adam is piloted over
\(10^{-3}\), \(3\times10^{-3}\), \(10^{-2}\), and \(3\times10^{-2}\),
and selects \(3\times10^{-2}\).  Each reported run contains 600 accepted
updates.  Prediction is measured by the mean-squared error (MSE) against the
noise-free target.
The PB--SAV runs use the direct update with \(\alpha=1\).

\begin{figure}[t]
  \centering
  \includegraphics[width=0.80\textwidth]{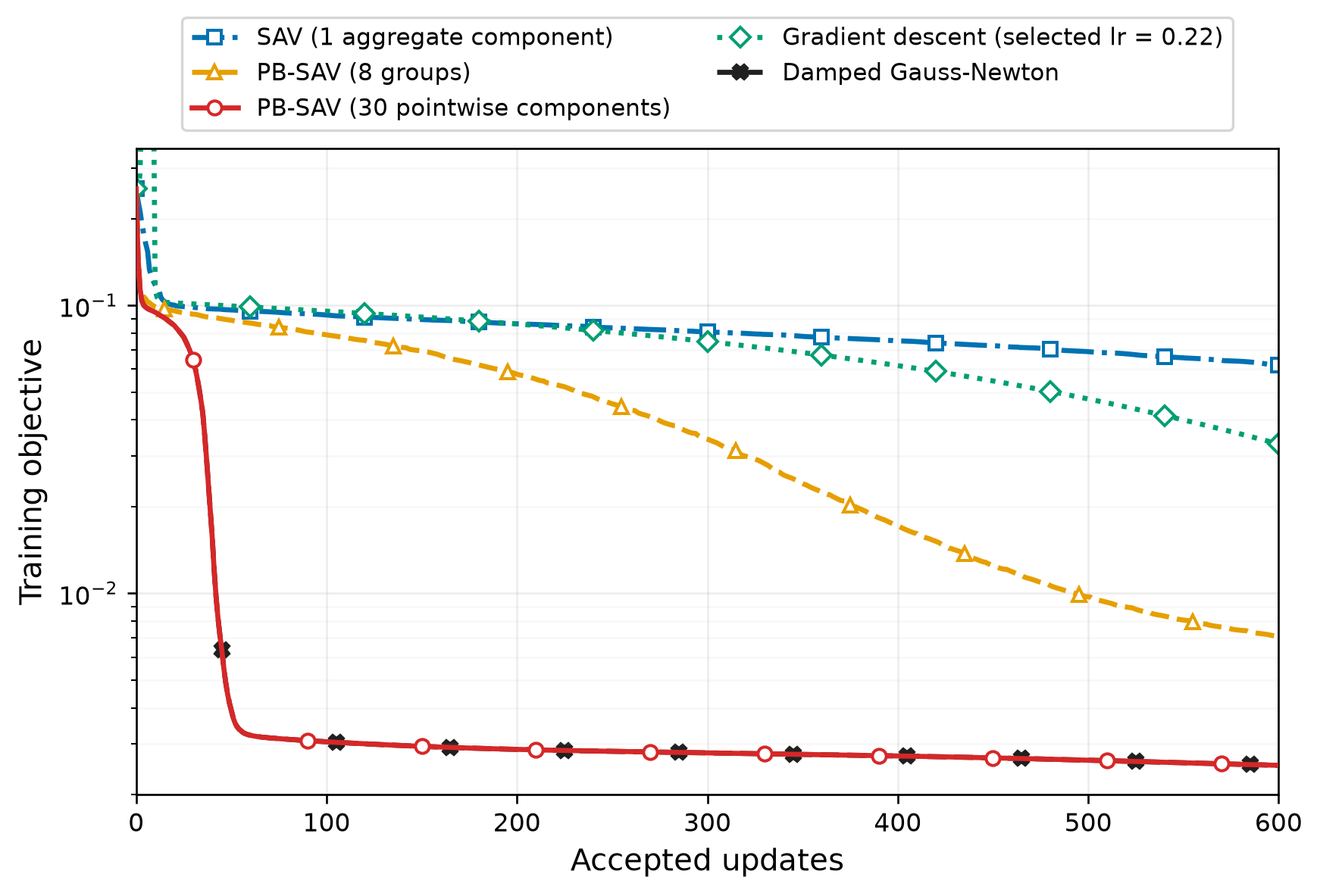}
  \caption{Training objective for one, eight, and 30 PB--SAV components,
  together with gradient descent and damped Gauss--Newton.  The pointwise
  PB--SAV and Gauss--Newton trajectories are indistinguishable at plotting
  resolution.}
  \label{fig:regression-convergence}
\end{figure}

\begin{figure}[t]
  \centering
  \includegraphics[width=\textwidth]{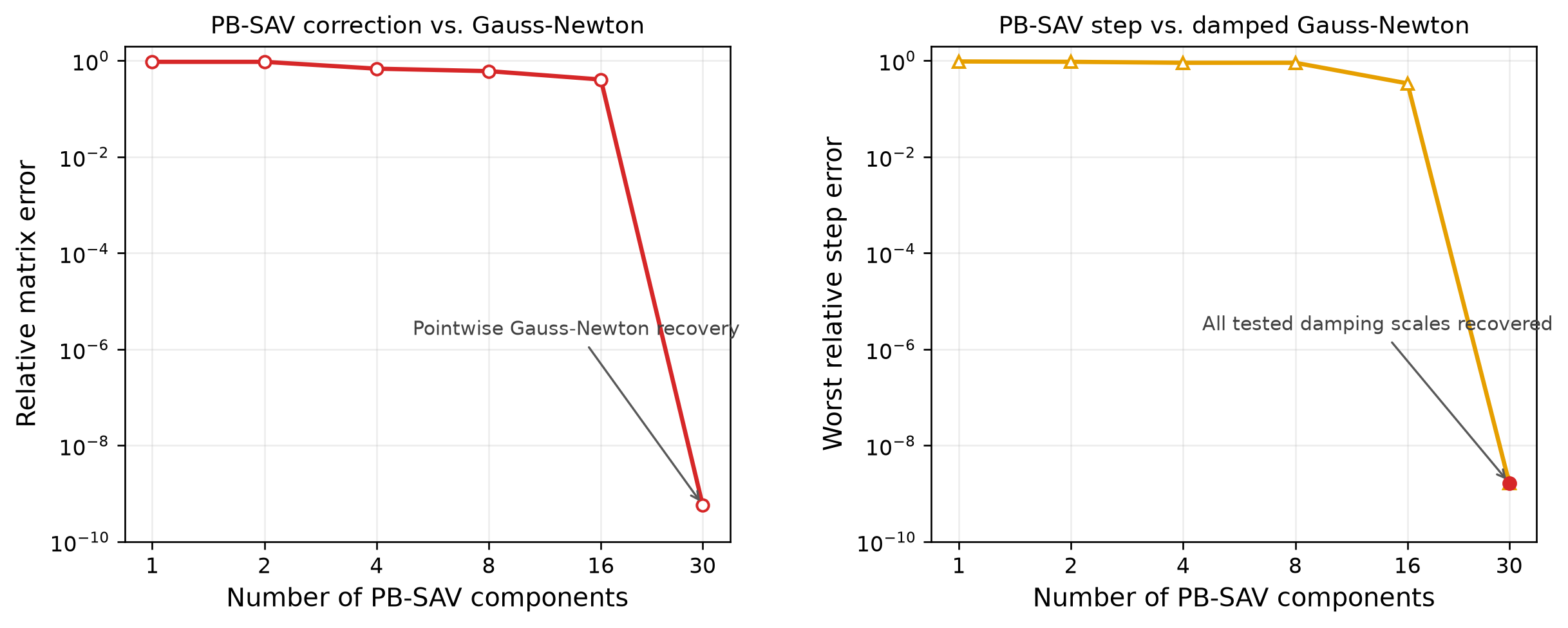}
  \caption{Fixed-state geometry for nonlinear regression.  Left: relative
  error between the PB--SAV correction and the Gauss--Newton matrix.  Right:
  the largest relative step error over \(\eta=0.1,1,10,100\).  Both errors
  collapse when each observation is represented by a separate component.}
  \label{fig:regression-recovery}
\end{figure}

\begin{table}[t]
\centering
\caption{Nonlinear-regression results after 600 accepted updates.  No entry is
bold: PB--SAV and damped Gauss--Newton agree in every column to the displayed
precision.}
\label{tab:regression-results}
\footnotesize
\setlength{\tabcolsep}{3pt}
\begin{tabular}{@{}lcccc@{}}
\toprule
Method & Updates & \shortstack{Final\\objective} &
\shortstack{Best noise-free\\test MSE} & \shortstack{Final\\\(\|\nabla F\|\)} \\
\midrule
PB--SAV (30 pointwise comp.) & 600 & \(2.525\times10^{-3}\) & \(2.804\times10^{-3}\) & \(1.032\times10^{-4}\) \\
Damped Gauss--Newton & 600 & \(2.525\times10^{-3}\) & \(2.804\times10^{-3}\) & \(1.032\times10^{-4}\) \\
Gradient descent (rate 0.22) & 600 & \(3.301\times10^{-2}\) & \(6.040\times10^{-2}\) & \(2.468\times10^{-2}\) \\
Adam & 600 & \(2.985\times10^{-3}\) & \(3.113\times10^{-3}\) & \(6.695\times10^{-4}\) \\
\bottomrule
\end{tabular}
\end{table}

\Cref{fig:regression-convergence} shows the trajectories and
\cref{fig:regression-recovery} the fixed-state errors.
At the common initial state, the pointwise PB--SAV correction differs from the
Gauss--Newton matrix by only \(5.82\times10^{-10}\).  The relative step errors
for \(\eta=0.1,1,10,100\) are \(2.31\times10^{-10}\),
\(3.71\times10^{-10}\), \(9.72\times10^{-10}\), and \(1.67\times10^{-9}\),
and the direction cosine equals one to displayed precision in every case.
The effective correction rank is 17 rather than the nominal 30, consistent
with the rank of the sample Jacobian at the initial state.  Along the full
trajectory, pointwise PB--SAV and damped Gauss--Newton agree in final objective
to \(1.24\times10^{-11}\).  They also attain the same noise-free test MSE and final
gradient norm in \cref{tab:regression-results}.

The pointwise components expose the individual residual Jacobians, so the
PB--SAV correction reproduces the Gauss--Newton geometry up to the small shift
regularization. Coarser partitions aggregate these directions before forming
the pullback.

\subsection{Operator learning: two-dimensional Darcy flow}
\label{sec:darcy-experiment}

We next apply PB--SAV with separate components for the supervised data loss
and weight decay to operator learning and compare it with heavy-ball gradient descent and
AdamW. For each
permeability field \(a(x,y)\), the target solution satisfies
\begin{equation}
  -\nabla\!\cdot\!\left(a(x,y)\nabla u(x,y)\right)=1,
  \qquad (x,y)\in\Omega=(0,1)^2,
  \qquad u|_{\partial\Omega}=0.
  \label{eq:darcy-problem}
\end{equation}
The permeability is generated by thresholding a smoothed Gaussian random
field with correlation length \(0.12\), which produces the two values 3 and
12.  Finite-difference solutions are represented on a \(33\times33\) grid.
For each seed, the dataset contains 128 training functions and 64 independently
generated test functions.

We use a 51,521-parameter convolutional-branch deep operator network
(DeepONet) \cite{lu2021deeponet}. The branch encoder uses convolutional channel widths
\(8,16,32\), adaptive \(4\times4\) pooling, and a 64-dimensional projection.
The trunk is a three-hidden-layer tanh network of width 64 and maps spatial
coordinates to the same latent dimension.  Multiplication by
\(16x(1-x)y(1-y)\) enforces the homogeneous Dirichlet boundary condition exactly.

Let \(N_{\rm tr}=128\) and \(N_x=33^2\).  For training field \(a_j\), let
\(u_j(x_\ell)\) be the reference solution at grid point \(x_\ell\), and let
\(\widehat u_\theta(a_j;x_\ell)\) be the network prediction.  Define
\begin{align}
  E_{\rm data}(\theta)
  &=\frac{1}{2N_{\rm tr}N_x}
    \sum_{j=1}^{N_{\rm tr}}
    \sum_{\ell=1}^{N_x}
    \left(\widehat u_\theta(a_j;x_\ell)-u_j(x_\ell)\right)^2,\\
  E_{\rm wd}(\theta)
  &=\frac{\lambda}{2}\norm{\theta}^2,
  \qquad \lambda=3\times10^{-5},
\end{align}
PB--SAV uses the two-component objective
\begin{equation}
  F_{\rm Darcy}(\theta)=E_{\rm data}(\theta)+E_{\rm wd}(\theta).
  \label{eq:darcy-objective}
\end{equation}
The common training quantity reported here is \(E_{\rm data}\).
Generalization is measured by the relative \(L^2\) error on the held-out
operator dataset.

The selected learning rates are \(10^{-2}\), \(10^{-3}\), and
\(3\times10^{-3}\) for heavy-ball gradient descent, AdamW, and PB--SAV,
respectively. PB--SAV uses \(\beta=0.9\),
\(\beta_2=0.999\), \(\epsilon=10^{-8}\), \(\alpha=0.5\), and
\(\rho^n\equiv1\). These PB--SAV parameters are also used in the following Burgers experiments unless otherwise stated.  Each
method runs for 5,000 full-batch updates using the paired seeds
\(42, 101, 211, 307, 401\).
The displayed field is the fixed test sample with the greatest variation in
the reference solution.

\begin{figure}[t]
  \centering
  \includegraphics[width=\textwidth]{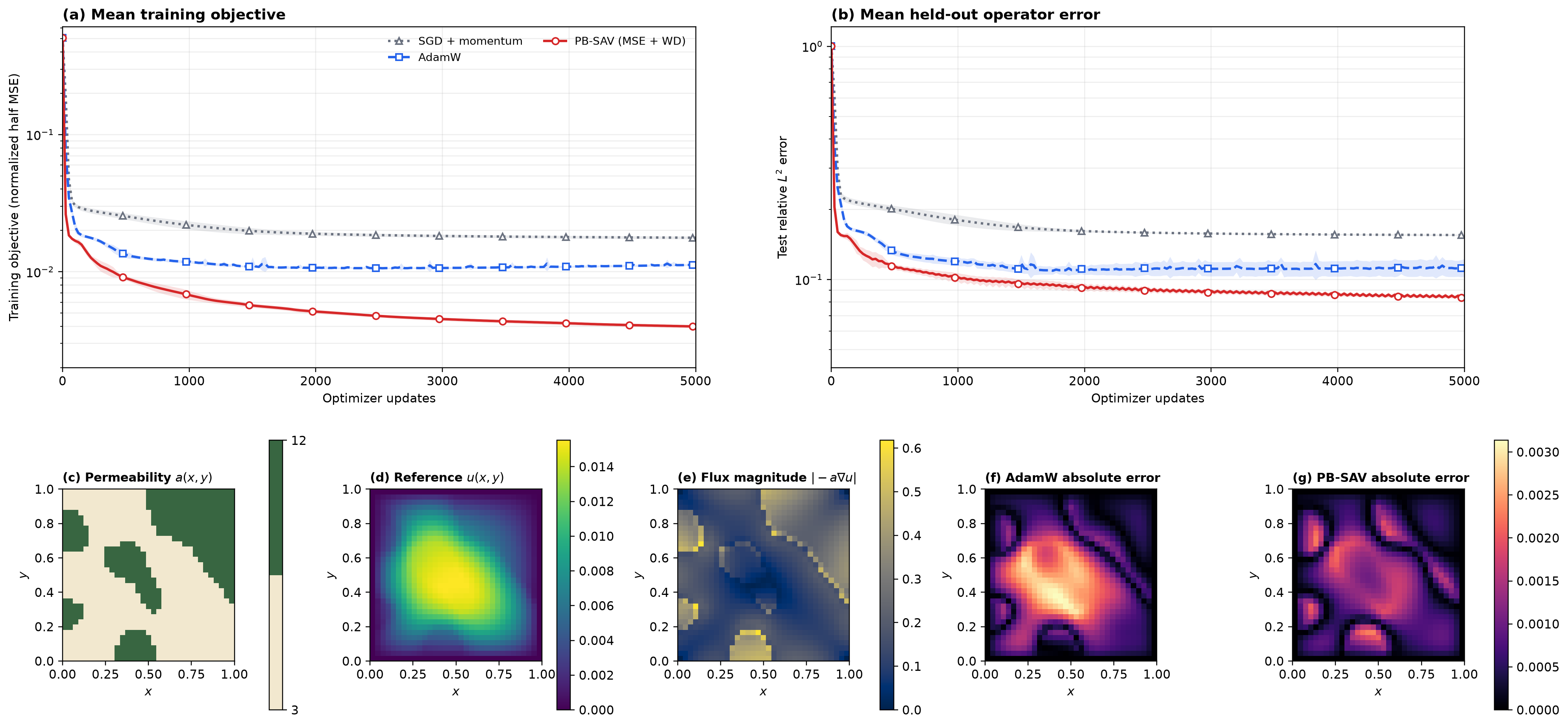}
  \caption{Five-seed Darcy DeepONet comparison.  (a) Mean common training data
  loss and (b) mean held-out relative \(L^2\) error; shaded regions show one
  sample standard deviation.  The legend entry ``SGD + momentum'' in (a) and
  (b) is the full-batch heavy-ball gradient descent of \cref{tab:darcy-results}.  The lower row shows a fixed test sample selected
  by the maximum variation of the reference field: (c) permeability, (d) reference
  solution, (e) flux magnitude, (f) AdamW absolute error, and (g) PB--SAV
  absolute error.}
  \label{fig:darcy-results}
\end{figure}

\begin{table}[t]
\centering
\caption{Darcy DeepONet results after 5,000 updates over five paired seeds.
Losses and errors are reported as mean \(\pm\) sample standard deviation.  Wall
time is the mean recorded time for the full training run.  Best mean training
and test values are bold; wall time is reported for reference and is not
ranked.}
\label{tab:darcy-results}
\scriptsize
\setlength{\tabcolsep}{3pt}
\begin{tabular}{@{}lccccc@{}}
\toprule
Method & Components & \(\eta\) & \shortstack{Train data\\loss} &
\shortstack{Test relative\\\(L^2\)} & \shortstack{Wall time\\(s)} \\
\midrule
\shortstack[l]{Heavy-ball\\gradient descent} & -- & \(10^{-2}\) & \(0.01771\pm0.00041\) & \(0.15510\pm0.00095\) & 37.4 \\
AdamW & -- & \(10^{-3}\) & \(0.01125\pm0.00024\) & \(0.11264\pm0.01001\) & 36.5 \\
\shortstack[l]{PB--SAV (data +\\weight decay)} & 2 & \(3\times10^{-3}\) & \(\mathbf{0.00399\pm0.00011}\) & \(\mathbf{0.08512\pm0.00225}\) & 74.5 \\
\bottomrule
\end{tabular}
\end{table}

PB--SAV attains the lowest training and test errors in
\cref{fig:darcy-results,tab:darcy-results}.  Relative to AdamW, it reduces the
mean training data loss by 64.5\% and the mean held-out error by 24.4\%.
Relative to heavy-ball gradient descent, the reductions are 77.5\% and
45.1\%, respectively.
The standard deviation of the PB--SAV test error is \(0.00225\), compared with
\(0.01001\) for AdamW.  In the lower row of \cref{fig:darcy-results}, the PB--SAV prediction has a
smaller error over most of the domain for the displayed test sample.

With two components, PB--SAV gives lower mean training and test errors than
the two baselines at approximately twice the recorded wall time of AdamW.
This comparison evaluates the complete optimizer. The following forward
Burgers experiment includes a one-component PB--SAV baseline to examine the
effect of the decomposition itself.

\subsection{Forward Burgers PINN: one versus four loss components}
\label{sec:burgers-forward-experiment}

The forward problem compares one aggregated component with four resolved
components at \(\alpha=1/2\), under the same momentum and adaptive scaling
rules. The inverse problem then adds sparse observations and an unknown
viscosity.

We consider the viscous Burgers equation
\begin{equation}
  u_t+uu_x-\nu u_{xx}=0,
  \qquad (x,t)\in[-1,1]\times[0,1],
  \qquad \nu=0.01,
  \label{eq:burgers-forward-pde}
\end{equation}
with initial and boundary conditions
\begin{equation}
  u(x,0)=-\sin(\pi x),
  \qquad u(-1,t)=u(1,t)=0.
  \label{eq:burgers-forward-data}
\end{equation}
The PINN is a tanh network with six hidden layers of width 64 and 21,057
trainable parameters. Every run uses 10,000 fixed scrambled Sobol collocation
points \cite{sobol1967,owen1997}, 256 initial points, 256 time points on each
boundary, and 10,000
independent validation collocation points.  The relative solution error is
evaluated against a grid-refined backward differentiation formula (BDF)
method-of-lines reference on a
\(256\times101\) space--time grid.

Let \(r_f=u_t+uu_x-\nu u_{xx}\).
For sampled values \(w_1,\ldots,w_N\), we use
\(\operatorname{MSE}(w)=N^{-1}\sum_{j=1}^N w_j^2\).
The common PINN task objective is
\begin{subequations}
\label{eq:burgers-forward-losses}
\begin{align}
  E_{\rm res}
  &=\frac12\operatorname{MSE}(r_f),\\
  E_{\rm bc}
  &=\frac14\left[
    \operatorname{MSE}(u(-1,t))+
    \operatorname{MSE}(u(1,t))\right],\\
  E_{\rm ic}
  &=\frac12\operatorname{MSE}(u(x,0)+\sin(\pi x)),\\
  F_{\rm task}
  &=E_{\rm res}+E_{\rm bc}+E_{\rm ic}.
\end{align}
\end{subequations}
All methods use the weight-decay coefficient \(\lambda=10^{-6}\).  For
PB--SAV, weight decay is represented by the explicit component
\begin{equation}
  F_{\rm reg}(\theta)
  =F_{\rm task}(\theta)+E_{\rm wd}(\theta),
  \qquad
  E_{\rm wd}(\theta)=\frac{\lambda}{2}\norm{\theta}^2.
  \label{eq:burgers-forward-regularized}
\end{equation}
We report \(F_{\rm task}\), which is common to all methods, as the primary
optimization quantity.

We compare heavy-ball gradient descent, AdamW, adaptive PB--SAV with momentum
using one
aggregate component, and the same PB--SAV method using the four components
\(E_{\rm res}\), \(E_{\rm bc}\), \(E_{\rm ic}\), and \(E_{\rm wd}\).
The total SAV shift is \(10^{-12}\).  One quarter is assigned to weight decay,
and the remainder is distributed among the three physical terms in proportion
to their sample counts.
A seed-0 pilot selects learning rates from the task objective only.  The
selected rates are \(10^{-2}\) for heavy-ball gradient descent and
\(10^{-3}\) for AdamW and both PB--SAV variants.  Reported runs use 10,000
updates at the paired seeds \(42, 2273, 2669, 5439, 8041\).

\begin{figure}[t]
  \centering
  \includegraphics[width=\textwidth]{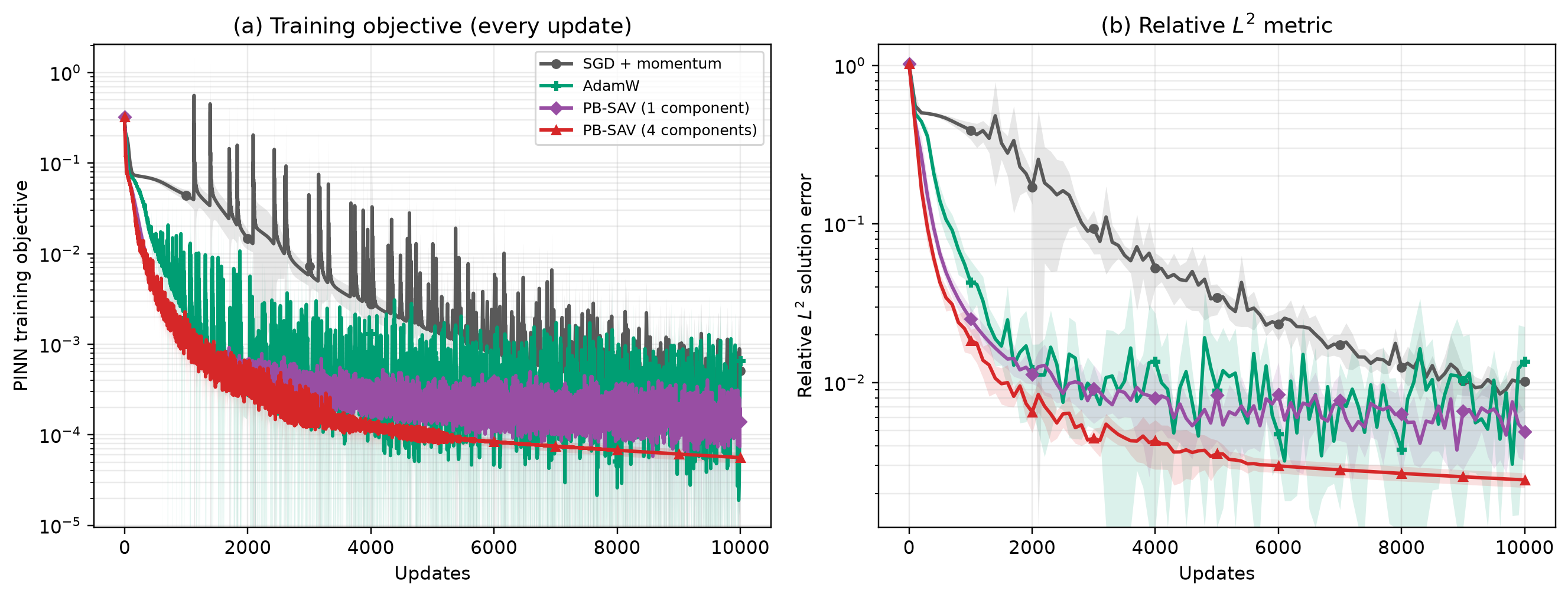}
  \caption{Five-seed comparison for the forward Burgers PINN.  (a) Mean common
  task objective, recorded at every update, and (b) mean relative \(L^2\)
  solution error.  Shaded regions show one sample standard deviation.}
  \label{fig:burgers-forward-results}
\end{figure}

\begin{table}[t]
\centering
\caption{Forward Burgers results after 10,000 updates over five paired seeds.
For each seed, the tail mean and coefficient of variation (CV) are computed
over the final 2,000 updates; the table reports their mean \(\pm\) sample
standard deviation across seeds.  Here
\(\mathrm{CV}=100s_{\rm tail}/\bar F_{\rm tail}\).  Best mean values are
bold.}
\label{tab:burgers-forward-results}
\footnotesize
\setlength{\tabcolsep}{3pt}
\begin{tabular}{lccc}
\toprule
Method & \(\eta\) & \shortstack{Tail task\\objective} &
\shortstack{Tail objective\\CV} \\
\midrule
Heavy-ball GD & 0.01 & \((5.095\pm0.660)\times10^{-4}\) & \(130.2\%\pm22.0\%\) \\
AdamW & 0.001 & \((3.181\pm0.450)\times10^{-4}\) & \(161.5\%\pm22.0\%\) \\
PB--SAV (1 component) & 0.001 & \((1.741\pm0.430)\times10^{-4}\) & \(58.9\%\pm31.0\%\) \\
PB--SAV (4 components) & 0.001 & \(\mathbf{(6.154\pm0.840)\times10^{-5}}\) & \(\mathbf{5.4\%\pm0.3\%}\) \\
\bottomrule
\end{tabular}

\medskip
\begin{tabular}{lc}
\toprule
Method & Final relative \(L^2\) error \\
\midrule
Heavy-ball GD & \((1.018\pm0.330)\times10^{-2}\) \\
AdamW & \((1.356\pm0.890)\times10^{-2}\) \\
PB--SAV (1 component) & \((4.893\pm1.700)\times10^{-3}\) \\
PB--SAV (4 components) & \(\mathbf{(2.435\pm0.260)\times10^{-3}}\) \\
\bottomrule
\end{tabular}
\end{table}

The mean wall times were \(316.6\pm8.4\), \(328.8\pm6.1\),
\(491.8\pm6.5\), and \(496.9\pm16.0\) seconds for heavy-ball gradient
descent, AdamW, one-component PB--SAV, and four-component PB--SAV,
respectively.

The four-component PB--SAV method attains the lowest task objective, the
lowest solution error, and the smallest late-stage oscillation in
\cref{fig:burgers-forward-results,tab:burgers-forward-results}.  Relative to
one-component PB--SAV, it reduces the mean tail objective by 64.7\%, the mean
final solution error by 50.2\%, and the mean tail CV from 58.9\% to 5.4\%.
Relative to AdamW, the reductions in task objective and solution error are
80.7\% and 82.0\%, respectively.  Relative to heavy-ball gradient descent,
they are 87.9\%
and 76.1\%.

Splitting the objective into four components costs 1.0\% of the recorded
wall time over the one-component run, once the common PINN forward graph has
been evaluated; the one- and four-component PB--SAV runs are respectively
49.6\% and 51.1\% slower than AdamW.

Both runs form the same component gradients.  The one-component run supplies a
single aggregated component, so its curvature gap vanishes and its correction
is the aggregate \(B_1^n\) for every \(\alpha\); the four-component run has
\(m=4\) and \(\alpha=1/2\), so \cref{eq:mixed-pullback} gives
\(B_\alpha^n=\tfrac12(B_1^n+B_m^n)\).  The two runs use the same architecture,
samples, mobility update rule, momentum parameters, and learning rate.
Keeping the residual, boundary, initial, and regularization directions
separate produces a smoother late optimization trajectory and a more accurate
PDE solution in these runs.

\subsection{Inverse Burgers PINN: viscosity identification}
\label{sec:burgers-inverse-experiment}

The final experiment uses the same Burgers equation and prescribed conditions,
but treats the viscosity as unknown.  The reference value is
\(\nu_*=0.01\), and optimization starts from \(\nu^0=0.02\).  We optimize
\(\xi=\log\nu\), which enforces \(\nu>0\).  Let \(\theta_{\rm net}\in\R^{21057}\) denote the
weights of the PINN with six hidden layers of width 64.  The full optimization
state is \(z=(\theta_{\rm net},\xi)\in\R^{21058}\), with \(\nu=e^\xi\).

Every run uses 10,000 fixed collocation points, 256 initial points, 256 points
on each boundary, and 100 noiseless interior observations sampled from the
validated reference solution.  An independent set of 10,000 collocation
points is used for validation.  The relative solution error is evaluated on
the same \(256\times101\) grid used in the forward problem.

The residual, boundary, and initial energies are defined as in
\cref{eq:burgers-forward-losses}, with the trainable viscosity inserted into
the residual.  Let \((x_j^{\rm obs},t_j^{\rm obs})\) be the observation sites
and \(y_j\) the corresponding reference values.  The observation loss is
\begin{equation}
  E_{\rm data}
  =\frac{1}{200}\sum_{j=1}^{100}
  \left(u_{\theta_{\rm net}}(x_j^{\rm obs},t_j^{\rm obs})-y_j\right)^2.
  \label{eq:burgers-data-loss}
\end{equation}
The common inverse-task objective and the PB--SAV regularized objective are
\begin{equation}
  F_{\rm inv}
  =E_{\rm res}+E_{\rm bc}+E_{\rm ic}+E_{\rm data},
  \qquad
  F_{\rm inv}^{\lambda}
  =F_{\rm inv}+\frac{\lambda}{2}\norm{\theta_{\rm net}}^2.
  \label{eq:burgers-inverse-objective}
\end{equation}
The regularizer acts only on the network parameters.  The viscosity parameter
is identified through the physics and observation terms.

The one-component PB--SAV method aggregates all task and regularization terms.
The four-component formulation uses \(E_{\rm res}\),
\(E_{\rm bc}+E_{\rm ic}\), \(E_{\rm data}\), and \(E_{\rm wd}\).
The boundary and initial terms are grouped as prescribed-condition losses,
while the residual, observation, and regularization directions remain
separate.  Component shifts are additive under aggregation.  Of the total
shift \(10^{-12}\), 25\% is assigned to regularization.

The pilot described below selects the weight-decay coefficient
\(\lambda=3\times10^{-7}\).  We report \(F_{\rm inv}\), which is common to all
methods.

A separate 500-update pilot on seed 0 selects learning rates using the tail
training objective with a penalty for late objective increases.  The selected
rates are \(10^{-2}\), \(10^{-3}\), \(3\times10^{-3}\), and \(10^{-2}\) for
heavy-ball gradient descent, AdamW, one-component PB--SAV, and four-component
PB--SAV, respectively.  Reported runs use 10,000 updates at the same five seeds as the
forward experiment.

\begin{figure}[t]
  \centering
  \includegraphics[width=\textwidth]{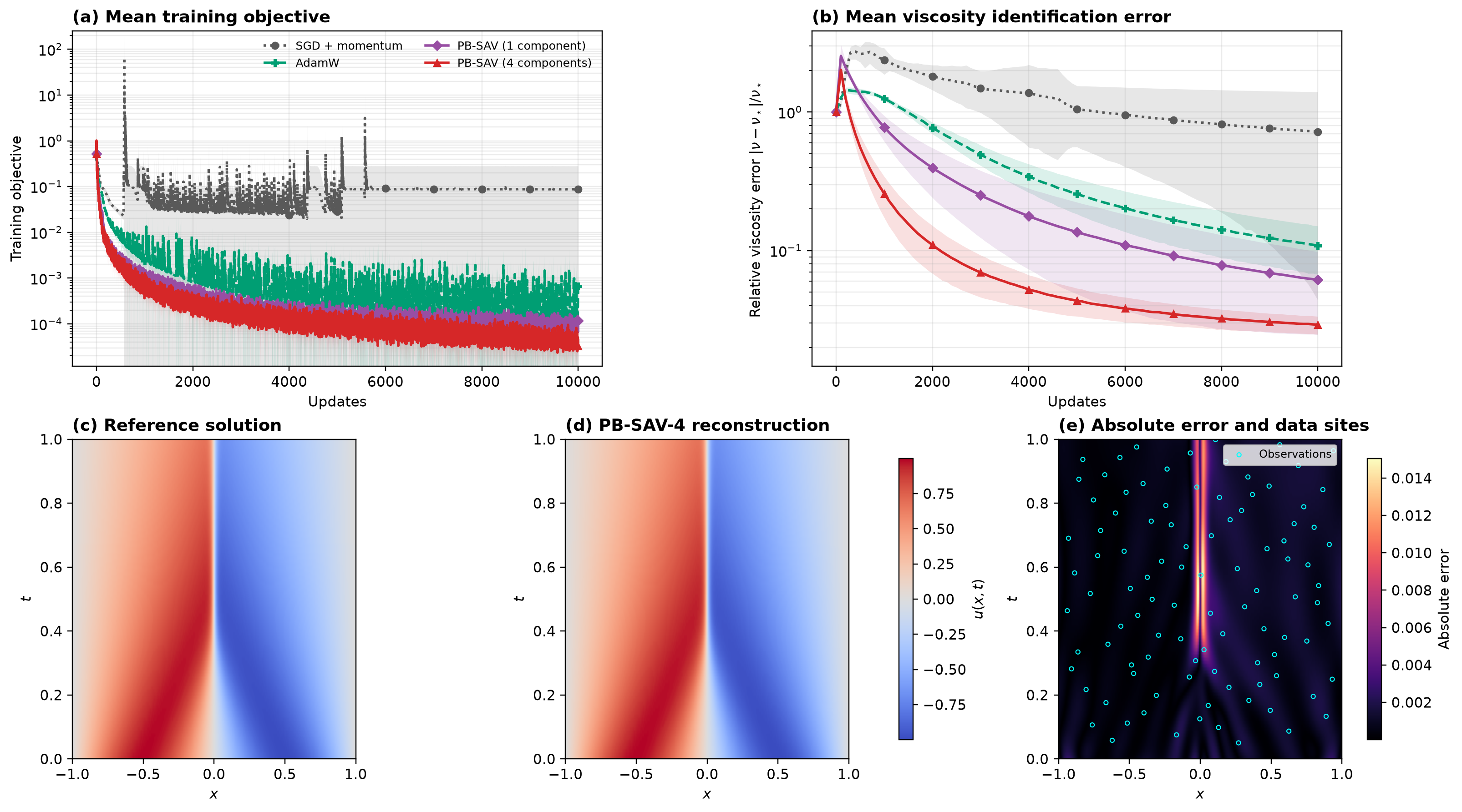}
  \caption{Five-seed comparison for inverse Burgers identification.  (a) Mean
  common task objective and (b) mean relative viscosity error; shaded regions
  show one sample standard deviation.  The lower row uses seed 42, the median
  four-component PB--SAV seed according to final viscosity error: (c) reference
  solution, (d) four-component PB--SAV reconstruction, and (e) absolute error
  with the 100 observation
  locations.}
  \label{fig:burgers-inverse-results}
\end{figure}

\begin{table}[t]
\centering
\caption{Inverse Burgers results after 10,000 updates over five paired seeds.
Values are mean \(\pm\) sample standard deviation.  For each seed, the tail
task objective is averaged over the final 2,000 updates of \(F_{\rm inv}\).
Best mean values are bold.}
\label{tab:burgers-inverse-results}
\footnotesize
\setlength{\tabcolsep}{3pt}
\begin{tabular}{lcc}
\toprule
Method & \(\eta\) & \shortstack{Tail task\\objective} \\
\midrule
Heavy-ball GD & 0.01 & \((0.878\pm1.900)\times10^{-1}\) \\
AdamW & 0.001 & \((3.515\pm0.330)\times10^{-4}\) \\
PB--SAV (1 component) & 0.003 & \((9.507\pm2.000)\times10^{-5}\) \\
PB--SAV (4 components) & 0.01 & \(\mathbf{(4.816\pm2.000)\times10^{-5}}\) \\
\bottomrule
\end{tabular}

\medskip
\begin{tabular}{lcc}
\toprule
Method & \shortstack{Final viscosity\\relative error} &
\shortstack{Final relative\\\(L^2\) error} \\
\midrule
Heavy-ball GD & \((7.213\pm6.800)\times10^{-1}\) & \((2.293\pm4.300)\times10^{-1}\) \\
AdamW & \((1.088\pm0.420)\times10^{-1}\) & \((1.440\pm0.650)\times10^{-2}\) \\
PB--SAV (1 component) & \((6.141\pm3.700)\times10^{-2}\) & \((7.476\pm3.300)\times10^{-3}\) \\
PB--SAV (4 components) & \(\mathbf{(2.914\pm0.430)\times10^{-2}}\) & \(\mathbf{(3.356\pm0.310)\times10^{-3}}\) \\
\bottomrule
\end{tabular}
\end{table}

The mean wall times were \(404.8\pm9.2\), \(408.2\pm8.2\),
\(567.1\pm33.0\), and \(583.6\pm35.0\) seconds for heavy-ball gradient
descent, AdamW, one-component PB--SAV, and four-component PB--SAV,
respectively.  All four methods use \(\lambda=3\times10^{-7}\).

Four-component PB--SAV gives the best mean result for the training objective,
viscosity identification, and solution reconstruction in
\cref{fig:burgers-inverse-results,tab:burgers-inverse-results}.  Relative to
one-component PB--SAV at its own selected learning rate, it reduces the mean
tail objective by 49.3\%, the mean viscosity error by 52.5\%, and the mean
solution error by 55.1\%.  Its solution-error standard deviation is
approximately 10.6 times smaller, and its viscosity-error standard deviation
is approximately 8.6 times smaller.
Relative to AdamW, the mean reductions in these three quantities are 86.3\%,
73.2\%, and 76.7\%, respectively.  The four-component method is 2.9\% slower
than one-component PB--SAV in the recorded wall time and 43.0\% slower than
AdamW.

Heavy-ball gradient descent converges to a near-zero field on seed 2669,
which accounts for the large mean and variance in its aggregate row.  AdamW
approaches the viscosity on all five seeds, but its task objective exhibits
large late spikes.

The representative reconstruction in \cref{fig:burgers-inverse-results}
tracks the reference solution throughout most of the space--time domain.  The
largest remaining error occurs near the steep central layer, where the
solution is most sensitive to viscosity. The four-component optimizer attains
lower mean errors at its selected learning rate with a small additional cost
over one-component PB--SAV; the forward experiment compares the pullbacks at a
common learning rate.

\FloatBarrier

\section{Conclusions}
\label{sec:conclusion}

We placed the PB--SAV component correction inside an optimizer with momentum
and an adaptive mobility, applying it to the gradient and the stored momentum
in a single implicit solve. The update dissipates an exact modified energy for
every mobility that is nonincreasing in the Loewner order, and its local
stability at a stationary point is governed by the Hessian minus twice the
correction. The aggregate correction vanishes at such a point while the
component correction need not, so resolving the objective into components can
extend the admissible step sizes. Each iteration adds one dense solve of order
at most the number of components.

The quadratic and regression tests show that the correction can reproduce a
reference increment while its error as a matrix remains large. In the forward
Burgers comparison, four components reduce the mean tail objective by 64.7\%
and the final solution error by 50.2\% relative to one component at the same
learning rate and momentum settings. How to choose the components when the
objective is not already split, and how to do so without one gradient
evaluation per component, remain open.

\section*{Declaration of competing interest}
The authors declare no known competing financial interests or personal
relationships that could have appeared to influence the work reported in this
paper.

\section*{Data and code availability}
Code and data will be made available in a public repository on publication.

\section*{Acknowledgements}
We gratefully acknowledge support from the National Science Foundation (DMS-2533878, DMS-2053746, DMS-2134209, ECCS-2328241, CBET-2347401 and OAC-2311848), and U.S.~Department of Energy (DOE) Office of Science Advanced Scientific Computing Research program under the "Uncertainty Quantification for Multifidelity Operator Learning (MOLUcQ)" project (Project No. 81739), DE-SC0023161, the SciDAC LEADS Institute, and DOE–Fusion Energy Science, under grant number: DE-SC0024583.

\appendix
\renewcommand{\thesubsection}{\Alph{section}.\arabic{subsection}}
\renewcommand{\thetheorem}{\Alph{section}.\arabic{theorem}}
\aliascntresetthe{proposition}
\aliascntresetthe{corollary}
\section{Convergence proofs}
\label[appendix]{app:convergence-proofs}

\subsection{Proof of the modified energy law}
\label[appendix]{app:energy-law-proof}

\begin{proof}[Proof of \cref{thm:monotone-mobility-dissipation}]
Take the inner product of \cref{eq:abstract-momentum-balance} with
\(\Delta\theta^{n+1}=M^{n+1}p^{n+1}\).  Weighted polarization gives
\begin{align}
&\frac{(p^{n+1})^{\mathsf T}M^{n+1}
  (p^{n+1}-\beta p^n)}{\eta}
\nonumber\\
&\quad=\frac{1}{2\eta}\left(
  \norm{p^{n+1}}_{M^{n+1}}^2
  -\beta^2\norm{p^n}_{M^{n+1}}^2
  +\norm{p^{n+1}-\beta p^n}_{M^{n+1}}^2\right).
  \label{eq:weighted-polarization}
\end{align}
Insert and subtract \(\norm{p^n}_{M^n}^2/(2\eta)\) to obtain
\begin{align*}
&\frac{\norm{p^{n+1}}_{M^{n+1}}^2
  -\beta^2\norm{p^n}_{M^{n+1}}^2}{2\eta}\\
&\quad=
\frac{\norm{p^{n+1}}_{M^{n+1}}^2-\norm{p^n}_{M^n}^2}{2\eta}
+\frac{\norm{p^n}_{M^n-M^{n+1}}^2}{2\eta}
+\frac{1-\beta^2}{2\eta}\norm{p^n}_{M^{n+1}}^2.
\end{align*}
Substitute the scalar and pullback identities in
\cref{eq:common-identities}.  Collecting the scalar and kinetic energy
changes yields \cref{eq:monotone-mobility-dissipation}.  Each remaining
term is nonnegative because \(S_m^n\succeq0\),
\(M^n-M^{n+1}\succeq0\), and \(0\leq\beta<1\).
\end{proof}

\subsection{Proof of the local stability criterion}
\label[appendix]{app:schur-proof}

\begin{proof}[Proof of \cref{thm:local-schur-stability}]
Write \(\widehat H_*=M^{1/2}H_*M^{1/2}\) and
\(\widehat B_*=M^{1/2}B_*M^{1/2}\). For the linearized variables
\(x^n=M^{-1/2}(\theta^n-\theta_*)\) and \(w^n=M^{1/2}p^n\),
\begin{equation}
  (I_d+\eta\widehat B_*)w^{n+1}
  =\beta w^n-\eta\kappa_*\widehat H_*x^n,
  \qquad x^{n+1}=x^n+w^{n+1}.
  \label{eq:linearized-momentum-map}
\end{equation}
Derivatives of the inverse coefficient multiply a zero right-hand side,
and variations of \(q/Q\) multiply the zero gradient. The invertible change
of state from \((x^n,w^n)\) to \((x^n,x^n-w^n)\) gives the matrix polynomial
\begin{equation}
\begin{aligned}
  \mathcal P(z)
  ={}&z^2(I_d+\eta\widehat B_*)\\
  &-z\bigl((1+\beta)I_d+\eta\widehat B_*
                     -\eta\kappa_*\widehat H_*\bigr)+\beta I_d.
\end{aligned}
\label{eq:local-characteristic-polynomial}
\end{equation}
Its determinant vanishes exactly at the eigenvalues of the state map,
including when \(\beta=0\).

If \(\mathcal P(z)\xi=0\) with \(\xi\in\mathbb C^d\setminus\{0\}\),
multiplication by \(\xi^*\) gives the scalar quadratic
\begin{equation}
  (1+\eta b)z^2
  -(1+\beta+\eta b-\eta\kappa_*h)z+\beta=0,
  \label{eq:local-rayleigh-polynomial}
\end{equation}
where \(b=\xi^*\widehat B_*\xi/(\xi^*\xi)\geq0\) and
\(h=\xi^*\widehat H_*\xi/(\xi^*\xi)>0\) are real. The leading
coefficient exceeds \(\beta\). The values of this polynomial at \(1\) and
\(-1\) are \(\eta\kappa_*h\) and
\(2(1+\beta)+2\eta b-\eta\kappa_*h\), respectively, both positive under
\cref{eq:local-stability-criterion}. Nonreal roots have squared modulus
\(\beta/(1+\eta b)<1\). For real roots, the same root product and the
positive values at \(1\) and \(-1\) exclude roots outside \((-1,1)\).
This proves sufficiency.

Conversely,
\[
  \mathcal P(-1)=2(1+\beta)I_d+2\eta\widehat B_*
                                 -\eta\kappa_*\widehat H_*.
\]
If this matrix is singular, \(-1\) is an eigenvalue. If it has a negative
eigenvalue, \(\mathcal P(-t)\) becomes positive definite for sufficiently
large \(t\), since its leading coefficient is positive definite.
Continuity of its smallest eigenvalue gives a singular matrix
\(\mathcal P(-t)\) at some real \(t>1\), and hence a state eigenvalue \(-t<-1\).
This proves necessity.
\end{proof}

\subsection{Comparison with the appended recurrence}
\label[appendix]{app:appended-comparison}

On a common invariant one-dimensional mode of \(\widehat H_*\) and
\(\widehat B_*\), let \(h>0\) and \(b\geq0\) be their curvatures.
If \(b=\kappa_*h\), the characteristic equation reduces to
\[
  (1+\eta b)z^2-(1+\beta)z+\beta=0.
\]
For \(\beta>0\) and \(\eta b>(1-\beta)^2/(4\beta)\), its roots have modulus
\(\sqrt{\beta/(1+\eta b)}\), which tends to zero as \(\eta b\to\infty\).
For \(\beta=0\), the roots are \(0\) and \(1/(1+\eta b)\).
The frozen appended recurrence has root product \(\beta\), so its spectral
radius is at least \(\sqrt\beta\). On modes with large \(\eta b\) the
momentum update therefore has the smaller spectral radius; this is a
stiff-limit statement, not a uniform rate comparison. Its stability
condition, however, is
\[
  \eta\kappa_*H_*
  \prec2(1+\beta)(M^{-1}+\eta B_*),
\]
which admits at least the interval of the momentum update for the same
pullback. It follows by diagonalizing the positive definite generalized
Hessian pair \((H_*,M^{-1}+\eta B_*)\). The interval enlargement in
\cref{cor:local-step-interval} compares component and aggregate pullbacks
within the momentum update.

\subsection{Local convergence with scalar relaxation}
\label[appendix]{app:local-convergence-proof}

\begin{proof}[Proof of \cref{thm:local-relaxed-convergence}]
Set \(z=(\theta-\theta_*,p)\) and \(Q_*=Q(\theta_*)\). Let
\(T(q,z)\) be the parameter--momentum update, without applying scalar
relaxation. This map is continuously differentiable near \((Q_*,0)\), and
\(T(q,0)=0\) for every nearby \(q\). Its derivative
\(L=D_zT(Q_*,0)\) is Schur stable by
\cref{thm:local-schur-stability}. The convergent series
\[
  P=\sum_{j=0}^{\infty}(L^{\mathsf T})^jL^j
\]
defines a positive definite matrix with
\(L^{\mathsf T}PL=P-I\). Hence \(\norm{L}_P<1\).
Continuity of \(D_zT\) and integration along the segment from \(0\) to
\(z\) give a cylinder
\[
  \norm{z}_P\leq a_z,\qquad |q-Q_*|\leq a_q<Q_*/2
\]
and \(r\in(0,1)\) such that
\begin{equation}
  \norm{T(q,z)}_P\leq r\norm{z}_P.
  \label{eq:local-uniform-contraction}
\end{equation}

The scalar must remain in this cylinder. Initialization and the acceptance
rule give the invariant \(0\leq q^n\leq Q(\theta^n)\). Within the cylinder,
uniformly in \(\rho\in[0,1]\), the increment of
\cref{eq:abstract-momentum-step} and the scalar increment \(d\) of
\cref{eq:common-d} satisfy
\[
  \norm{\Delta}+\norm{p^+}=O(\norm{z}_P),\qquad
  d=O(\norm{z}_P^2),\qquad
  0\leq\mathcal D=O(\norm{z}_P^2).
\]
Write the two branches of \cref{eq:generic-relaxation} as
\(q_{\mathrm{bud}}^2=(q+d)^2+\rho\mathcal D\) and
\(q_{\mathrm{cap}}^2=Q(\theta+\Delta)^2\). Then
\[
  |q_{\mathrm{bud}}^2-q^2|\leq c_4\norm{z}_P^2.
\]
Since \(q\leq Q(\theta)\) and \(\nabla F(\theta_*)=0\),
\[
  q_{\mathrm{cap}}^2-q^2\geq F(\theta+\Delta)-F(\theta)
  \geq-c_5\norm{z}_P^2.
\]
Both branches are bounded below by \(q^2-c_6\norm{z}_P^2\) with
\(c_6=\max\{c_4,c_5\}\), while the budget branch bounds their minimum
above. Thus
\[
  |(q^+)^2-q^2|\leq c_6\norm{z}_P^2.
\]
Dividing by \(q^++q\geq q\geq Q_*/2\) gives
\begin{equation}
  |q^+-q|\leq c_2\norm{z}_P^2.
  \label{eq:local-scalar-drift}
\end{equation}
This estimate covers either selected branch and branch ties.

Because \(\nabla Q(\theta_*)=0\),
\(|q^0-Q_*|\leq c_3\norm{z^0}_P^2\). Choose the initial neighborhood
so that \(\norm{z^0}_P<a_z\) and
\[
  \left(c_3+\frac{c_2}{1-r^2}\right)\norm{z^0}_P^2<a_q.
\]
Induction using \cref{eq:local-uniform-contraction,eq:local-scalar-drift}
then gives
\[
  \norm{z^n}_P\leq r^n\norm{z^0}_P,
  \qquad
  |q^n-Q_*|
  \leq\left(c_3+\frac{c_2}{1-r^2}\right)\norm{z^0}_P^2<a_q.
\]
The iterates cannot leave the cylinder, closing the induction. Scalar
increments are summable, so \(q^n\to q_\infty\geq Q_*/2\). The invariant
\(q^n\leq Q(\theta^n)\) gives \(q_\infty\leq Q_*\), and the drift
bound gives \cref{eq:local-scalar-limit}. Equivalence of norms yields
\cref{eq:local-geometric-convergence}.
\end{proof}

\subsection{Global stationarity estimates}
\label[appendix]{app:global-stationarity-proofs}

With \(L_B=\sup_n\norm{B_\alpha^n}\), the constants of
\cref{cor:best-iterate-stationarity} are
\begin{equation}
\begin{aligned}
  C_{\mathrm{mom}}
  &=\frac{4\eta}{\underline\kappa^2\mu_{\min}}
  \left[
    \left(\eta^{-1}+\mu_{\max}L_B\right)^2
    +\frac{\beta^2\mu_{\max}^2L_B^2}{1-\beta^2}
  \right],\\
  C_{\mathrm{dir}}
  &=\frac{\eta\left(\eta^{-1}+L_B\right)^2}{\underline\kappa^2}.
\end{aligned}
  \label{eq:stationarity-constants-families}
\end{equation}
One sets \(K=C_{\mathrm{mom}}\) under the assumptions of
\cref{thm:conditional-stationarity} and \(K=C_{\mathrm{dir}}\) under
those of \cref{cor:direct-stationarity}.

\begin{proof}[Proof of \cref{thm:conditional-stationarity}]
Sum \cref{eq:relaxed-energy-bound} and use
\cref{eq:strict-relaxation-budget} and \(\Hcal^n\geq0\) to obtain
\(\sum_n\mathcal D^n<\infty\).  The damping and inertial residual terms in
\cref{eq:monotone-mobility-dissipation}, together with the uniform lower
mobility bound, imply
\(
p^n\to0
\)
and
\(
p^{n+1}-\beta p^n\to0
\).
The uniform upper bound gives
\(
\Delta\theta^{n+1}=M^{n+1}p^{n+1}\to0
\).
Using these limits and the boundedness of \(B_\alpha^n\) in
\cref{eq:abstract-momentum-balance} yields
\(
(q^n/Q^n)g^n\to0
\), and the ratio lower bound gives \(g^n\to0\). Since \(F\in C^1\), every
accumulation point is stationary.
\end{proof}

\begin{proof}[Proof of \cref{cor:direct-stationarity}]
The direct update law \cref{eq:base-dissipation} and summability give
\(\Delta\theta^{n+1}\to0\).  The state
equation \cref{eq:base-pbsav-step}, the bounded pullback, and the ratio lower
bound then imply \(g^n\to0\).  The accumulation-point claim follows by
continuity of \(\nabla F\).
\end{proof}

\begin{proof}[Proof of \cref{cor:best-iterate-stationarity}]
For the momentum update, set \(e^n=p^{n+1}-\beta p^n\).  The state
equation, \(p^{n+1}=e^n+\beta p^n\), and the residual and damping terms in
\cref{eq:monotone-mobility-dissipation} give
\begin{align}
  \frac{q^n}{Q^n}g^n
  &=-\frac{e^n}{\eta}
  -B_\alpha^nM^{n+1}(e^n+\beta p^n),\nonumber\\
  \norm{g^n}
  &\leq\frac1{\underline\kappa}
  \left[
    \left(\eta^{-1}+\mu_{\max}L_B\right)\norm{e^n}
    +\beta\mu_{\max}L_B\norm{p^n}
  \right],\nonumber\\
  \mathcal D^n
  &\geq\frac{\mu_{\min}}{2\eta}
  \left(\norm{e^n}^2+(1-\beta^2)\norm{p^n}^2\right).
  \label{eq:gradient-dissipation-bounds}
\end{align}
The inequality \((a+b)^2\leq2a^2+2b^2\) gives
\(
\norm{g^n}^2\leq C_{\mathrm{mom}}\mathcal D^n
\).

For the direct update, \cref{eq:base-pbsav-step} gives
\begin{equation}
  \norm{g^n}
  \leq\frac{\eta^{-1}+L_B}{\underline\kappa}
  \norm{\Delta\theta^{n+1}},
  \qquad
  \mathcal D^n\geq
  \frac{\norm{\Delta\theta^{n+1}}^2}{\eta},
\end{equation}
and therefore
\(
\norm{g^n}^2\leq C_{\mathrm{dir}}\mathcal D^n
\).
Finally, summing \cref{eq:relaxed-energy-bound} and using
\cref{eq:strict-relaxation-budget} gives
\(
(1-\bar\rho)\sum_{n=0}^{N-1}\mathcal D^n\leq\Hcal^0
\).
This proves both estimates in \cref{eq:best-iterate-stationarity}; the
infinite-series claim follows by letting \(N\to\infty\).
\end{proof}

\section{A curvature condition for exact scalar tracking}
\label[appendix]{app:directional-coverage}

Full relaxation preserves exact scalar tracking when the correction covers
the true objective remainder along each step, up to a fixed fraction of the
inertial dissipation. Define
\begin{equation}
  R_F^n=F(\theta^{n+1})-F(\theta^n)
        -(g^n)^{\mathsf T}\Delta\theta^{n+1}
  \label{eq:true-objective-remainder}
\end{equation}
and the inertial part of the dissipation,
\begin{equation}
\begin{aligned}
  \mathcal D_{\mathrm{in}}^n
  ={}&\frac{\norm{p^n}_{M^n-M^{n+1}}^2}{2\eta}
  +\frac{\norm{p^{n+1}-\beta p^n}_{M^{n+1}}^2}{2\eta}\\
  &+\frac{1-\beta^2}{2\eta}\norm{p^n}_{M^{n+1}}^2.
\end{aligned}
\label{eq:inertial-dissipation}
\end{equation}
The true objective with kinetic energy is
\begin{equation}
  \E^n=F(\theta^n)+C+\frac{\norm{p^n}_{M^n}^2}{2\eta}.
  \label{eq:true-inertial-energy}
\end{equation}

\begin{proposition}
\label{prop:true-energy-identity}
At a consistent state \(q^n=Q^n\), the momentum update satisfies
\begin{equation}
  \E^n-\E^{n+1}
  =\mathcal D_{\mathrm{in}}^n
   +\norm{\Delta\theta^{n+1}}_{B_\alpha^n}^2-R_F^n.
  \label{eq:true-energy-identity}
\end{equation}
\end{proposition}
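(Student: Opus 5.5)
The identity is a regrouping of the energy balance already derived in the proof of \cref{thm:monotone-mobility-dissipation}. The plan is to stop that derivation before the scalar identity \cref{eq:common-scalar-identity} is used, and to replace the scalar energy by the true objective.

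First, take the inner product of the balance \cref{eq:abstract-momentum-balance} with \(\Delta\theta^{n+1}=M^{n+1}p^{n+1}\). Apply the weighted polarization \cref{eq:weighted-polarization} and the insert-and-subtract step exactly as in \cref{app:energy-law-proof}. This yields
\begin{equation*}
\frac{\norm{p^{n+1}}_{M^{n+1}}^2-\norm{p^n}_{M^n}^2}{2\eta}
+\mathcal D_{\mathrm{in}}^n
+\norm{\Delta\theta^{n+1}}_{B_\alpha^n}^2
+\frac{q^n}{Q^n}(g^n)^{\mathsf T}\Delta\theta^{n+1}=0 .
\end{equation*}
Here \(\mathcal D_{\mathrm{in}}^n\) is exactly the sum \cref{eq:inertial-dissipation} of the mobility-change, inertial-residual and damping terms. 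The pullback term is kept whole rather than split through \cref{eq:common-pullback-identity}. This display is an algebraic identity valid for every state; no consistency assumption has been used yet.

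Second, impose the consistent state \(q^n=Q^n\), so the linear term becomes \((g^n)^{\mathsf T}\Delta\theta^{n+1}\). By the definition \cref{eq:true-objective-remainder},
\[
(g^n)^{\mathsf T}\Delta\theta^{n+1}=F(\theta^{n+1})-F(\theta^n)-R_F^n .
\]
Substitute this and add \(C\) to both objective values. The potential and kinetic differences then combine into \(\E^{n+1}-\E^n\) from \cref{eq:true-inertial-energy}. Rearranging gives \cref{eq:true-energy-identity}.

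The argument is bookkeeping, and no step is a genuine obstacle. Two points need checking:
\begin{itemize}[leftmargin=*]
\item The kinetic energy at step \(n+1\) must be measured in \(M^{n+1}\). This matches \(\E^{n+1}\), since \(p^{n+1}\) is paired with \(M^{n+1}\) in \cref{eq:covector-map}.
\item The identity should hold with the unrelaxed provisional scalar, because it involves only \(\theta\) and \(p\). The scalar \(q\) enters only through the factor \(q^n/Q^n=1\).
\end{itemize}
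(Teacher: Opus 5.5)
Your proposal is correct and follows the paper's proof essentially line for line. The paper also takes the inner product of the balance with \(\Delta\theta^{n+1}=M^{n+1}p^{n+1}\) and applies weighted polarization. With \(q^n/Q^n=1\), this gives the kinetic-energy change, \(\mathcal D_{\mathrm{in}}^n\), the unsplit pullback term \(\|\Delta\theta^{n+1}\|_{B_\alpha^n}^2\), and the linear gradient term, after which the paper subtracts the true objective increment using the definition of \(R_F^n\).
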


\begin{proof}
Take the inner product of \cref{eq:abstract-momentum-balance} with
\(\Delta\theta^{n+1}=M^{n+1}p^{n+1}\) and use weighted polarization.
With \(q^n/Q^n=1\), this gives
\[
  \frac{\norm{p^n}_{M^n}^2-\norm{p^{n+1}}_{M^{n+1}}^2}{2\eta}
  =\mathcal D_{\mathrm{in}}^n+(g^n)^{\mathsf T}\Delta\theta^{n+1}
   +\norm{\Delta\theta^{n+1}}_{B_\alpha^n}^2.
\]
Subtract the true objective increment to obtain
\cref{eq:true-energy-identity}.
\end{proof}

\begin{theorem}
\label{thm:coverage-full-relaxation}
Fix \(\eta>0\), \(0\leq\beta<1\), and \(0\leq\alpha\leq1\), and
let \(0\prec M^{n+1}\preceq M^n\). Initialize \(q^0=Q(\theta^0)\)
and use full relaxation \(\rho^n\equiv1\). Suppose that every step is
well defined and satisfies
\begin{equation}
  R_F^n\leq\norm{\Delta\theta^{n+1}}_{B_\alpha^n}^2
          +\gamma^n\mathcal D_{\mathrm{in}}^n,
  \qquad 0\leq\gamma^n\leq\bar\gamma<1.
  \label{eq:directional-coverage-condition}
\end{equation}
Then \(q^n=Q(\theta^n)\) for every \(n\), and
\begin{equation}
  \E^n-\E^{n+1}\geq(1-\bar\gamma)\mathcal D_{\mathrm{in}}^n\geq0.
  \label{eq:coverage-energy-law}
\end{equation}
If also \(\mu_{\min} I_d\preceq M^n\preceq\mu_{\max} I_d\) for
constants \(0<\mu_{\min}\leq\mu_{\max}<\infty\), and
\(\sup_n\norm{B_\alpha^n}<\infty\), then
\(p^n\to0\), \(\Delta\theta^{n+1}\to0\), and
\(\nabla F(\theta^n)\to0\).
\end{theorem}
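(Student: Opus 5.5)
The proof goes by induction on \(n\), with the hypothesis \(q^n=Q(\theta^n)\). The base case is the initialization \(q^0=Q(\theta^0)\). For the step, assume \(q^n=Q^n\). Then \cref{prop:true-energy-identity} applies. Substituting the coverage condition \cref{eq:directional-coverage-condition} into \cref{eq:true-energy-identity} gives
\[
  \E^n-\E^{n+1}\geq\mathcal D_{\mathrm{in}}^n-\gamma^n\mathcal D_{\mathrm{in}}^n\geq(1-\bar\gamma)\mathcal D_{\mathrm{in}}^n .
\]
This is \cref{eq:coverage-energy-law} at step \(n\), provided the accepted scalar is indeed \(Q(\theta^{n+1})\).

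The real content is showing that the cap branch of \cref{eq:generic-relaxation} is selected when \(\rho^n=1\), i.e. that \(Q(\theta^{n+1})^2\leq(\bar q^{n+1})^2+\mathcal D^n\). At a consistent state, \(\Hcal_{\mathrm{mom}}^n=\E^n\). By \cref{thm:monotone-mobility-dissipation},
\[
  (\bar q^{n+1})^2+\mathcal D^n=\Hcal_{\mathrm{mom}}^n-\norm{p^{n+1}}_{M^{n+1}}^2/(2\eta).
\]
Hence the budget-branch value is \(\E^n-\norm{p^{n+1}}_{M^{n+1}}^2/(2\eta)\). The cap value is \(Q(\theta^{n+1})^2=\E^{n+1}-\norm{p^{n+1}}_{M^{n+1}}^2/(2\eta)\), so the required inequality is exactly \(\E^{n+1}\leq\E^n\). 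This follows from the bound just obtained, which needs only the induction hypothesis and not the accepted scalar. So the minimum in \cref{eq:generic-relaxation} equals \(Q(\theta^{n+1})^2\), and \(q^{n+1}=Q(\theta^{n+1})\), taking the nonnegative root. This closes the induction.

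The main obstacle is this identification: one has to check that the five-term \(\mathcal D^n\) of \cref{eq:monotone-mobility-dissipation} makes \((\bar q^{n+1})^2+\mathcal D^n\) collapse to \(\E^n\) minus the new kinetic energy. It holds because \(\mathcal D^n\) is defined as the exact difference \(\Hcal^n-\bar\Hcal^{n+1}\), as in \cref{prop:generic-relaxation}; no bound on \(d^{n+1}\) or \(S_m^n\) is needed.

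For the stationarity claims, sum \cref{eq:coverage-energy-law}. Since \(\E^n\geq0\) (because \(F+C>0\)),
\[
  (1-\bar\gamma)\sum_n\mathcal D_{\mathrm{in}}^n\leq\E^0 .
\]
The second and third terms of \(\mathcal D_{\mathrm{in}}^n\) together with \(M^{n+1}\succeq\mu_{\min}I_d\) give \(p^n\to0\) and \(p^{n+1}-\beta p^n\to0\). Then \(M^n\preceq\mu_{\max}I_d\) gives \(\Delta\theta^{n+1}=M^{n+1}p^{n+1}\to0\). Finally, \cref{eq:abstract-momentum-balance} with \(q^n/Q^n=1\) and bounded \(B_\alpha^n\) yields \(g^n\to0\), exactly as in the proof of \cref{thm:conditional-stationarity}. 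No lower bound on \(q^n/Q^n\) is needed, since the ratio equals one.
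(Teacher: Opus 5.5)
Your proposal is correct and follows essentially the same route as the paper: at a consistent state you identify the gap \((\bar q^{n+1})^2+\mathcal D^n-Q(\theta^{n+1})^2\) with \(\E^n-\E^{n+1}=\mathcal D_{\mathrm{in}}^n+\norm{\Delta\theta^{n+1}}_{B_\alpha^n}^2-R_F^n\geq0\), so the cap branch is selected and the induction closes. Your stationarity argument also matches the paper's: you sum the energy law, use the damping and residual terms with the mobility bounds, and apply the momentum balance with \(q^n/Q^n=1\); the only cosmetic difference is that you derive the energy law before the branch selection, noting that \(\E^{n+1}\) does not depend on the accepted scalar.
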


\begin{proof}
Suppose \(q^n=Q^n\). Combining the modified and true energy identities
gives
\begin{equation}
  (\bar q^{n+1})^2+\mathcal D^n-Q(\theta^{n+1})^2
  =\mathcal D_{\mathrm{in}}^n
   +\norm{\Delta\theta^{n+1}}_{B_\alpha^n}^2-R_F^n\geq0.
  \label{eq:coverage-scalar-budget}
\end{equation}
Thus the minimum in \cref{eq:generic-relaxation} selects
\(Q(\theta^{n+1})^2\). Induction proves exact tracking, and
\cref{eq:true-energy-identity} yields \cref{eq:coverage-energy-law}.

Since \(\E^n\geq0\), summing the energy law gives
\(\sum_n\mathcal D_{\mathrm{in}}^n<\infty\). The damping and residual
terms, with the uniform lower mobility bound and \(\beta<1\), imply
\(p^n\to0\) and \(p^{n+1}-\beta p^n\to0\). The upper bound gives
\(\Delta\theta^{n+1}=M^{n+1}p^{n+1}\to0\). Finally,
\cref{eq:abstract-momentum-balance}, exact tracking, and the bounded
pullback imply \(g^n\to0\).
\end{proof}

For a quadratic objective, the condition with \(\gamma^n=0\) becomes
\[
  \frac12(\Delta\theta^{n+1})^{\mathsf T}H\Delta\theta^{n+1}
  \leq\norm{\Delta\theta^{n+1}}_{B_\alpha^n}^2.
\]
The correction must cover half the Hessian quadratic form along the step, not
dominate the Hessian in every direction. For nonlinear least squares, write
\(\nabla^2F(\theta^n)=G^n+A^n\), where \(G^n\) is the residual
Gauss--Newton matrix. If \(\norm{A^n}\leq\varepsilon^n\) and the Hessian
is Lipschitz along the step segment with constant \(L^n\), Taylor's formula
gives
\[
  R_F^n\leq\frac12\norm{\Delta\theta^{n+1}}_{G^n}^2
  +\frac{\varepsilon^n}{2}\norm{\Delta\theta^{n+1}}^2
  +\frac{L^n}{6}\norm{\Delta\theta^{n+1}}^3.
\]
Consequently, a sufficient condition is
\begin{equation}
  (\Delta\theta^{n+1})^{\mathsf T}
  \left(B_\alpha^n-\frac12G^n\right)\Delta\theta^{n+1}
  \geq\frac{\varepsilon^n}{2}\norm{\Delta\theta^{n+1}}^2
      +\frac{L^n}{6}\norm{\Delta\theta^{n+1}}^3.
  \label{eq:least-squares-coverage-margin}
\end{equation}
The projection identity \cref{eq:missed-block-curvature} describes the
curvature omitted by the least-squares components. Neither that identity nor
modified energy dissipation alone imposes
\cref{eq:directional-coverage-condition}, which can instead be checked from
the objective increment without forming a Hessian.

\bibliographystyle{elsarticle-num}
{\footnotesize\bibliography{references}}

@article{shen2018sav,
  author  = {Shen, Jie and Xu, Jie and Yang, Jiang},
  title   = {The scalar auxiliary variable ({SAV}) approach for gradient flows},
  journal = {Journal of Computational Physics},
  volume  = {353},
  pages   = {407--416},
  year    = {2018},
  doi     = {10.1016/j.jcp.2017.10.021}
}

@article{shen2019savreview,
  author  = {Shen, Jie and Xu, Jie and Yang, Jiang},
  title   = {A new class of efficient and robust energy stable schemes for gradient flows},
  journal = {SIAM Review},
  volume  = {61},
  number  = {3},
  pages   = {474--506},
  year    = {2019},
  doi     = {10.1137/17M1150153}
}

@article{shen2018convergence,
  author  = {Shen, Jie and Xu, Jie},
  title   = {Convergence and error analysis for the scalar auxiliary variable ({SAV}) schemes to gradient flows},
  journal = {SIAM Journal on Numerical Analysis},
  volume  = {56},
  number  = {5},
  pages   = {2895--2912},
  year    = {2018},
  doi     = {10.1137/17M1159968}
}

@article{cheng2018msav,
  author  = {Cheng, Qing and Shen, Jie},
  title   = {Multiple scalar auxiliary variable ({MSAV}) approach and its application to the phase-field vesicle membrane model},
  journal = {SIAM Journal on Scientific Computing},
  volume  = {40},
  number  = {6},
  pages   = {A3982--A4006},
  year    = {2018},
  doi     = {10.1137/18M1166961}
}

@article{huang2020newsav,
  author  = {Huang, Fukeng and Shen, Jie and Yang, Zhiguo},
  title   = {A highly efficient and accurate new scalar auxiliary variable approach for gradient flows},
  journal = {SIAM Journal on Scientific Computing},
  volume  = {42},
  number  = {4},
  pages   = {A2514--A2536},
  year    = {2020},
  doi     = {10.1137/19M1298627}
}

@article{jiang2022rsav,
  author  = {Jiang, Maosheng and Zhang, Zengyan and Zhao, Jia},
  title   = {Improving the accuracy and consistency of the scalar auxiliary variable ({SAV}) method with relaxation},
  journal = {Journal of Computational Physics},
  volume  = {456},
  pages   = {110954},
  year    = {2022},
  doi     = {10.1016/j.jcp.2022.110954}
}

@article{zhang2022gsavrelax,
  author  = {Zhang, Yanrong and Shen, Jie},
  title   = {A generalized {SAV} approach with relaxation for dissipative systems},
  journal = {Journal of Computational Physics},
  volume  = {464},
  pages   = {111311},
  year    = {2022},
  doi     = {10.1016/j.jcp.2022.111311}
}

@article{huang2024optimalrsav,
  author  = {Huang, Qiong-Ao and Yuan, Cheng and Zhang, Gengen and Zhang, Lian},
  title   = {A computationally optimal relaxed scalar auxiliary variable approach for solving gradient flow systems},
  journal = {Computers \& Mathematics with Applications},
  volume  = {156},
  pages   = {64--73},
  year    = {2024},
  doi     = {10.1016/j.camwa.2023.12.017}
}

@article{liu2022,
  author  = {Liu, Hailiang and Tian, Xuping},
  title   = {An adaptive gradient method with energy and momentum},
  journal = {Annals of Applied Mathematics},
  volume  = {38},
  number  = {2},
  pages   = {183--222},
  year    = {2022},
  doi     = {10.4208/aam.OA-2021-0095}
}

@article{liu2023savopt,
  author  = {Liu, Xinyu and Shen, Jie and Zhang, Xiangxiong},
  title   = {An Efficient and Robust Scalar Auxialiary Variable Based Algorithm for Discrete Gradient Systems Arising from Optimizations},
  journal = {SIAM Journal on Scientific Computing},
  volume  = {45},
  number  = {5},
  pages   = {A2304--A2324},
  year    = {2023},
  doi     = {10.1137/23M1545744}
}

@article{zhang2025rvsavopt,
  author  = {Zhang, Shiheng and Zhang, Jiahao and Shen, Jie and Lin, Guang},
  title   = {A relaxed vector auxiliary variable algorithm for unconstrained optimization problems},
  journal = {SIAM Journal on Scientific Computing},
  volume  = {47},
  number  = {1},
  pages   = {C126--C150},
  year    = {2025},
  doi     = {10.1137/23M1611087}
}

@misc{zhang2026pbsav,
  author        = {Zhang, Shiheng and Shen, Jie},
  title         = {Scalar-tracking {SAV} schemes with pullback corrections for gradient flows},
  year          = {2026},
  eprint        = {2606.18551},
  archivePrefix = {arXiv},
  primaryClass  = {math.NA},
  doi           = {10.48550/arXiv.2606.18551}
}

@article{polyak1964,
  author  = {Polyak, Boris T.},
  title   = {Some methods of speeding up the convergence of iteration methods},
  journal = {USSR Computational Mathematics and Mathematical Physics},
  volume  = {4},
  number  = {5},
  pages   = {1--17},
  year    = {1964},
  doi     = {10.1016/0041-5553(64)90137-5}
}

@misc{pooladzandi2022improving,
  author        = {Pooladzandi, Omead and Zhou, Yiming},
  title         = {Improving {Levenberg--Marquardt} algorithm for neural networks},
  year          = {2022},
  eprint        = {2212.08769},
  archivePrefix = {arXiv},
  doi           = {10.48550/arXiv.2212.08769}
}

@inproceedings{kingma2015adam,
  author    = {Kingma, Diederik P. and Ba, Jimmy},
  title     = {Adam: A method for stochastic optimization},
  booktitle = {International Conference on Learning Representations},
  year      = {2015},
  eprint    = {1412.6980},
  archivePrefix = {arXiv}
}

@inproceedings{reddi2018amsgrad,
  author    = {Reddi, Sashank J. and Kale, Satyen and Kumar, Sanjiv},
  title     = {On the convergence of {Adam} and beyond},
  booktitle = {International Conference on Learning Representations},
  year      = {2018},
  eprint    = {1904.09237},
  archivePrefix = {arXiv}
}

@inproceedings{loshchilov2019adamw,
  author    = {Loshchilov, Ilya and Hutter, Frank},
  title     = {Decoupled weight decay regularization},
  booktitle = {International Conference on Learning Representations},
  year      = {2019},
  eprint    = {1711.05101},
  archivePrefix = {arXiv}
}

@article{raissi2019pinn,
  author  = {Raissi, Maziar and Perdikaris, Paris and Karniadakis, George Em},
  title   = {Physics-informed neural networks: A deep learning framework for solving forward and inverse problems involving nonlinear partial differential equations},
  journal = {Journal of Computational Physics},
  volume  = {378},
  pages   = {686--707},
  year    = {2019},
  doi     = {10.1016/j.jcp.2018.10.045}
}

@article{karniadakis2021physics,
  author  = {Karniadakis, George Em and Kevrekidis, Ioannis G. and Lu, Lu and Perdikaris, Paris and Wang, Sifan and Yang, Liu},
  title   = {Physics-informed machine learning},
  journal = {Nature Reviews Physics},
  volume  = {3},
  number  = {6},
  pages   = {422--440},
  year    = {2021},
  doi     = {10.1038/s42254-021-00314-5}
}

@article{lu2021deeponet,
  author  = {Lu, Lu and Jin, Pengzhan and Pang, Guofei and Zhang, Zhongqiang and Karniadakis, George Em},
  title   = {Learning nonlinear operators via {DeepONet} based on the universal approximation theorem of operators},
  journal = {Nature Machine Intelligence},
  volume  = {3},
  number  = {3},
  pages   = {218--229},
  year    = {2021},
  doi     = {10.1038/s42256-021-00302-5}
}

@article{sherman1950,
  author  = {Sherman, Jack and Morrison, Winifred J.},
  title   = {Adjustment of an inverse matrix corresponding to a change in one element of a given matrix},
  journal = {The Annals of Mathematical Statistics},
  volume  = {21},
  number  = {1},
  pages   = {124--127},
  year    = {1950},
  doi     = {10.1214/aoms/1177729893}
}

@techreport{woodbury1950,
  author      = {Woodbury, Max A.},
  title       = {Inverting modified matrices},
  institution = {Statistical Research Group, Princeton University},
  number      = {Memorandum Report 42},
  year        = {1950}
}

@book{bjorck1996least,
  author    = {Bj{\"o}rck, {\AA}ke},
  title     = {Numerical Methods for Least Squares Problems},
  publisher = {Society for Industrial and Applied Mathematics},
  address   = {Philadelphia},
  year      = {1996},
  doi       = {10.1137/1.9781611971484}
}

@article{armijo1966,
  author  = {Armijo, Larry},
  title   = {Minimization of functions having {Lipschitz} continuous first partial derivatives},
  journal = {Pacific Journal of Mathematics},
  volume  = {16},
  number  = {1},
  pages   = {1--3},
  year    = {1966},
  doi     = {10.2140/pjm.1966.16.1}
}

@article{sobol1967,
  author  = {Sobol', Ilya M.},
  title   = {On the distribution of points in a cube and the approximate evaluation of integrals},
  journal = {USSR Computational Mathematics and Mathematical Physics},
  volume  = {7},
  number  = {4},
  pages   = {86--112},
  year    = {1967},
  doi     = {10.1016/0041-5553(67)90144-9}
}

@article{owen1997,
  author  = {Owen, Art B.},
  title   = {Scrambled net variance for integrals of smooth functions},
  journal = {The Annals of Statistics},
  volume  = {25},
  number  = {4},
  pages   = {1541--1562},
  year    = {1997},
  doi     = {10.1214/aos/1031594731}
}

@article{wang2021gradientpathologies,
  author  = {Wang, Sifan and Teng, Yujun and Perdikaris, Paris},
  title   = {Understanding and mitigating gradient flow pathologies in physics-informed neural networks},
  journal = {SIAM Journal on Scientific Computing},
  volume  = {43},
  number  = {5},
  pages   = {A3055--A3081},
  year    = {2021},
  doi     = {10.1137/20M1318043}
}

@article{wang2022pinnntk,
  author  = {Wang, Sifan and Yu, Xinling and Perdikaris, Paris},
  title   = {When and why {PINNs} fail to train: A neural tangent kernel perspective},
  journal = {Journal of Computational Physics},
  volume  = {449},
  pages   = {110768},
  year    = {2022},
  doi     = {10.1016/j.jcp.2021.110768}
}

@article{mcclenny2023selfadaptive,
  author  = {McClenny, Levi D. and Braga-Neto, Ulisses M.},
  title   = {Self-adaptive physics-informed neural networks},
  journal = {Journal of Computational Physics},
  volume  = {474},
  pages   = {111722},
  year    = {2023},
  doi     = {10.1016/j.jcp.2022.111722}
}

@article{ZHANGop1,
title = {A self-adaptive energy-based learning rate for stochastic gradient descent via Vector Auxiliary Variable method},
journal = {Engineering Applications of Artificial Intelligence},
volume = {160},
pages = {111731},
year = {2025},
issn = {0952-1976},
doi = {https://doi.org/10.1016/j.engappai.2025.111731},
author = {Jiahao Zhang and Christian Moya and Guang Lin}
}

@article{ZHANGop2,
title = {Muon with spectral guidance: Efficient optimization for scientific machine learning},
journal = {Journal of Computational Physics},
volume = {565},
pages = {115231},
year = {2026},
issn = {0021-9991},
doi = {https://doi.org/10.1016/j.jcp.2026.115231},
author = {Binghang Lu and Jiahao Zhang and Guang Lin}
}

\end{document}